\newtheorem{theorem}{Theorem}
\newtheorem{proposition}{Proposition}
\newtheorem{lemma}{Lemma}
\newtheorem{corollary}{Corollary}
\newtheorem{observation}{Observation}
\newtheorem{remark}{Remark}
\def\bff{\mathbf{f}}
\def\bfm{\mathbf{m}}
\def\bfv{\mathbf{v}}
\def\bfw{\mathbf{w}}
\def\bfx{\mathbf{x}}
\def\bfz{\mathbf{z}}
\def\hatx{\hat{\bfx}}
\def\barx{\bar{\bfx}}
\def\bftheta{\boldsymbol{\theta}}
\def\bfeps{\boldsymbol{\epsilon}}
\def\bfphi{\boldsymbol{\phi}}
\def\bfI{\mathbf{I}}
\def\nablax{\nabla_{\bfx}}
\def\nablahatx{\nabla_{\hatx}}
\def\rmd{\mathrm{d}}
\def\rmdx{\mathrm{d}\bfx}
\def\bbE{\mathbb{E}}
\def\bbR{\mathbb{R}}
\def\bbV{\mathbb{V}}
\def\bbP{\mathbb{P}}
\def\calT{\mathcal{T}}
\def\calL{\mathcal{L}}
\def\calD{\mathcal{D}}
\def\calN{\mathcal{N}}
\def\eqref#1{Eq.~(\ref{#1})}
\def\appref#1{Appendix~\ref{#1}}
\def\DAE{\mathrm{DAE}}
\def\DSM{\mathrm{DSM}}
\def\ESM{\mathrm{ESM}}
\title{A Geometric Perspective on Diffusion Models}
\author{%
	Defang Chen\textsuperscript{\rm 1}\quad 
	Zhenyu Zhou\textsuperscript{\rm 1}\quad
	Jian-Ping Mei\textsuperscript{\rm 2}\quad
	Chunhua Shen\textsuperscript{\rm 1}\\
	\textbf{Chun Chen}\textsuperscript{\rm 1}\quad
	\textbf{Can Wang}\textsuperscript{\rm 1}
	\\
	\textsuperscript{\rm 1}Zhejiang University \quad  
	\textsuperscript{\rm 2}Zhejiang University of Technology\\
%	Defang Chen\\
%	\;ZJU\\
%	\And
%	\;\;\;\;\;\;Zhenyu Zhou\\
%	\;\;\;\;\;\;ZJU\\
%	\And
%	\;\;Jian-Ping Mei\\
%	\;\;ZJUT\\
%	\AND
%	\;\;\;\;Chunhua Shen\\
%	\;\;\;\;ZJU\\
%	\And
%	Chun Chen\;\;\;\\
%	ZJU\;\;\;\\
%	\And
%	Can Wang\;\;\;\;\;\\
%	ZJU\;\;\;\;\\
}
\begin{document}

\maketitle

\begin{abstract}
	Recent years have witnessed significant progress in developing effective training and fast sampling techniques for diffusion models. A remarkable advancement is the use of stochastic differential equations (SDEs) and their marginal-preserving ordinary differential equations (ODEs) to describe data perturbation and generative modeling in a unified framework. 
	In this paper, we carefully inspect the ODE-based sampling of a popular variance-exploding SDE and reveal several intriguing structures of its sampling dynamics. We discover that the data distribution and the noise distribution are smoothly connected with a quasi-linear \textit{sampling trajectory} and another implicit \textit{denoising trajectory} that even converges faster. Meanwhile, the denoising trajectory governs the curvature of the corresponding sampling trajectory and its finite differences yield various second-order samplers used in practice.
	Furthermore, we establish a theoretical relationship between the optimal ODE-based sampling and the classic mean-shift (mode-seeking) algorithm, with which we can characterize the asymptotic behavior of diffusion models and identify the empirical score deviation. Code is available at \url{https://github.com/zju-pi/diff-sampler}.
\end{abstract}
%These new insights enable us to improve previous numerical solver-based samplers and better understand the recently proposed consistency models.
%In this paper, we reveal several intriguing geometric structures of diffusion models and provide an intuitive interpretation to their sampling dynamics. Through carefully inspecting a popular variance-exploding SDE and its marginal-preserving ordinary differential equation (ODE) for sampling, we discover that the data distribution and the noise distribution are smoothly connected with a quasi-linear \textit{sampling trajectory}, and another implicit \textit{denoising trajectory} that even converges faster. 

\section{Introduction}
Diffusion models, or score-based generative models \cite{sohl2015deep,song2019ncsn,ho2020ddpm,song2021sde} have attracted growing attention and seen impressive success in various domains, including image \cite{dhariwal2021diffusion,rombach2022ldm}, video \cite{ho2022video,blattmann2023videoLDM}, audio \cite{kong2021diffwave,chen2021wavegrad}, and especially text-to-image synthesis \cite{saharia2022photorealistic,ruiz2023dreambooth}. %text-controllable generation
Such models are essentially governed by a certain kind of stochastic differential equations (SDEs) that smooth data into noise in a forward process and then generate data from noise in a backward process \cite{song2021sde}. 

Generally, the probability density in the forward SDE evolves through a spectrum of Gaussian \textit{kernel density estimates} of the original data with varying bandwidths. As such, one can couple theoretically infinite data-noise pairs and train a noise-dependent neural network (\textit{a.k.a.}\ diffusion model) to minimize the mean square error for data reconstruction. 
%specifically designed scaling factor, is formulated as
%each forward process generate a training pair, online
Once such a denoising model with sufficient capacity is well optimized, it will faithfully capture the score (gradient of the log-density \textit{w.r.t.}\ the input) of the data density smoothed with various levels of noise \cite{raphan2011least,bengio2013generalized,karras2022edm}. The generative ability is then emerged by simulating the (score-based) backward SDE with any numerical solvers. Alternatively, we can simulate the corresponding ordinary differential equation (ODE) that preserves the same marginal distributions as the SDE \cite{song2021sde,song2021ddim,lu2022dpm,zhang2023deis,zhou2024fast}. The deterministic ODE-based sampling gets rid of the stochasticity, apart from the randomness of drawing initial samples, and thus makes the whole generative process more comprehensible and controllable \cite{song2021ddim,karras2022edm}. However, more details about how diffusion models behave under this dense mathematical framework are still currently unknown.

In this paper, we provide a geometric perspective to facilitate an intuitive understanding of diffusion models, especially their sampling dynamics. The state-of-the-art variance-exploding SDE \cite{karras2022edm} is taken as the main example to reveal the underlying intriguing structures. Our empirical observations (Section~\ref{sec:visual}) are summarized and illustrated in Figure~\ref{fig:model}. Given an initial sample from the noise distribution, the difference between its denoising output and its current position forms the score direction for simulating the sampling trajectory. This explicit trajectory is almost straight such that the ODE simulation can be greatly accelerated at a modest cost of truncation error. 
%with a large step size 
Besides, the denoising output itself forms another implicit trajectory that quickly appears decent visual quality, which offers a simple way to accelerate existing samplers (Section~\ref{subsec:trajectory}). Intriguingly, the derivative of denoising trajectory shares the same direction as the negative second-order derivative of sampling trajectory, and in principle, all previously developed second-order samplers can be derived from the specific finite differences of the denoising trajectory (Section~\ref{sec:denoising}).
Overall, these two trajectories fully depict the ODE-based sampling process in diffusion models. 

Furthermore, we establish a theoretical relationship between the optimal ODE-based sampling and (annealed) mean shift \cite{comaniciu2002mean,shen2005annealedms}, which implies that each single Euler step in sampling actually moves the given sample to a convex combination of annealed mean shift and its current position. Meanwhile, the sample likelihood increases from the current position to the vicinity of the mean-shift position. This property guarantees that under a mild condition, the likelihood of each sample in the denoising trajectory consistently surpasses its counterpart from the sampling trajectory (Section~\ref{sec:meanshift}), and thus the visual quality of the former generally exceeds that of the latter. 
The theoretical connection also helps to identify different behaviors of the empirical score, and based on which, we argue that a slight score deviation from the optimum ensures the generative ability of diffusion models while greatly alleviating the mode collapse issue (Section~\ref{sec:diagnosis}). 
Finally, the geometric perspective enables us to better understand distillation-based consistency models \cite{song2023consistency} (Appendix~\ref{sec:app_sampling}) and latent interpolations in practice (Appendix~\ref{sec:app_latent}).

%Finally, we demonstrate the potential of our geometric perspective in accelerating previous ODE-based samplers and offering a novel interpretation for the state-of-the-art distillation-based fast sampling techniques (Section~\ref{sec:fast_sampling}).

%Additionally, we provide several applications to demonstrate the potential of our geometric perspective to reform existing practices of diffusion models, such as speeding up previous samplers, re-examining latent interpolation and re-interpreting distillation-based fast sampling techniques.

\begin{figure}[t]
	\centering
	\includegraphics[width=\textwidth]{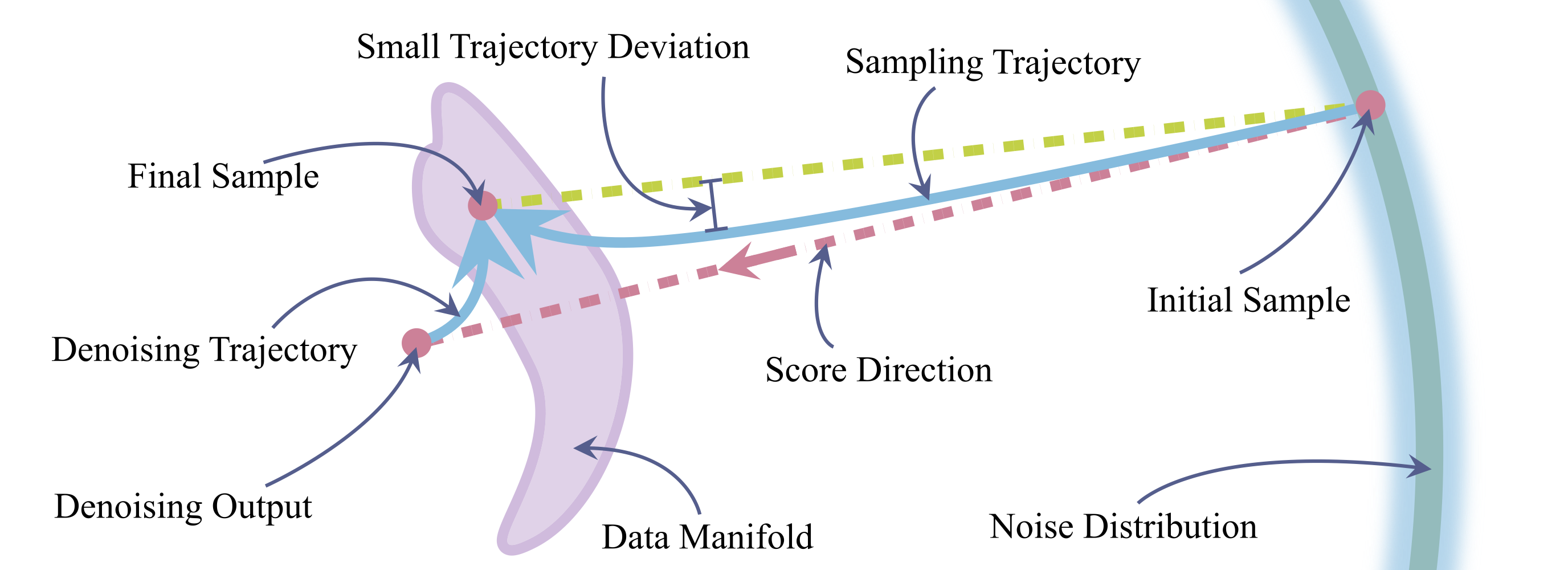}
    \caption{The geometric picture of ODE-based sampling in diffusion models. An initial sample (from the noise distribution) starts from a big sphere and converges to its final sample (in the data manifold) along a smooth, quasi-linear sampling trajectory. Meanwhile, its denoising output lays in an implicit, smooth denoising trajectory starting from the approximate dataset mean. The denoising output is relatively close to the final sample and converges much faster in terms of visual quality.
    }
    \label{fig:model}
\end{figure}

\section{Preliminaries}
\label{sec:background}
We begin with a brief overview of the basic concepts in developing score-based generative models. 
%Some related works are also discussed for completeness and better understanding.
%To enable generative modeling, we are required to bridge the data distribution $p_{d}(\bfx)$ with a non-informative tractable distribution $p_{n}(\bfx)$. Nowadays, a prevailing and promising approach is score-based generative modeling \citep{song2021sde,karras2022edm,saharia2022photorealistic,poole2023dreamfusion}, which can be formulated into a concise framework from the lens of \textit{stochastic differential equations} (SDEs). With this powerful tool, the data perturbation is modeled as a continuous stochastic process $\{\bfx_t\}_{t=0}^T$: 
With the tool of \textit{stochastic differential equations} (SDEs), the data perturbation in diffusion models is modeled as a continuous stochastic process $\{\bfx_t\}_{t=0}^T$ \citep{song2021sde,karras2022edm}: 
\begin{equation}
	\label{eq:forward_sde}
	\rmdx = \bff(\bfx, t)\rmd t + g(t) \rmd \bfw_t, \qquad \bff(\cdot, t): \bbR^d \rightarrow \bbR^d, \quad g(\cdot): \bbR \rightarrow \bbR,
\end{equation}
%where $p_{d}(\bfx)$ herein refers to the empirical data distribution. 
where $\bfw_t$ is the standard Wiener process; $\bff(\cdot, t)$ and $g(t)$ are drift and diffusion coefficients, respectively \citep{oksendal2013stochastic}. We denote the distribution of $\bfx_t$ as $p_t(\bfx)$ and such an It\^{o} SDE smoothly transforms the empirical data distribution $p_0(\bfx)=p_d(\bfx)$ to the (approximate) noise distribution $p_T(\bfx)\approx p_n(\bfx)$ in a forward manner. By properly setting the coefficients, some established models referred to as variance-preserving (VP) and variance-exploding (VE) SDEs can be recovered \citep{song2019ncsn,ho2020ddpm,song2021sde}. 
% the continuous version of,  
The reversal of \eqref{eq:forward_sde} is another SDE that allows to synthesize data from noise in a backward manner \citep{feller1949theory,anderson1982reverse}. Remarkably, there exists a \textit{probability flow ordinary differential equation} (PF-ODE) sharing the same marginal distribution $\{p_t(\bfx)\}_{t=0}^T$ as the reverse SDE at each time step of the diffusion process:
\begin{equation}
	\label{eq:pf_ode}
	\rmdx = \left[\bff(\bfx, t) - \frac{1}{2} g(t)^2 \nablax \log p_t(\bfx)\right]\rmd t.
\end{equation}
The deterministic nature of ODE offers several benefits including efficient sampling, unique encoding, and meaningful latent manipulations \citep{song2021sde,song2021ddim}. We thus choose \eqref{eq:pf_ode} to analyze model behaviors throughout this paper.
Simulating the above ODE requests having the \textit{score function} $\nablax \log p_t(\bfx)$ in hand \citep{hyvarinen2005estimation,lyu2009interpretation}, which is typically estimated with the \textit{denoising score matching} (DSM) criterion \citep{vincent2011dsm,song2019ncsn}. 
From the perspective of \textit{empirical Bayes} \citep{robbins1956eb,efron2011tweedie,saremi2019neuralEB}, there exists a profound connection between DSM and \textit{denoising autoencoders} (DAEs) \citep{vincent2008extracting,bengio2013generalized,alain2014regularized} (see \appref{subsec:app_eb}). 
Therefore, we can equivalently obtain the score function \textit{at each noise level} by solving the corresponding least squares estimation:
\begin{equation}
	\label{eq:dae}
	\bbE_{\bfx \sim p_d} \bbE_{\bfz \sim \calN(\mathbf{0}, \sigma_t^2 \bfI)} \lVert r_{\bftheta}(\hatx; \sigma_t) - \bfx  \rVert^2_2, \quad\text{where}\quad \hatx = \bfx + \bfz.
\end{equation}
The overall training loss is a weighted combination of \eqref{eq:dae} across all noise levels, with the weights reflecting our emphasis on visual quality or density estimation \citep{song2021maximum}.
Unless otherwise specified, we follow the configuration of EDMs \citep{karras2022edm}. In this case, $\bff(\bfx, t)=\mathbf{0}$, $g(t)=\sqrt{2t}$, $\sigma_t=t$, the perturbation kernel $p_t(\hatx|\bfx)=\calN(\hatx; \bfx, t^2\bfI)$, and the kernel density estimate $p_t(\hatx)=\int p_{d}(\bfx)p_t(\hatx|\bfx)\rmdx$. The optimal estimator for \eqref{eq:dae} is given by the conditional expectation $\bbE(\bfx|\hatx)$, or specifically, $r_{\bftheta}^{\star}(\hatx; t)=\hatx + t^2\nablahatx \log p_t(\hatx)$ as revealed in the literature \citep{raphan2011least,karras2022edm}. 
%The forward VE-SDE is $\rmdx =\sqrt{2t} \, \rmd \bfw_t$ and 
%After training, we can leverage the empirical PF-ODE for sampling:
In practice, we assume that this connection approximately holds after the model training converges, and plug $\nablax \log p_t(\bfx) \approx (r_{\bftheta}(\bfx; t) - \bfx)/{t^2}$ into \eqref{eq:pf_ode} to derive the empirical PF-ODE:\footnote{There seems to be a slight notation ambiguity. Generally, $r_{\bftheta}(\cdot)$ refers to a converged model in our paper.}
\begin{equation}
	\label{eq:epf_ode}
	\rmdx = \frac{\bfx - r_{\bftheta}(\bfx; t)}{t}\rmd t.
\end{equation}
As for sampling, we first draw $\hatx_{t_N} \sim p_n(\bfx)=\calN(\mathbf{0}, T^2 \bfI)$ and then numerically solve the ODE backwards with $N$ steps to obtain a \textit{sampling trajectory} $\{\hatx_{t}\}$ with $t\in \{t_0=0, t_1, \cdots, t_N=T\}$.\footnote{
The time horizon is divided with the formula $t_n=(t_1^{1/\rho}+\frac{n-1}{N-1}(t_N^{1/\rho}-t_1^{1/\rho}))^{\rho}$, where $t_1=0.002$, $t_N=80$, $n\in [1,N]$ and $\rho=7$ \citep{karras2022edm}.
}
The final sample $\hatx_{t_0}$ is considered to approximately follow the data distribution $p_{d}(\bfx)$. 
%The ODE solver begins with $t_N=80$ and stops at $t_1=0.002$ in consistency models \citep{song2023consistency}, or $t_0=0$ in EDMs \citep{karras2022edm}.} 

Besides, we denote another important yet easy to be ignored sequence as $\{r_{\bftheta}(\hatx_t, t)\}$, or simplified to $\{r_{\bftheta}(\hatx_t)\}$ if there is no ambiguity, and designate it as \textit{denoising trajectory}. 
The following proposition reveals that a denoising trajectory is inherently related with the tangent of a sampling trajectory. 
The visual examples of these two trajectories are provided in the second and third rows of Figure~\ref{fig:diagnosis}. 
\begin{proposition}
	\label{prop:jump}
	The denoising output $r_{\bftheta}(\bfx; t)$ reflects the prediction made by a single Euler step from any sample $\bfx$ at any time toward $t=0$ with \eqref{eq:epf_ode}.
\end{proposition}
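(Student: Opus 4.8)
The plan is to simply unfold one explicit (forward) Euler step applied to the empirical PF-ODE in \eqref{eq:epf_ode} and verify that it collapses to the denoising output. Write the ODE as $\rmdx / \rmd t = v(\bfx, t)$ with velocity field $v(\bfx, t) = -\bigl(r_{\bftheta}(\bfx; t) - \bfx\bigr)/t$. A single Euler update over a time increment $\Delta t$, starting from an arbitrary sample $\bfx$ at an arbitrary time $t$, produces $\bfx + \Delta t \cdot v(\bfx, t)$. The claim is that, for the increment that steps all the way to the time origin, this equals $r_{\bftheta}(\bfx; t)$.

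First I would fix the step size. Since ODE-based sampling runs backward in time and the target is $t = 0$, the single step that lands exactly on the data axis uses $\Delta t = 0 - t = -t$. Substituting,
\begin{equation*}
\bfx + (-t) \cdot \left(-\frac{r_{\bftheta}(\bfx; t) - \bfx}{t}\right) = \bfx + \bigl(r_{\bftheta}(\bfx; t) - \bfx\bigr) = r_{\bftheta}(\bfx; t),
\end{equation*}
so the one-step Euler prediction of the sample at time zero is precisely the denoising output, independent of the current position $\bfx$ and the current noise level $t$. This already establishes the proposition, and it explains why the denoising trajectory $\{r_{\bftheta}(\hatx_s)\}$ defined just above can be read as the sequence of "single giant Euler step to $t=0$" predictions attached to each point $\hatx_s$ of the sampling trajectory.

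The only point requiring any care is the sign/step-size convention — recognizing that "a single Euler step towards $t = 0$" means taking $\Delta t = -t$ rather than some fixed small stepsize — after which the cancellation is immediate, so there is no genuine obstacle here. I would additionally add a short remark connecting this to the optimal denoiser identity $r_{\bftheta}^{\star}(\hatx; \sigma_t) = \hatx + \sigma_t^2 \nablahatx \log p_t(\hatx)$ recalled in Section~\ref{sec:background}: with the VE choice $\sigma_t = t$, plugging this into $v$ gives $-t\,\nablax \log p_t(\bfx)$, i.e., exactly the PF-ODE drift from \eqref{eq:pf_ode}, so Proposition~\ref{prop:jump} says that (in the optimal case) the denoising output is the one-step integration of the exact score flow onto the data axis.
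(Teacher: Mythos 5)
Your proof is correct and is essentially identical to the paper's one-line argument: both substitute the step size $\Delta t = 0 - t$ into the Euler update for \eqref{eq:epf_ode} and observe the cancellation $\bfx - (0-t)\bigl(r_{\bftheta}(\bfx;t)-\bfx\bigr)/t = r_{\bftheta}(\bfx;t)$. The added remark linking this to the optimal-denoiser/score identity is a fine supplement but not part of the paper's proof.
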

\begin{proof}
	The prediction of such an Euler step equals to
	$\bfx + (0 - t) \left(\bfx - r_{\bftheta}(\bfx; t)\right)/t=r_{\bftheta}(\bfx; t)$. 
\end{proof}
%This property was previously stated in \citep{karras2022edm} as an intuitive evidence to support the sampling trajectory is approximately linear across most noise levels, due to the slow change in denoising output. Compared to their visualization on a one-dimensional toy example, 
This property was previously mentioned in \citep{karras2022edm} to advocate the use of \eqref{eq:epf_ode} for ODE-based sampling. There, Karras et al. suspected that this sampling trajectory is approximately linear across most noise levels due to the slow change in denoising output, and verified it in a 1D toy example. In contrast, we provide an in-depth analysis of the high-dimensional trajectory with real data and discover more intriguing structures, especially those related to the denoising trajectory (Sections~\ref{subsec:trajectory} and~\ref{sec:denoising}), and reveal a theoretical connection to the classic mean shift (Section~\ref{sec:meanshift}).

\section{Visualization of High Dimensional Trajectory}
\label{sec:visual}
In this section, we present several tools to inspect the trajectory of probability flow ODE in high-dimensional space. We mostly take unconditional generation on CIFAR-10 as an example to demonstrate our observations. The conclusions also hold on other datasets (such as LSUN, ImageNet) and other model settings (such as conditional generation, various network architectures). More results and implementation details are provided in Appendix~\ref{app:exp}. 
% (such as CIFAR-10 \citep{krizhevsky2009learning}, LSUN \citep{yu2015lsun}, ImageNet \citep{russakovsky2015ImageNet})

We adopt $d(\cdot, \cdot)$ to denote the $\ell_2$ distance. Take the sampling trajectory as an example, the distance between a given sample $\hatx_{t_n}$ and the final sample $\hatx_{t_0}$ is denoted as $d(\hatx_{t_n}, \hatx_{t_0})$. The \textit{trajectory deviation} is calculated as the distance between each intermediate sample $\hatx_{t_n}$ and the straight line passing through two endpoints $\left[\hatx_{t_0}\hatx_{t_N}\right]$, and denoted as $d(\hatx_{t_n}, \left[\hatx_{t_0}\hatx_{t_N}\right])$. 
The expectation quantities (\textit{e.g.}, distance, magnitude) in every time steps are estimated by averaging 50k generated samples. 
%and plot the average distance at each time step. 

\subsection{Forward Diffusion Process}
\label{subsec:norm}
As discussed in Section~\ref{sec:background}, the forward process is generally interpreted as a progressive smoothing from data to noise with a series of Gaussian perturbation kernels. We further paraphrase it as the expansion of magnitude and manifold, which means that %for each diffusion step, 
samples escape from the original \textit{small-magnitude low-rank} manifold and settle into a \textit{large-magnitude high-rank} manifold. 
%The following proposition gives us a glimpse of the geometric structure in high dimensions
\begin{proposition}
	\label{prop:norm}
	Given a high-dimensional vector $\bfx \in \bbR^d$ and an isotropic Gaussian noise $\bfz \sim \calN(\mathbf{0} ; {\sigma^2\bfI_d})$, $\sigma>0$, we have $\bbE \left\lVert \bfz \right\rVert^2=\sigma^2 d$, and with high probability, $\bfz$ stays within a ``thin shell'': $\lVert \bfz \rVert = \sigma \sqrt{d} \pm O(1)$. Additionally, $\bbE \left\lVert \bfx + \bfz \right\rVert ^2 =\lVert \bfx \rVert^2 +  \sigma^2d$, 
	$\lim_{d\rightarrow \infty}\bbP \left( \lVert \bfx + \bfz  \rVert > \lVert \bfx \rVert \right)=1$. 
\end{proposition}
The proofs are provided in Appendix~\ref{app:norm}. Proposition~\ref{prop:norm} implies that in the forward process, the squared magnitude of the noisy sample $\bfx +\bfz$ is expected to be larger than that of the original sample $\bfx$, and their magnitude gap becomes especially huge for the high-dimensional case $d\gg 1$ and severe noise case $\sigma \gg 0$. 
We can further conclude that as $d\rightarrow \infty$, the sample magnitude will expand with probability one and the isotropic Gaussian noise will distribute as a uniform distribution on the sphere \citep{vershynin2018high}.
%, due to the \textit{concentration of measure}  
In practical generative modeling, $d$ is sufficiently large to make this claim approximately correct. The low-rank data manifold is thus lifted to about $d-1$ rank sphere of radius $\sigma\sqrt{d}$, with a thin shell of width $O(1)$. 
%\textit{i.e.}, $\bfz\sim\text{Unif}(\sigma \sqrt{d} \, \calS^{d-1})$, 
%it follows the \textit{quadratic relation} w.r.t the noise scale. 
In Figure~\ref{fig:magnitude}, we track the magnitude of original data in the forward process and the magnitude of synthetic samples in the backward process. A clear trend is that the sample magnitude expands in the forward diffusion process and shrinks in the backward generative process, and they are well-matched thanks to the marginal preserving property.
%\citep{song2021sde}. 
%(the pixel values are re-scaled to $[-1,1]$)

%Due to the marginal preserving property of PF-ODE \citep{song2021sde}, the backward process behaves in a magnitude shrinking fashion and the analysis is similar. 

%Furthermore, the isotropic Gaussian noise is distributed around the sphere of radius (about $4433 \pm 57$), which is significantly larger than the original data in magnitude (about $27 \pm 7$).

\begin{figure}[t]
	\centering
	\begin{subfigure}[b]{0.28\textwidth}
		\includegraphics[width=\textwidth]{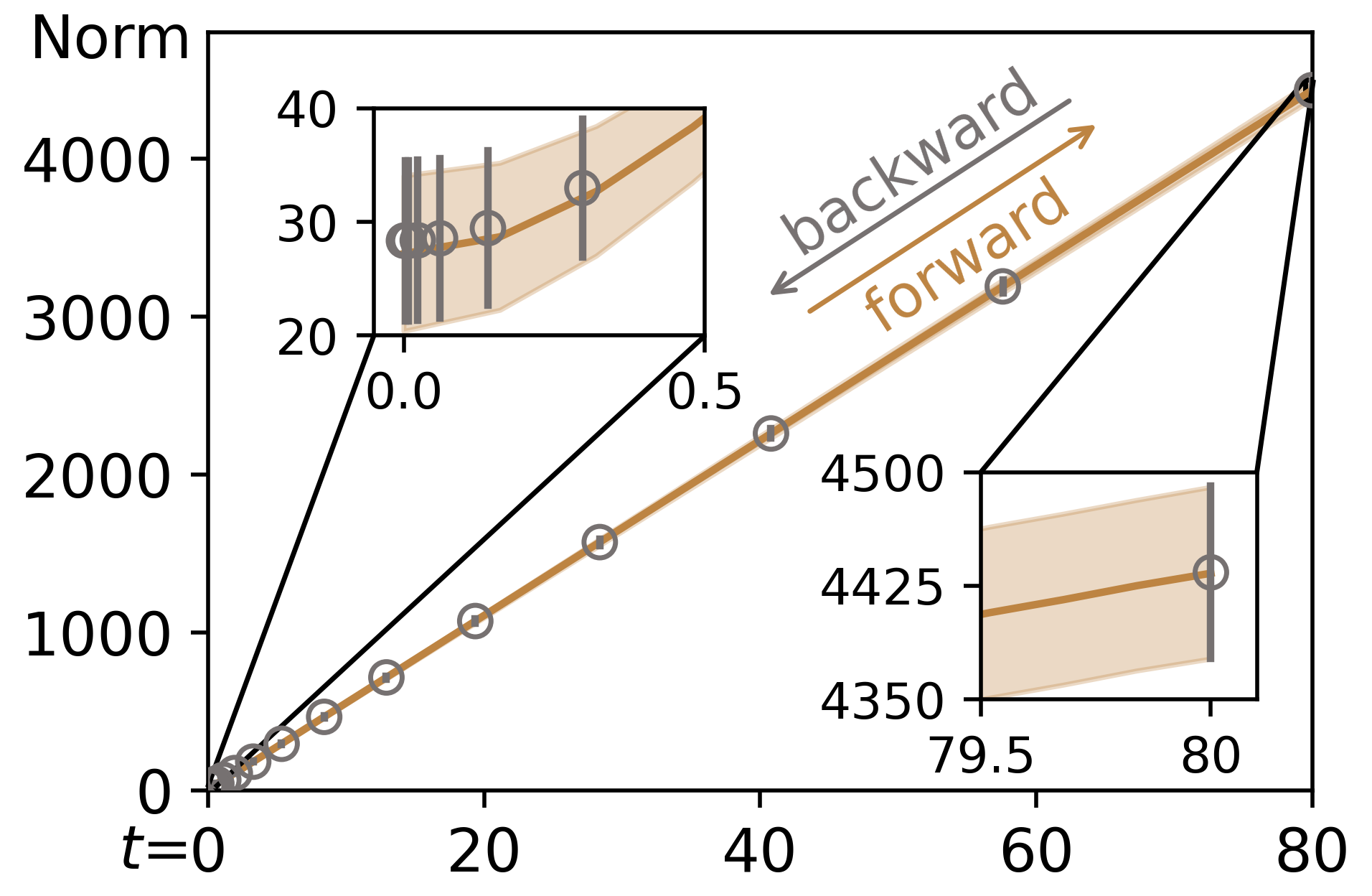}
		\caption{The statistics of magnitude.}
		\label{fig:magnitude}
	\end{subfigure}
	\begin{subfigure}[b]{0.7\textwidth}
		\includegraphics[width=\textwidth]{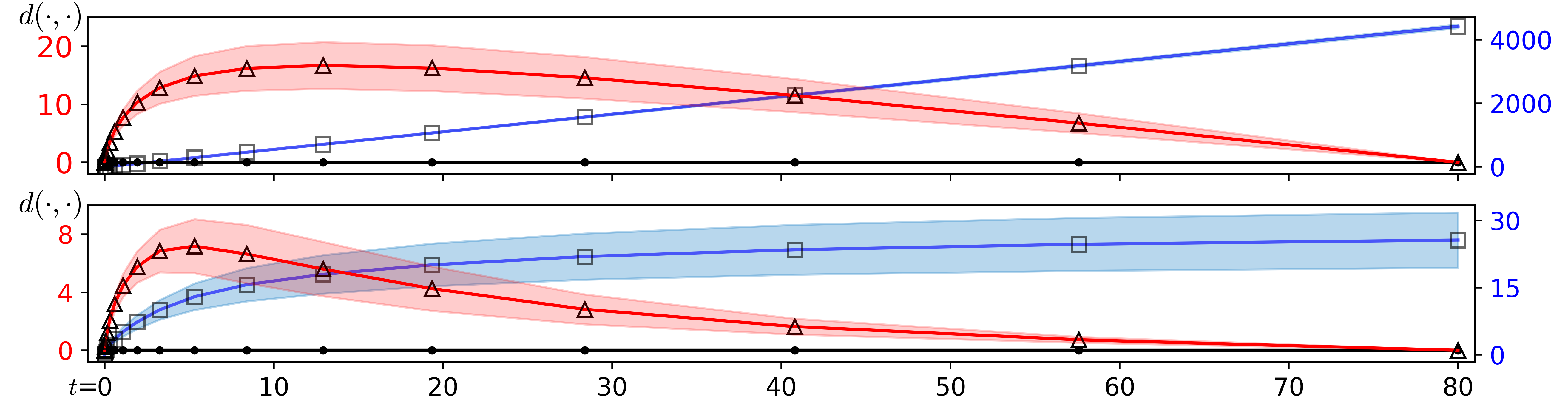}
		\caption{The statistics of the sampling (top) and denoising (bottom) trajectories.}
		\label{fig:trajectory}
	\end{subfigure}
	\caption{	
		(a) The sample magnitude ($\ell_2$ norm) expands in the forward process (brown curve) while shrinking in the backward process (gray circles). 
		(b) Each trajectory deviation (red curve) is calculated as $d(\hatx_{t_n}, \left[\hatx_{t_0}\hatx_{t_N}\right])$ or $d(r_{\bftheta}(\hatx_{t_n}), \left[r_{\bftheta}(\hatx_{t_1})r_{\bftheta}(\hatx_{t_N})\right])$, respectively, leading to Observation~\ref{obs:bent}. The distance (blue curve) between the final sample and generated intermediate samples is calculated as $d(\hatx_{t_n}, \hatx_{t_0})$ or $d(r_{\bftheta}(\hatx_{t_n}), r_{\bftheta}(\hatx_{t_1}))$, respectively, leading to Observation~\ref{obs:converge}.
  %denote the $\ell_2$ distance between two points, such as $d(\hatx_{t_n}, \hatx_{t_0})$, and we denote \textit{trajectory deviation} as the $\ell_2$ distance from a point to the straight line passing through two endpoints in a trajectory, such as $d(\hatx_{t_n}, \left[\hatx_{t_0}\hatx_{t_N}\right])$
	}
	\label{fig:norm_sample}
\end{figure}

\subsection{Backward Generative Process}
%\subsection{Geometric Shape of Sampling$/$Denoising Trajectory}
\label{subsec:trajectory}
It is challenging to visualize the whole sampling trajectory and the associated denoising trajectory laying in a high-dimensional space. In this paper, we are particularly interested in their geometric properties, and find that each trajectory exhibits a surprisingly simple form. Our observations, which have been confirmed by empirical evidence, are elaborated in the following paragraphs. 
\begin{observation}
	\label{obs:bent}
	The sampling trajectory is almost straight while the denoising trajectory is bent.
\end{observation}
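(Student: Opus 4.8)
The plan is to read the statement as two linked claims — the sampling trajectory $\{\hatx_s\}$ is almost straight, the denoising trajectory $\{r_{\bftheta}(\hatx_s)\}$ is genuinely curved — and to establish each by combining a scale-free geometric metric, a short analytic heuristic that is exact in the high-dimensional, well-trained limit, and an averaged numerical measurement on real data. First I would fix the metric already introduced above: for an interior point $\hatx_s$, the straightness defect is the ratio $d(\hatx_s,[\hatx_{s_0}\hatx_{s_N}])/d(\hatx_{s_0},\hatx_{s_N})$ of its perpendicular distance to the chord over the chord length. ``Almost straight'' then means this ratio is uniformly tiny in $s$, while the analogous ratio for the denoising trajectory being $O(1)$ is exactly what makes ``bent'' quantitative.

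For the sampling trajectory I would argue from \eqref{eq:epf_ode}, whose velocity points along $\hatx - r_{\bftheta}(\hatx;t)$, together with the empirical-Bayes form of the optimal denoiser $r^{\star}_{\bftheta}(\hatx;t)=\hatx+t^2\nablahatx\log p_t(\hatx)=\sum_i w_i(\hatx,t)\,\bfx_i$, where $w_i\propto\exp(-\lVert\hatx-\bfx_i\rVert^2/2t^2)$ ranges over the training points. For large $t$ the weights flatten toward uniform, so $r^{\star}_{\bftheta}$ hugs the dataset mean $m$ and the ODE reduces to $\hatx'(t)=(\hatx(t)-m)/t$, whose solutions $\hatx(t)=m+ct$ are \emph{exactly} rays through $m$; for small $t$ a single weight dominates and the flow is again nearly linear, now toward the surviving data point; only in a comparatively short transition band does the target $r^{\star}_{\bftheta}$ swing appreciably. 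Integrating an ODE whose velocity direction is piecewise almost constant and turns only over a short $t$-interval yields a curve with small straightness defect, and Proposition~\ref{prop:norm} reinforces this by pinning $\hatx_{s_N}$, $\hatx_{s_0}$ and every intermediate $\hatx_s$ to thin shells that leave little room for transverse excursion.

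The curvature of the denoising trajectory I would obtain from an exact identity rather than a heuristic. Parameterise both curves by $t$ and set $r(t):=r_{\bftheta}(\hatx(t);t)$; then \eqref{eq:epf_ode} reads $\hatx'(t)=(\hatx(t)-r(t))/t$, i.e.\ $r(t)=\hatx(t)-t\,\hatx'(t)$, and differentiating gives $r'(t)=-t\,\hatx''(t)$ — the velocity of the denoising trajectory is (minus) $t$ times the acceleration of the sampling trajectory, independent of the internal form of $r_{\bftheta}$. Hence if $\hatx(\cdot)$ were affine in $t$ the denoising trajectory would collapse to a point, and if $\hatx(\cdot)$ merely traced a straight segment (nonlinearly reparameterised) the denoising trajectory would still be a straight segment; so the observed genuine bending of $\{r_{\bftheta}(\hatx_s)\}$ is \emph{equivalent} to the statement that the tiny curvature of $\{\hatx_s\}$ has a direction that varies along the path. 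The same empirical-Bayes picture locates the endpoints: at large $t$ the denoising output sits near $m$ (``starting from the approximate dataset mean''), while near $t=0$ it coincides with the one-step Euler prediction of Proposition~\ref{prop:jump}, hence with $\hatx_{s_0}$ — two well-separated endpoints joined by a curve pulled off its chord by the varying $\hatx''$, which is precisely ``bent''.

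Finally I would close the loop numerically: average both straightness defects over 50k sampled trajectories on CIFAR-10 (and LSUN Cat/Bedroom and conditional models for robustness) and display them as in Figure~\ref{fig:trajectory}, checking that the sampling defect is uniformly negligible while the denoising defect is large and non-monotone. The main obstacle is that the statement is intrinsically data- and network-dependent: $r_{\bftheta}$ is a learned function with no closed form, $r^{\star}_{\bftheta}$ depends on the unknown $p_d$, and ``almost straight'' versus ``bent'' only becomes a theorem once a metric is fixed — so the analytic part can only be a controlled heuristic (tight as $d\to\infty$ for a near-optimal denoiser) and the empirical average must carry the real weight of the claim. A secondary difficulty is bounding the turning of the velocity direction inside the transition noise band, where the softmax weights change fastest and every clean estimate is weakest.
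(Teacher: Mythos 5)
Your plan is correct and, in its final step, reproduces exactly what the paper actually does to support this observation: the paper's entire justification is empirical, namely the trajectory-deviation curves of Figure~\ref{fig:trajectory} averaged over 50k samples, the ratio of maximum deviation to chord length (about $16/4428\approx 0.0036$ for the sampling trajectory versus $7/26\approx 0.27$ for the denoising trajectory), and the angle-deviation check that $\cos\bigl(-\frac{\rmdx_t}{\rmd t}\big|_s,\,(\hatx_{s_0}-\hatx_s)\bigr)$ stays in $[0.98,1.00]$. What you add beyond the paper is genuinely different and worthwhile: (i) the large-$t$/small-$t$ analysis of the softmax weights in $r^{\star}_{\bftheta}$, which the paper only develops later (Section~\ref{sec:meanshift} and the remark that $p(r_{\bftheta}(\hatx_{s_N}))\approx\delta(\bfx_m)$) without using it to argue straightness; and (ii) the exact identity $r'(t)=-t\,\hatx''(t)$ obtained by differentiating $r(t)=\hatx(t)-t\,\hatx'(t)$, which is correct and cleanly explains why the two halves of the observation are compatible rather than contradictory --- the sampling trajectory's curvature is tiny relative to its $\sim\!4400$-length chord, but its direction variation is the \emph{entire} content of the much shorter ($\sim\!26$) denoising trajectory, so the same second derivative that is negligible for one curve is order-one for the other. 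You are also right to flag that the analytic part is only a heuristic for the optimal denoiser: the paper's own score-deviation diagnosis shows $r_{\bftheta}$ and $r^{\star}_{\bftheta}$ diverge precisely in the small-$t$ region, so for the learned model the empirical measurement must indeed carry the weight of the claim, exactly as the paper treats it.
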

%develop an efficient visualization technique based on 
We propose to employ \textit{trajectory deviation} to assess the linearity of trajectories. From Figure~\ref{fig:trajectory}, we can see that the deviation of sampling trajectory and denoising trajectory (red curves) gradually increases from $t=80$ to around $t=10$ or $t=5$, respectively, and then quickly decreases until reaching their final samples. This implies that each initial sample may be affected by all possible modes with a large influence at first, and become intensely attracted by its unique mode after a turning point. This phenomenon also supports the strategy of placing time intervals densely near the minimum timestamp yet sparsely near the maximum one \citep{song2021ddim,karras2022edm,song2023consistency}. 
However, based on the ratio of maximum deviation (\textit{e.g.}, $\max \bbE \left[d(\hatx_{t_n}, \left[\hatx_{t_0}\hatx_{t_N}\right])\right]$) to the endpoint distance (\textit{e.g.}, $\bbE \left[d(\hatx_{t_0}, \hatx_{t_N})\right]$) in Figure~\ref{fig:trajectory}, the deviation of sampling trajectory is incredibly small (about $16/4428\approx 0.0036$), while the deviation of denoising trajectory is relatively significant (about $7/26\approx 0.27$), which indicates that the former is much straighter than the latter. 

% 1LINE

Another evidence for the quasi-linearity of sampling trajectory is from the aspect of \textit{angle deviation}, which is calculated by the cosine similarity between the backward ODE direction and the direction pointing to the final sample $(\hatx_{t_0} - \hatx_{t_n})$ at the intermediate time $t_n$. We find that $\cos(-\frac{\rmdx_t}{\rmd t}\big|_{t_n}, (\hatx_{t_0} - \hatx_{t_n}))$ always stays in a narrow range from 0.98 to 1.00 (Appendix~\ref{subsec:more_results_vesde}), which indicates the angle-based trajectory deviation is extremely small and all backward ODE directions almost exactly point to the final sample. 
Therefore, each initial sample converges monotonically and rapidly by moving along the sampling trajectory, similar to the behavior of gradient descent algorithm in a well-behaved convex function. This claim is confirmed by blue curves in Figure~\ref{fig:trajectory}, and summarized as follows
\begin{observation}
	\label{obs:converge}
	The generated samples on the sampling trajectory and the denoising trajectory both move monotonically from the initial points toward their final points in expectation.
\end{observation}
%In fact, such behavior is expected for the sampling trajectory given its slight angle deviation. Since $-\frac{d(\bfx_{t}, \hatx_{t_0})}{\rmd t}\big|_{t_n}\propto \cos \left(\left(\hatx_{t_0}-\hatx_s\right), \frac{\rmd \bfx_t}{\rmd t}\big|_t\right)\approx-1$, 
Observations~\ref{obs:bent} and \ref{obs:converge} enable us to safely adopt large Euler steps or higher-order ODE solvers without incurring much truncation error \citep{song2021sde,liu2022pseudo,karras2022edm,lu2022dpm,zhou2024fast}. 
%Besides, the angle between two directions $(\hatx_{t_N}-\hatx_{t_0})$ and $(r_{\bftheta}(\hatx_{t_N})-\hatx_{t_0})$ mostly falls within the range of  ${105}^{\circ}\pm{4}^{\circ}$ for all 50k synthetic samples.
Additionally, we provide the visual quality and FID comparison between the sampling trajectory and the denoising trajectory in Figure~\ref{fig:fid}, and we have the following observation
%Figure~\ref{fig:trajectory} shows that the denoising output $r_{\bftheta}(\hatx_{t_n})$ is consistently closer to the final sample than $\hatx_{t_n}$ in expectation, \textit{i.e.}, $\bbE \left[d\left(r_{\bftheta}(\hatx_{t_1}), r_{\bftheta}(\hatx_{t_n})\right)\right]<\bbE \left[d(\hatx_{t_0}, \hatx_{t_n})\right]$ for each time step.  
\begin{observation}
    \label{obs:faster}
	The denoising trajectory converges faster than the sampling trajectory in terms of visual quality, FID, and sample likelihood.
\end{observation}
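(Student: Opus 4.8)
Since ``visual quality'' is not itself a mathematical quantity, the target statement is best treated as a hybrid assertion, and the plan is to isolate a rigorous structural core and then derive the perceptual conclusion from it together with the empirical panels. I would first pin down the two endpoints: applying Proposition~\ref{prop:jump} to the final Euler step (from $s_1$ to $s_0=0$) gives $\hatx_{s_0}=r_{\bftheta}(\hatx_{s_1};s_1)$, so the last point of the denoising trajectory is exactly the common target, and ``the denoising trajectory converges faster in visual quality'' means it looks perceptually close to this target at a substantially larger noise level $s$ than the sampling trajectory does. Next comes the rigorous core. By the marginal-preserving property, $\hatx_s\sim p_s=p_d*\calN(\mathbf{0},s^2\bfI)$, so write $\hatx_s\stackrel{d}{=}\bfx_0+s\bfe$ with $\bfx_0\sim p_d$ and $\bfe\sim\calN(\mathbf{0},\bfI)$; Tweedie's identity (see \appref{app:eb}) identifies $r_{\bftheta}^{\star}(\hatx_s;s)=\bbE[\bfx_0\mid\hatx_s]$, the convex combination in \eqref{eq:optimal}. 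Subtracting,
\begin{equation}
	\hatx_s - r_{\bftheta}^{\star}(\hatx_s;s) = \bbE[\hatx_s-\bfx_0\mid\hatx_s] = s\,\bbE[\bfe\mid\hatx_s],
\end{equation}
so the denoising output is \emph{exactly} the sampling point with (the posterior estimate of) its additive Gaussian noise stripped off, and by Proposition~\ref{prop:norm} that stripped content is pixel-wise Gaussian-noise-shaped and comparable to the full noise scale $s\sqrt{d}$ over most of the trajectory (always with norm at most $s\sqrt{d}$).

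The perceptual step then compares two thresholds. On the sampling side, an additive $\calN(\mathbf{0},s^2\bfI)$ perturbation of $[-1,1]$-scaled pixels stays clearly visible until $s$ falls well below the turning point near $t\approx 5$ to $10$; below that $\hatx_s$ still reads as a noise-corrupted image. On the denoising side, a softmax-weighted average as in \eqref{eq:optimal} of a few mutually similar dataset images already resembles a mildly blurred natural image --- decent quality --- long before the weights $u_i$ collapse onto a single neighbour, and by Observation~\ref{obs:norm} (magnitude monotonically expanding, no excess norm) it only sharpens toward the final sample thereafter, never acquiring artifacts. Hence the denoising trajectory reaches decent visual quality at a markedly larger $s$. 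I would then make this quantitative with the perceptual metrics and side-by-side samples already assembled in Figure~\ref{fig:score_deviation} (FID versus $t$) and Figure~\ref{fig:jump}, and note that the ordering is unchanged when $r_{\bftheta}$ replaces $r_{\bftheta}^{\star}$ because the score deviation is negligible in the relevant large-noise band.

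The main obstacle is precisely the part that is not mathematics: showing that deleting the $\Theta(s\sqrt{d})$, noise-shaped residual suffices for ``decent'' perceptual quality far earlier than the residual itself becomes imperceptible. The structural half --- denoising output $=$ sampling point $-\,s\,\bbE[\bfe\mid\hatx_s]$ --- is a one-line rearrangement of Tweedie together with Proposition~\ref{prop:norm}, but the perceptual half is irreducibly empirical, which is why the FID curves and image panels carry the real weight of the claim. A secondary point to state carefully is that the denoising trajectory is bent (it is magnitude-expanding and starts at the blurry dataset mean $\bfx_m$), so ``converges faster'' must be read as ``becomes image-like and recognizable earlier,'' not as ``stays closer in $\ell_2$ to the endpoint throughout'' --- indeed the sampling trajectory is the one that is nearly straight toward its endpoint.
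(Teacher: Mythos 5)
Your proposal is correct and lands in essentially the same place as the paper: the paper offers no mathematical proof of this observation, justifying it solely by the visual comparison of the $\{\hatx_s\}$ versus $\{r_{\bftheta}(\hatx_s)\}$ sequences in Figure~\ref{fig:diagnosis} and the FID-versus-NFE curves in Figure~\ref{fig:score_deviation}, which is exactly the empirical evidence you ultimately defer to. Your endpoint identification via Proposition~\ref{prop:jump} and the Tweedie decomposition $\hatx_s - r^{\star}_{\bftheta}(\hatx_s;s)=s\,\bbE[\bfe\mid\hatx_s]$ are a sound heuristic the paper leaves implicit, but as you yourself concede, the perceptual half of the claim remains irreducibly empirical, so the two arguments rest on the same evidence.
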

The theoretical guarantee \textit{w.r.t}\ sample likelihood is provided in Section~\ref{sec:meanshift} (Theorem~\ref{thm:likelihood}). 
This observation inspires us to develop a new sampler named as ODE-Jump that directly jumps from \textit{any} sample at \textit{any} time in the original sampling trajectory simulated by \textit{any} ODE solver to the associated denoising trajectory, and returns the denoising output as the final synthetic image. Specifically, we change the sampling sequence from $\hatx_{t_N} \rightarrow \hatx_{t_{N-1}} \rightarrow \dots \rightarrow\hatx_{t_{n}}\rightarrow \dots \rightarrow\hatx_{t_{1}} \rightarrow\hatx_{t_{0}}$ to $\hatx_{t_N} \rightarrow \hatx_{t_{N-1}} \rightarrow \dots \rightarrow \hatx_{t_{n}} \rightarrow r_{\bftheta}(\hatx_{t_{n}})$, and the total NFE reduces from $N$ to $N-n+1$ if first-order samplers are used.
This simple algorithm is highly flexible, extremely easy to implement. We only need to monitor the visual quality of synthetic samples in the implicit denoising trajectory and decide when to interrupt the sampling trajectory and make a jump. 
Take the sampling on LSUN Bedroom illustrated in Figure~\ref{fig:fid_bedroom} as an example, we perform a jump from NFE=54 of the sampling trajectory into NFE=55 of the denoising trajectory and stop the subsequent process. 
In this step, we achieve a significant FID improvement (from 85.8 to 11.4) and obtain a visually comparable sample with the final one in the original sampling trajectory (NFE=79) at a much less NFE. 
\begin{figure}[t]
	\centering
	\begin{subfigure}[b]{0.32\textwidth}
		\includegraphics[width=\textwidth]{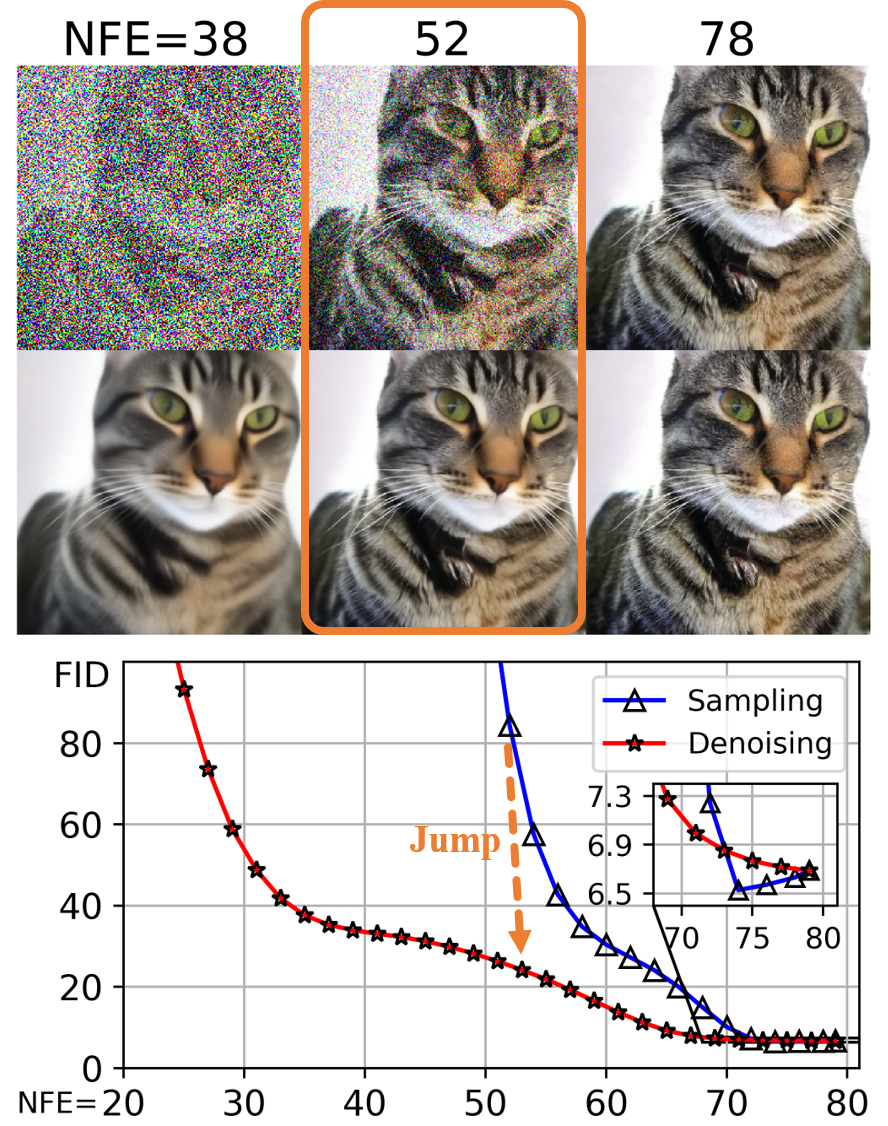}
		\caption{LSUN Cat (256$\times$256).}
	\end{subfigure}
    \begin{subfigure}[b]{0.32\textwidth}
		\includegraphics[width=\textwidth]{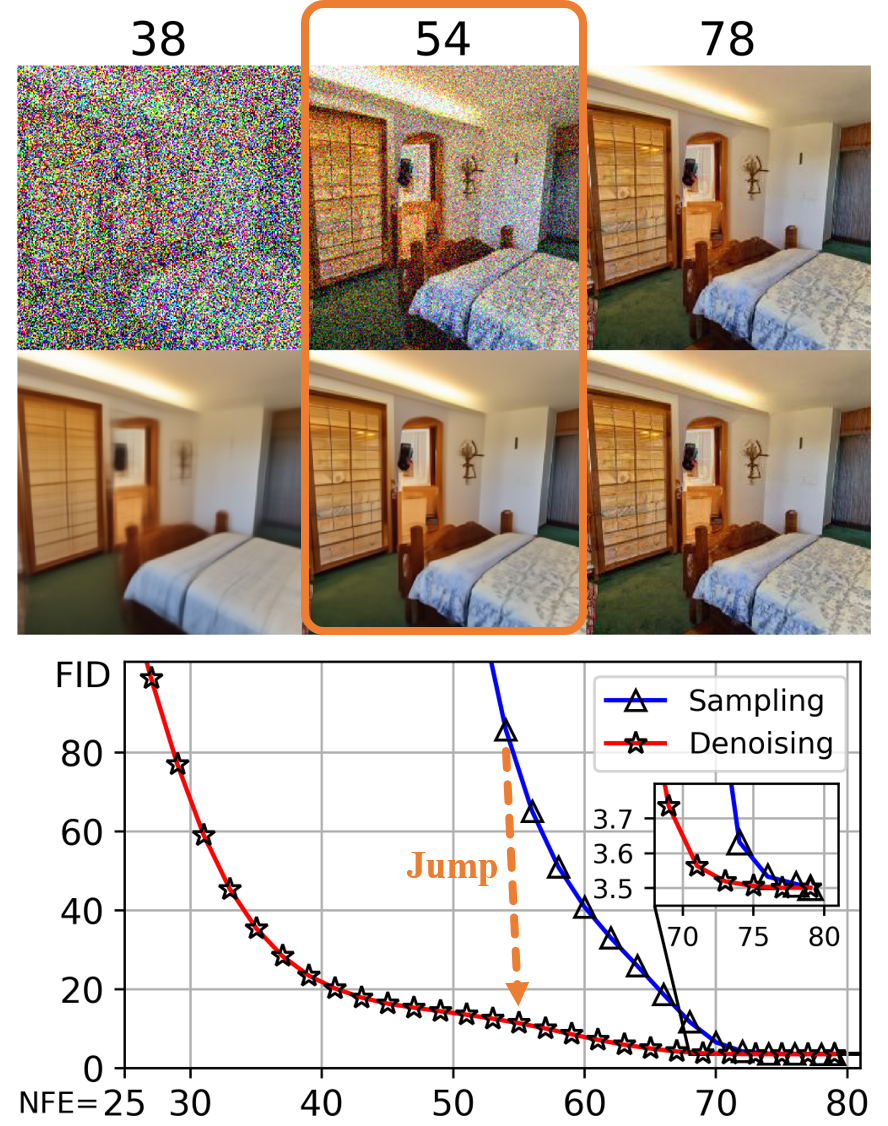}
        \caption{LSUN Bedroom (256$\times$256).}
        \label{fig:fid_bedroom}
	\end{subfigure}
	\begin{subfigure}[b]{0.32\textwidth}
		\includegraphics[width=\textwidth]{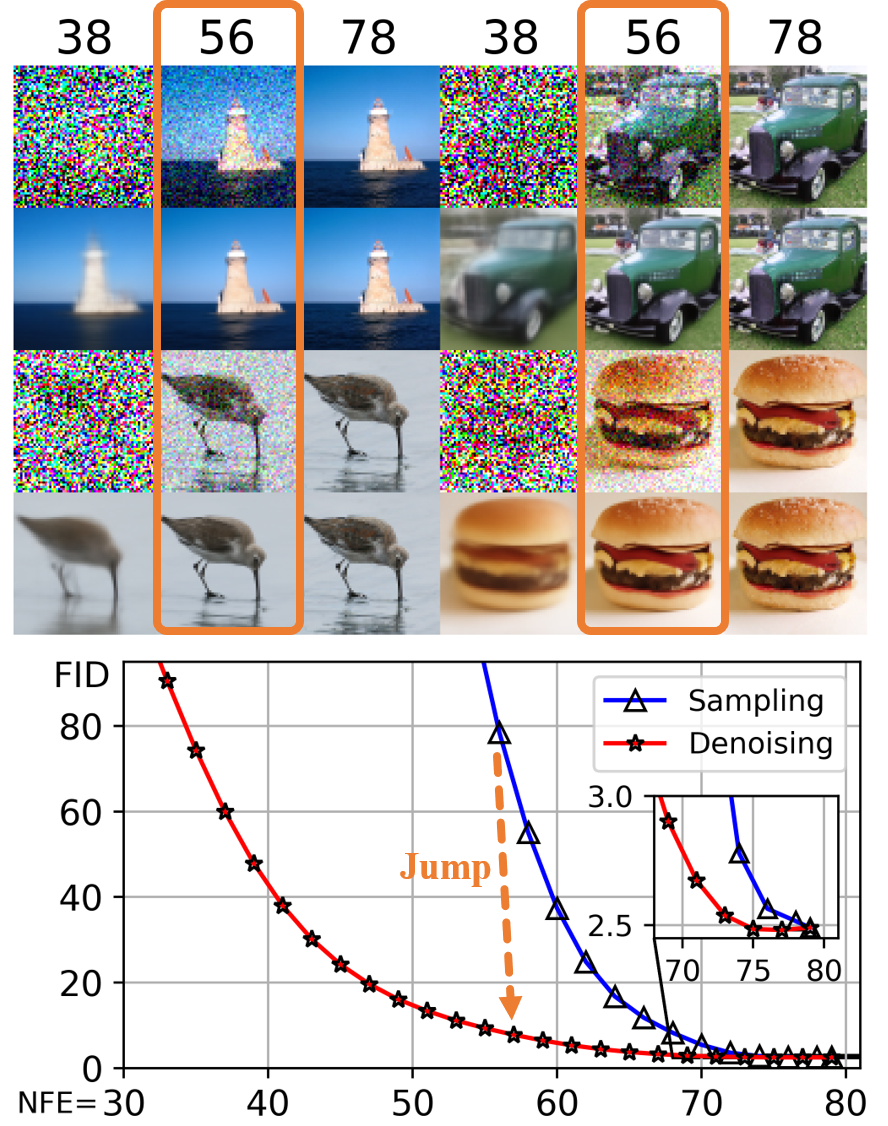}
		\caption{ImageNet-64 (64$\times$64).}
	\end{subfigure}
    \caption{The comparison of visual quality (top is sampling trajectory, bottom is denoising trajectory) and Fr\'echet Inception Distance (FID \citep{heusel2017gans}, lower is better) \textit{w.r.t.}\ the number of score function evaluations (NFEs). More results are provided in Appendix~\ref{subsec:visual}. The denoising trajectory converges much faster than the sampling trajectory in terms of FID and visual quality.}
	\label{fig:fid}
\end{figure}
%Three datasets with different resolutions are provided 
%our proposed ODE-Jump sampling (bottom) converge much faster than those of EDM (top) in terms of visual quality.

All above observations make the picture of ODE-based sampling, as depicted in Figure~\ref{fig:model}. Geometrically, the initial noise distribution starts from a big sphere and then anisotropically squashes its ``radius'' and twists the sample range into the exact data manifold. Meanwhile, the distribution of denoising outputs initially approximates a \textit{Dirac delta function} centering in the dataset mean, and then anisotropically expands its range until exactly matching the data manifold. 
These two processes are governed by the simple and smooth sampling trajectory and its associated denoising trajectory.

%\textbf{Visualization.} We propose \textit{trajectory deviation} as an efficient tool to visualize the sampling trajectory and denoising trajectory in high-dimensional space. Take sampling trajectory as an example, we first calculate the $\ell_2$ distance between each intermediate sample of $\{\hatx_s\}_{s_1}^{s_{N-1}}$ to the straight line $\langle\hatx_{s_0}, \hatx_{s_N}\rangle$, and plot the average distance of 50k samples at each discrete time step. Since points staying at a certain distance from a line in $d$-dimensional space spans an $(n-1)$-dimensional hyperplane, we cannot exactly locate the intermediate samples in a $2$-dimensional illustration without any projection. To circumvent this problem, we simply leverage several concentric circles to depict the possible range of position of them. As shown in Figure~\ref{fig:sampling_trajectory}, we can see that the deviation will gradually increase from $t=80$ to around $t=10$ and then quickly decrease until reaching the final point. This phenomenon in turn supports the strategy of placing time intervals densely near the minimum timestamp yet sparsely near the maximum one \citep{karras2022edm,song2023consistency}.

\begin{table}[t]
    \caption{Each second-order ODE-based sampler listed below corresponds to a specific finite difference of the denoising trajectory. $\gamma$ denotes a correction coefficient of forward differences. 
    DDIM is a first-order sampler listed for comparison. 
    GENIE trains a neural network to approximate high-order derivatives.
    $r_{\bftheta}(\hatx_{t_{n+2}})$ in S-PNDM and DEIS denotes a previous denoising output. 
    $s_n = \sqrt{t_n t_{n+1}}$ in DPM-Solver-2. 
    $\hatx'_{t_n}$ in EDMs denotes the output of an intermediate Euler step.}
    \label{tab:sampler}
    \begin{center}
    \begin{tabular}{lrr}
    \toprule
    \multicolumn{1}{c}{\bf ODE solver-based samplers} & \multicolumn{1}{c}{$\rmd r_{\bftheta}(\hatx_{t_{n+1}}) / \rmd t$} & \multicolumn{1}{c}{$\gamma$} \\
    \midrule
    DDIM \citep{song2021ddim} & None & None \\
    %, DPM-Solver-1 \citep{lu2022dpm}
    GENIE \citep{dockhorn2022genie} & Neural Networks & None\\ 
    S-PNDM \citep{liu2022pseudo} & $\gamma\left(r_{\bftheta}(\hatx_{t_{n+1}}) - r_{\bftheta}(\hatx_{t_{n+2}})\right)/(t_n - t_{n+1})$ & 1 \\
    DEIS ($\rho$AB1) \citep{zhang2023deis} & $\gamma\left(r_{\bftheta}(\hatx_{t_{n+1}}) - r_{\bftheta}(\hatx_{t_{n+2}})\right) / (t_{n+1} - t_{n+2})$ & 1\\
    DPM-Solver-2 \citep{lu2022dpm} & $\gamma\left(r_{\bftheta}(\hatx_{s_n}) - r_{\bftheta}(\hatx_{t_{n+1}})\right)/\left((t_n - t_{n+1}) / 2\right)$ & $t_{n+1}/s_n $\\
    EDMs (Heun) \citep{karras2022edm} & $\gamma\left(r_{\bftheta}(\hatx'_{t_n}) - r_{\bftheta}(\hatx_{t_{n+1}})\right) / (t_n - t_{n+1})$ & $t_{n+1}/t_n$\\
    %’s 2\textsuperscript{nd} order method
    \bottomrule
    \end{tabular}
    \end{center}
\end{table}

\section{Finite Differences of Denoising Trajectory}
\label{sec:denoising}
To accelerate the sampling speed of diffusion models, various numerical solver-based samplers have been developed in the past several years \citep{song2021ddim,song2021sde,karras2022edm,lu2022dpm,zhang2023deis,zhou2024fast}. In particular, second-order ODE-based samplers are relatively promising in the practical use since they strike a good balance between fast sampling and decent visual quality \cite{rombach2022ldm,balaji2022ediffi}. More discussion is provided in Appendix~\ref{sec:app_sampling}.

In this section, we point out that intriguingly, these prevalent techniques implicitly employ the tangent of denoising trajectory to reduce the truncation error along the sampling trajectory. The probability flow ordinary differential equation of the denoising trajectory is presented as follows
%which we refer to as \textit{denoising-ODE}, 
\begin{proposition}
    The ordinary differential equation of the denoising trajectory (denoising-ODE) is 
    \begin{equation}
	   \label{eq:denoising_ode}
	   \frac{\rmd r_{\bftheta}(\bfx; t)}{\rmd t} = - t \frac{\rmd^2 \bfx}{\rmd t^2}.
    \end{equation}
\end{proposition}
\begin{proof}
    Since $r_{\bftheta}(\bfx; t)=\bfx - t \frac{\rmd \bfx}{\rmd t}$ from \eqref{eq:epf_ode}, we have $\frac{\rmd r_{\bftheta}(\bfx; t)}{\rmd t}=\frac{\rmd \bfx}{\rmd t}-\left(\frac{\rmd \bfx}{\rmd t}+t\frac{\rmd^2 \bfx}{\rmd t^2}\right)= - t \frac{\rmd^2 \bfx}{\rmd t^2}$.
\end{proof}
This equation reveals that the denoising trajectory encapsulates the curvature or concavity information of the associated sampling trajectory. 
Given a sample $\hatx_{t_{n+1}}$, the second-order Taylor polynomial approximation of the sampling trajectory with \eqref{eq:epf_ode} is 
\begin{equation}
    \label{eq:sec_order}
    \begin{aligned}
    \hatx_{t_{n}}
    %&=\hatx_{t_{n+1}} +  (t_n - t_{n+1})\frac{\rmdx}{\rmd t}\Big|_{\hatx_{t_{n+1}}}+\frac{1}{2}(t_n - t_{n+1})^2\frac{\rmd^2 \bfx}{\rmd t^2}\Big|_{\hatx_{t_{n+1}}}\\
    &=\hatx_{t_{n+1}} +  \frac{t_n - t_{n+1}}{t_{n+1}} \left(\hatx_{t_{n+1}} - r_{\bftheta}(\hatx_{t_{n+1}})\right)
    -\frac{1}{2}\frac{(t_n - t_{n+1})^2}{t_{n+1}}\frac{\rmd r_{\bftheta}(\hatx_{t_{n+1}})}{\rmd t},
    \end{aligned}
\end{equation}
%where we have replace $\frac{\rmd^2 \bfx}{\rmd t^2}\big|_{\hatx_{t_{n+1}}}$ as $-\frac{1}{t_{n+1}}\frac{\rmd r_{\bftheta}(\hatx_{t_{n+1}})}{\rmd t}$ with \eqref{eq:denoising_ode}. 
%finite difference approximations of the derivative 
where various finite differences of $\rmd r_{\bftheta}(\hatx_{t_{n+1}})/\rmd t$ essentially correspond to a series of second-order samplers, as shown in Table~\ref{tab:sampler}. The detailed derivations are provided in Appendix~\ref{sec:app_second_order}.

%We then rewrite existing second-order ODE samplers and list them in 

%or dubbed as

% a clear different for diffusion models, is that not all regions have a well-defined score, compared to the mean-shift 

%starting from the same random seed to display all subsequent synthetic samples. 

% \begin{algorithm}[H]
% 	%\small
% 	\caption{ODE-Jump}
% 	\label{alg:jump}
% 	\begin{algorithmic}[1]
% 		\STATE $\hatx_{t_N} \sim \calN(\mathbf{0}, T^2\bfI)$
% 		\FOR {$n=N-1, \cdots, 1, 0$}
% 		\STATE $\hatx_{t_n} = \text{ODE-Solver}(\hatx_{t_{n+1}}, r_{\bftheta}(\hatx_{t_{n+1}}), t_{n+1},t_n)$ \\
% 		%\colorbox{gray}{}
% 		\STATE \text{Plot and visualize} $r_{\bftheta}(\hatx_{t_{n+1}})$\\
% 		\IF{$r_{\bftheta}(\hatx_{t_{n+1}})$ \text{is good enough}} 
% 		\STATE \textbf{break} 
% 		%				\RETURN $r_{\bftheta}(\hatx_{t_{n+1}})$
% 		\ENDIF
% 		\ENDFOR
% 		\RETURN $\hatx_{t_0}$
% 	\end{algorithmic}
% \end{algorithm}

\section{Theoretical Connection to Mean Shift}
\label{sec:meanshift}
Given a parametric diffusion model with the denoising output $r_{\bftheta}(\cdot)$, the sampling trajectory is simulated by numerically solving \eqref{eq:epf_ode}, and meanwhile, an implicitly coupled denoising trajectory is formed as a by-product. 
%Once such a model has converged to the optimum, it will hopefully capture the underlying score at different levels of noise. 
%, \textit{i.e.}, $\forall \sigma_t,\, r_{\bftheta}(\hatx; \sigma_t)\rightarrow r_{\bftheta}^{\star}(\hatx; \sigma_t)$
We next derive the formula of \textit{optimal denoising output} to analyze the asymptotic behavior of diffusion models as they approach the optima.

\begin{proposition}
	\label{prop:optimal}
	The optimal denoising output of \eqref{eq:dae} is a convex combination of the original data, where each weight is calculated based on the time-scaled and normalized $\ell_2$ distance between $\hatx$ and $\bfx_i$ belonging to the dataset $\calD$:
	\begin{equation}
		\label{eq:optimal}
		r_{\bftheta}^{\star}(\hatx; \sigma_t)=\sum_{i} u_i \bfx_i = \sum_{i} \frac{\exp \left(-\lVert \hatx - \bfx_i \rVert^2/2\sigma_t^2\right)}{\sum_{j}\exp \left(-\lVert \hatx - \bfx_j \rVert^2/2\sigma_t^2\right)} \bfx_i, \quad \sum_{i} u_i =1. 
	\end{equation}
\end{proposition}
The proof is provided in Appendix~\ref{app:optimal}. This equation appears to be highly similar to the well-known non-parametric mean shift \citep{fukunaga1975estimation,cheng1995mean,comaniciu2002mean,yamasaki2020mean}, and we provide a brief overview of it as follows. 
Mean shift with a Gaussian kernel and bandwidth $h$ iteratively adds a vector $\bfm(\bfx)-\bfx$, which points toward the maximum increase in the kernel density estimate $p_h(\bfx)=\frac{1}{N}\sum_{i}\calN(\bfx; \bfx_i, h^2\bfI)$, to the current point $\bfx$, \textit{i.e.}, $\bfx \leftarrow \left[\bfm(\bfx)-\bfx\right] + \bfx$. The \textit{mean vector} is
\begin{equation}
	\label{eq:mean_shift}
	\bfm (\bfx, h)=
	\sum_{i} v_i \bfx_i = \sum_{i} \frac{\exp \left(-\lVert \bfx - \bfx_i \rVert^2/2h^2\right)}{\sum_{j}\exp \left(-\lVert \bfx - \bfx_j \rVert^2/2h^2\right)} \bfx_i, \quad \bfx_i \in \calD, \quad \sum_{i} v_i =1. 
\end{equation}
From the interpretation of expectation-maximization (EM) algorithm, mean shift converges from almost any initial point with a generally linear convergence rate \citep{carreira2007gaussian}. 
As a mode-seeking algorithm, it has shown particularly successful in clustering \citep{cheng1995mean,carreira2015review}, image segmentation \citep{comaniciu2002mean} and video tracking \citep{comaniciu2003kernel}. 
%Compared with the fixed bandwidth $h$ in \eqref{eq:mean_shift}, the time-decreasing factor ($\sigma_t=t\downarrow$) in \eqref{eq:optimal} may further accelerate the convergence speed \citep{carreira2007gaussian}. 
In fact, the ODE-based sampling of diffusion models is closely connected with \textit{annealed mean shift}, or \textit{multi-bandwidth mean shift} \citep{shen2005annealedms}. Annealed mean shift, which was developed as a metaheuristic algorithm for global model seeking, initializes a sufficiently large bandwidth and monotonically decreases it in iterations \citep{shen2005annealedms}. By treating the optimal denoising output as the mean vector in annealed mean shift, we have the following proposition
\begin{proposition}
    \label{prop:mean_shift}
	Given an optimal probability flow ODE $\rmdx = \frac{\bfx - r_{\bftheta}^{\star}(\bfx; t)}{t}\rmd t$, one Euler step equals to a convex combination of the annealed mean shift and the current position. 
\end{proposition}
\begin{proof}
	Given a current sample $\hatx_{t_{n+1}}$, $n\in[0, N-1]$, the prediction of a single Euler step equals to 
	\begin{equation}
    \label{eq:connection}
		\hatx_{t_{n}}^{\star}
        =\hatx_{t_{n+1}} +  \frac{t_n - t_{n+1}}{t_{n+1}} \left(\hatx_{t_{n+1}} - r_{\bftheta}^{\star}(\hatx_{t_{n+1}}; t_{n+1})\right)
        =\frac{t_n}{t_{n+1}}\hatx_{t_{n+1}} +  \frac{t_{n+1}-t_n}{t_{n+1}} \bfm (\hatx_{t_{n+1}}; t_{n+1}),
	\end{equation}
	where $\hatx_{t_{n}}^{\star}$ denotes the generated sample from the optimal PF-ODE, and we treat the discrete time $t_{n+1}$ in $r_{\bftheta}^{\star}(\hatx_{t_{n+1}}; t_{n+1})$ as the annealing-like bandwidth of Gaussian kernel in \eqref{eq:mean_shift}.
	% and then replace it as one iteration of annealed mean shift $\bfm\left(\cdot\right)$ \citep{shen2005annealedms}. 
\end{proof}

Similarly, for the empirical PF-ODE in \eqref{eq:epf_ode}, each Euler step equals to a convex combination of the denoising output $r_{\bftheta}(\cdot)$ and the current position. 
%The ratio of timesteps $w(n)=(t_{n+1}-t_{n})/t_{n+1}$ in \eqref{eq:connection} actually reflects our preference for annealed mean shift over sticking to the current position at the time $t_{n+1}$. 
Since the optimal denoising output starts with a spurious mode (dataset mean) and converges toward a true mode over time, a reasonable choice is to gradually increase its weight in the sampling.
%Additionally, this ratio equals to the (backward) numerical differentiation of a monotone \textit{time-schedule function} $\log s(\cdot): \bbR\rightarrow\bbR$ at discrete time steps, and the approximation error disappears as $N\rightarrow \infty$ (see Appendix~\ref{app:time}). 
In this sense, various time-schedule functions (such as uniform, quadratic, polynomial \citep{song2019ncsn,song2021ddim,karras2022edm}) essentially boil down to different weighting functions. This interpretation inspires us to directly search proper weights rather than noise schedules with a parametric neural network for better visual quality \citep{kingma2021vdm}. 

%Therefore, we are hopefully able to learn a parametric neural network adaptively assign a proper to strike a good balance between fast sampling and generative quality. We leave it for future work. 

Proposition~\ref{prop:mean_shift} also implies that once a diffusion model has converged to the optimum, all ODE trajectories will be uniquely determined and governed by a bandwidth-varying mean shift. In this case, the forward (encoding) process and backward (decoding) process only depend on the data distribution and the given noise distribution, regardless of model architectures or training algorithms. Such a property was previously referred to as \textit{uniquely identifiable encoding} and empirically verified in \citep{song2021sde}, while we theoretically characterize the optimum with annealed mean shift, and thus reveal the asymptotic behavior of diffusion models.
% different models show the same results for the same encoding

Furthermore, we prove that under a mild condition, the sample likelihood keeps increasing unless $\hatx_{t_n}=r_{\bftheta}(\hatx_{t_n})$, whether the sample advances along the sampling trajectory or jumps into the denoising trajectory. 
This offers a theoretical guarantee about our observed geometric structures. 
\begin{theorem}
	\label{thm:likelihood}
	Suppose that $\left\lVert  r_{\bftheta}^{\star}(\hatx_{t_n}) - r_{\bftheta}(\hatx_{t_n}) \right\rVert \le \left\lVert r_{\bftheta}^{\star}(\hatx_{t_n}) - \hatx_{t_n}\right\rVert$ for a given sample $\hatx_{t_n}$. In the ODE-based sampling of diffusion models, the sample likelihood exhibits non-decreasing behavior, \textit{i.e.}, $p_{h}(r_{\bftheta}(\hatx_{t_n}))\ge p_{h}(\hatx_{t_n})$ and $p_{h}(\hatx_{t_{n-1}})\ge p_{h}(\hatx_{t_n})$ in terms of the kernel density estimate $p_{h}(\bfx)=\frac{1}{N}\sum_{i}\calN(\bfx; \bfx_i, h^2\bfI)$ with any positive bandwidth $h$.
\end{theorem}

%\vspace{-0.1cm}
\begin{wrapfigure}{r}{0.381\textwidth}
	\vspace{-0.3cm}
	\begin{center}
		\begin{subfigure}[t]{0.36\textwidth}
			\includegraphics[width=\textwidth]{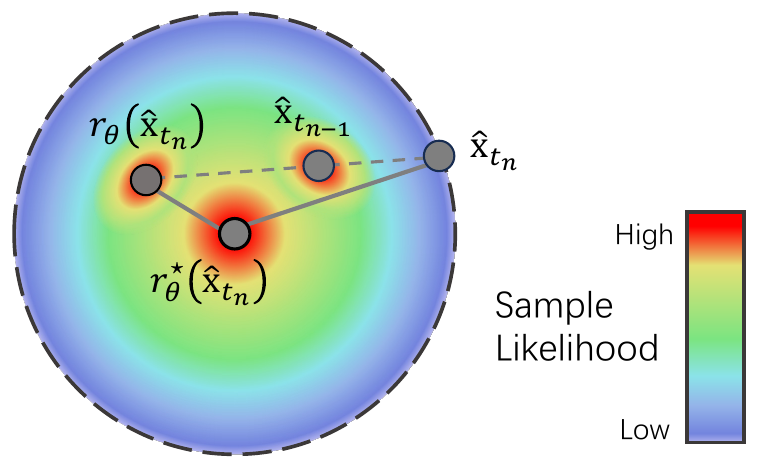}
		\end{subfigure}
		\begin{subfigure}[t]{0.36\textwidth}
			\includegraphics[width=\textwidth]{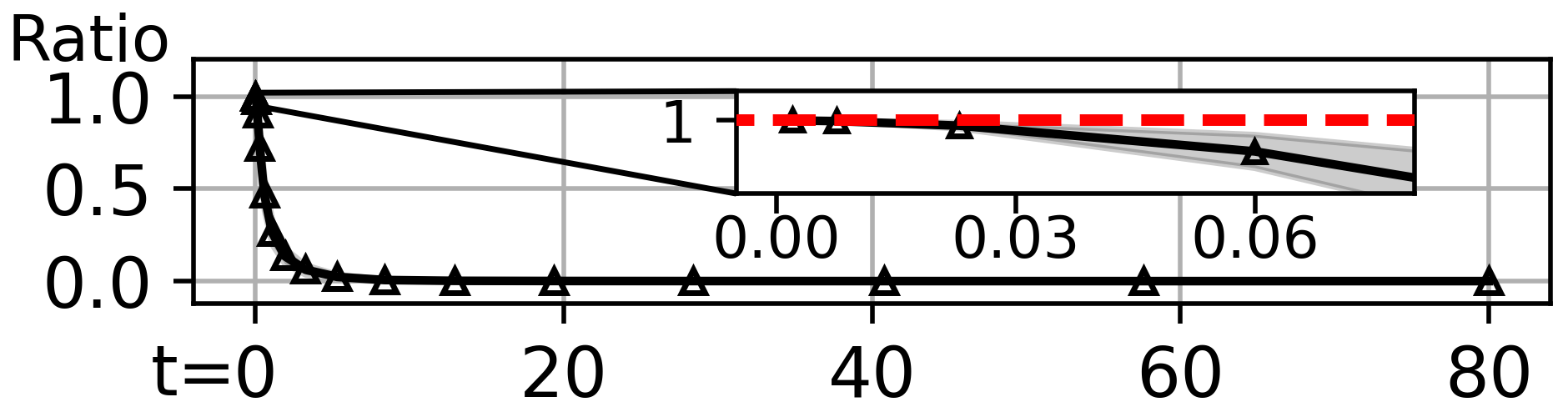}
		\end{subfigure}
	\end{center}
	%\vspace{-0.1cm}
	\caption{The likelihoods of $r_{\bftheta}(\hatx_{t_n})$ and $\hatx_{t_{n-1}}$ are larger than that of $\hatx_{t_n}$.
    %, where $r_{\bftheta}^{\star}(\hatx_{t_n})$ may not possess the highest likelihood within the sphere. 
    The ratio of $\left\lVert  r_{\bftheta}^{\star}(\hatx_{t_n}) - r_{\bftheta}(\hatx_{t_n}) \right\rVert$ to $\left\lVert r_{\bftheta}^{\star}(\hatx_{t_n}) - \hatx_{t_n}\right\rVert$ is consistently lower than one in the sampling trajectory. 
	%All samples in the sampling trajectory satisfy the assumption stated in Theorem~\ref{thm:likelihood}.
	}
	\vspace{-0.4cm}
	\label{fig:meanshift}
\end{wrapfigure}
The proof is provided in Appendix~\ref{app:likelihood} and a visual illustration is provided in Figure~\ref{fig:meanshift} (top). 
The assumption requires that our learned denoising output $r_{\bftheta}(\hatx_{t_n})$ falls within a sphere centered at the optimal denoising output $r_{\bftheta}^{\star}(\hatx_{t_n})$ with a radius of $\left\lVert r_{\bftheta}^{\star}(\hatx_{t_n})- \hatx_{t_n}\right\rVert$. This radius controls the maximum deviation of the learned denoising output and shrinks during the sampling process. In practice, the assumption is relatively easy to satisfy for a well-trained diffusion model, as shown in Figure~\ref{fig:meanshift} (bottom).

Therefore, each sampling trajectory monotonically converges ($p_{h}(\hatx_{t_{n-1}})\ge p_{h}(\hatx_{t_n})$), and its coupled denoising trajectory converges even faster ($p_{h}(r_{\bftheta}(\hatx_{t_n}))\ge p_{h}(\hatx_{t_n})$) in terms of the sample likelihood.
Given an empirical data distribution, Theorem~\ref{thm:likelihood} applies to any marginal distributions of our forward SDE $\{p_t(\bfx)\}_{t=0}^T$, which are actually a spectrum of kernel density estimates with the positive bandwidth $t$.
Besides, with the infinitesimal step size, Theorem~\ref{thm:likelihood} is further generalized into a continuous-time version.

We can also obtain the well-known monotone convergence property of mean shift, as presented in \citep{comaniciu2002mean,yamasaki2020mean}, from Theorem~\ref{thm:likelihood} when diffusion models are trained to achieve the optima.
\begin{corollary}
	\label{corollary:meanshift}
	We have $p_{h}(\bfm (\hatx_{t_n}))\ge p_{h}(\hatx_{t_n})$, when $r_{\bftheta}(\hatx_{t_n})=r_{\bftheta}^{\star}(\hatx_{t_n})=\bfm (\hatx_{t_n})$. 
\end{corollary}

%\begin{remark}
%Theorem~\ref{thm:likelihood} applies to any marginal distributions of our forward SDE $\{p_t(\bfx)\}_{t=0}^T$ that are a spectrum of kernel density estimates with the bandwidth as $t$, given the empirical data distribution. 
%\end{remark}

%Since kernel density estimate approaches the true data distribution xxx, we conclude that the above theorem holds for xx

\section{Diagnosis of Score Deviation}
\label{sec:diagnosis}
We simulate four new trajectories based on the optimal denoising output $r_{\bftheta}^{\star}(\cdot)$ to monitor the score deviation from the optimum. The first one is \textit{optimal sampling trajectory} $\{\hatx_{t}^{\star}\}$, where we generate samples as the sampling trajectory $\{\hatx_{t}\}$ by simulating \eqref{eq:epf_ode} but adopt $r^{\star}_{\bftheta}(\cdot)$ rather than $r_{\bftheta}(\cdot)$ for score estimation. The other three trajectories are simulated by tracking the (optimal) denoising output of each sample in $\{\hatx_{t}^{\star}\}$ or $\{\hatx_{t}\}$, and designated as $\{r_{\bftheta}(\hatx_{t}^{\star})\}$, $\{r^{\star}_{\bftheta}(\hatx_{t}^{\star})\}$, $\{r_{\bftheta}^{\star}(\hatx_{t})\}$. 
%where we perform the optimal jump from $\{\hatx_s^{\star}\}_{s_0}^{s_N}$;
% (1) \textit{optimal denoising of sampling trajectory} $\{r_{\bftheta}^{\star}(\hatx_s)\}_{s_N}^{s_1}$;
% %, where we perform the optimal jump for each sample of the sampling trajectory $\{\hatx_s\}_{s_0}^{s_N}$;
% (2) \textit{optimal sampling trajectory} $\{\hatx_s^{\star}\}_{s_N}^{s_0}$, where we generate samples as $\{\hatx_s\}_{s_N}^{s_0}$ in Section~\ref{subsec:trajectory} but replace $r_{\bftheta}$ with $r^{\star}_{\bftheta}$ for score estimation;
% (3) \textit{denoising of optimal sampling trajectory} $\{r_{\bftheta}(\hatx_s^{\star})\}_{s_N}^{s_1}$;
% %, where we calculate the model denoising output for $\{\hatx_s^{\star}\}_{s_0}^{s_N}$.
% and (4) \textit{optimal denoising trajectory} $\{r^{\star}_{\bftheta}(\hatx_s^{\star})\}_{s_N}^{s_1}$. %where we perform the optimal jump from $\{\hatx_s^{\star}\}_{s_0}^{s_N}$;
%In Figure~\ref{fig:diagnosis}, we visualize a certain diffusion process of a randomly-selected image to obtain its encoding $\hatx_{s_N}$ and provide the sampling trajectory, denoising trajectory and above four trajectories starting from this encoding for comparison. 
According to \eqref{eq:connection} and $t_0=0$, we have $\hatx_{t_0}^{\star}=r_{\bftheta}^{\star}(\hatx_{t_1}^{\star})$, and similarly, $\hatx_{t_0}=r_{\bftheta}(\hatx_{t_1})$. As $t\rightarrow 0$, $r^{\star}_{\bftheta}(\hatx_{t}^{\star})$ and $r_{\bftheta}^{\star}(\hatx_{t})$ serve as the approximate nearest neighbors of $\hatx_{t}^{\star}$ and $\hatx_{t}$ to the real data, respectively.
%the final samples of $\{\hatx_{t}^{\star}\}$ and $\{r_{\bftheta}^{\star}(\hatx_{t}^{\star})\}$ are equal, \textit{i.e.}, 

We calculate the deviation of denoising output to quantify the score deviation across all time steps using the $\ell_2$ distance, though they should differ by a factor $t^2$, and have the following observation: 
%for better comparison with the visualization results, 
%Their connection is discussed in Section~\ref{sec:background} and Appendix~\ref{app:eb}.
\begin{observation}
	The learned score is well-matched to the optimal score in the large-noise region, otherwise they may diverge or almost coincide depending on different regions. 
\end{observation}
In fact, our learned score has to moderately diverge from the optimum to guarantee the generative ability. Otherwise, the ODE-based sampling reduces to an approximate (single-step) annealed mean shift for global mode-seeking (see Section~\ref{sec:meanshift}), and simply replays the dataset. 
As shown in Figure~\ref{fig:diagnosis}, the nearest sample of $\hatx_{t_0}^{\star}$ to the real data is almost the same as itself, which indicates the optimal sampling trajectory has a very limited ability to synthesize novel samples. Empirically, score deviation in a small region is sufficient to bring forth a decent generative ability. 

From the comparison of $\{r_{\bftheta}(\hatx_t^{\star})\}$, $\{r_{\bftheta}^{\star}(\hatx_t^{\star})\}$ sequences in Figures~\ref{fig:diagnosis} and~\ref{fig:score_deviation}, we can clearly see that \textit{along the optimal sampling trajectory}, the deviation between the learned denoising output $r_{\bftheta}(\cdot)$ and its optimal counterpart $r_{\bftheta}^{\star}(\cdot)$ behaves differently in three successive regions:
the deviation starts off as almost negligible (about $10<t\le 80$), gradually increases (about $3<t\le10$), and then drops down to a low level once again (about $0\le t\le3$).
%with the trace-of-covariance quantification of training targets 
This phenomenon was also validated by a recent work \citep{xu2023stable} with a different perspective. 
We further observe that \textit{along the sampling trajectory}, this phenomenon disappears and the score deviation keeps increasing (see $\{r_{\bftheta}(\hatx_t)\}$, $\{r_{\bftheta}^{\star}(\hatx_t)\}$ sequences in Figures~\ref{fig:diagnosis} and~\ref{fig:score_deviation}). Additionally, samples in the latter half of $\{r_{\theta}^{\star}(\hatx_t)\}$ appear almost the same as the nearest sample of $\hatx_{t_0}$ to the real data, as shown in Figure~\ref{fig:diagnosis}. This indicates that our score-based model strives to explore novel regions, and synthetic samples in the sampling trajectory are quickly attracted to a real-data mode but do not fall into it.

\begin{figure}[t]
	\centering
	\begin{subfigure}[b]{0.98\textwidth}
		\includegraphics[width=\textwidth]{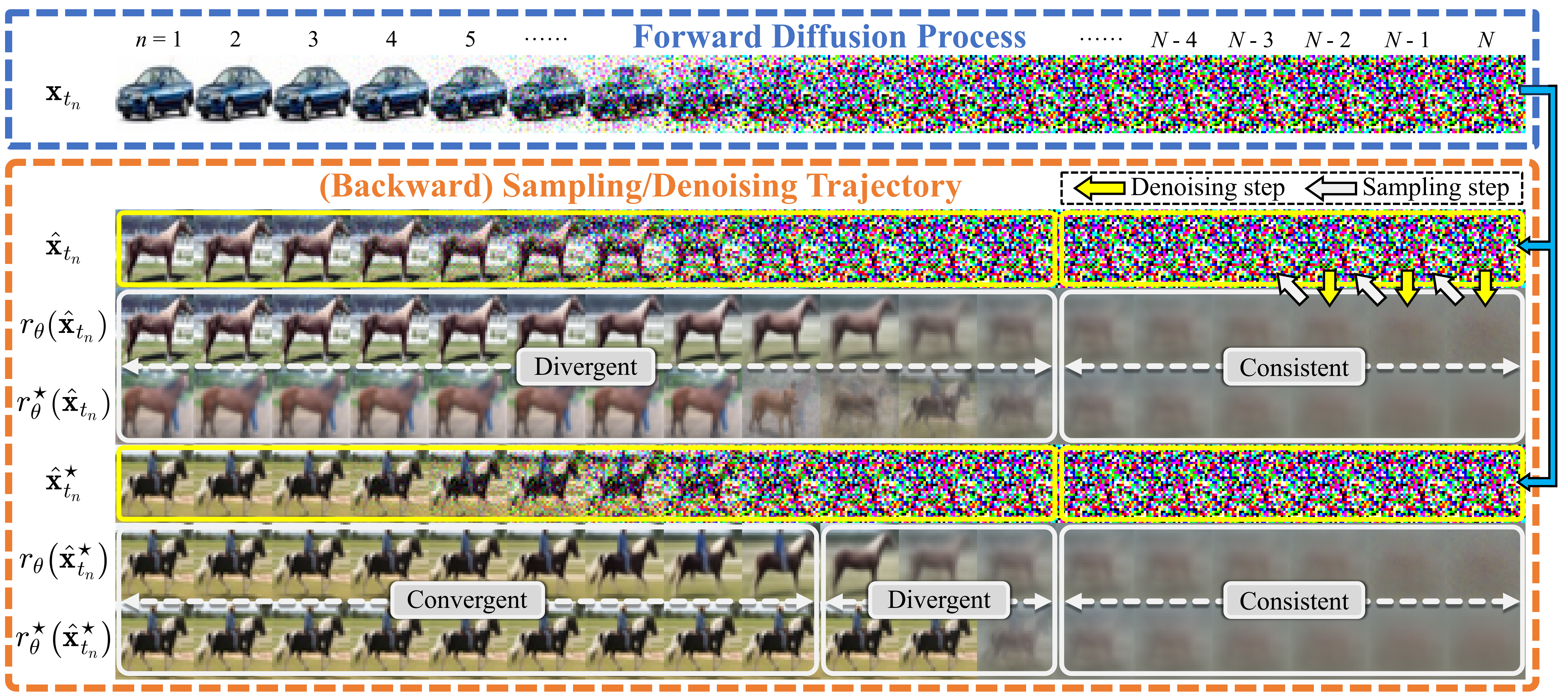}
	\end{subfigure}
	\begin{subfigure}[b]{0.98\textwidth}
		\includegraphics[width=\textwidth]{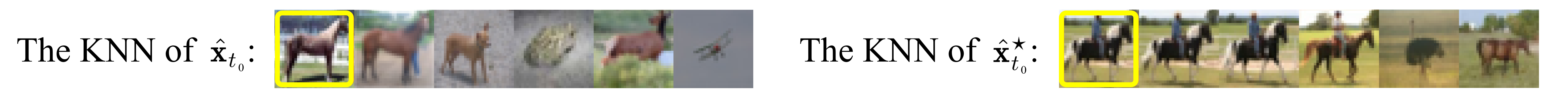}
	\end{subfigure}
	\caption{\textit{Top}: We visualize a forward diffusion process of a randomly-selected image to obtain its encoding $\hatx_{t_N}$ (first row) and simulate multiple trajectories starting from this encoding (other rows). 
	\textit{Bottom}: The k-nearest neighbors (k=5) of $\hatx_{t_0}$ and $\hatx_{t_0}^{\star}$ to real samples in the dataset.
	%The time ranges from $t_N$ to $t_1$ for better comparison.
	}
	\label{fig:diagnosis}
\end{figure}

\begin{figure}[t]
	\centering
	\includegraphics[width=\textwidth]{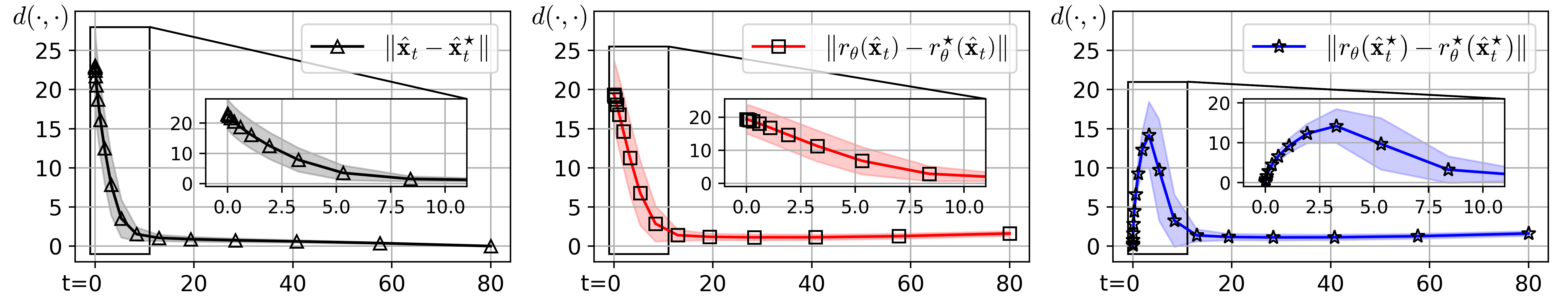}
	\caption{The deviation (measured by $\ell_2$ distance) of outputs from their corresponding optima.}
	\label{fig:score_deviation}
\end{figure}

%observations and 
\section{Discussion} 
Although all discussions above are provided in the context of VE-SDEs, the similar conclusions also exist for other types of diffusion models (\textit{e.g.}, VP-SDEs). In fact, a family of diffusion models with the same signal-to-noise ratio are closely connected, and we can transform other model types into the VE counterparts with change-of-variables formula (see Appendix~\ref{subsec:app_cov}). Therefore, we merely focus on the mathematical properties and geometric behaviors of VE-SDEs to simplify our discussions.

\section{Conclusion}
In this paper, we present a geometric perspective on (variance-exploding) diffusion models, aiming for a fundamental grasp of their sampling dynamics in an intuitive way. We find that intriguingly, the data distribution and the noise distribution are smoothly bridged by a quasi-linear sampling trajectory and another implicit denoising trajectory that allows faster convergence. 
These two trajectories are deeply coupled, since each second-order ODE-based sampler along the sampling trajectory corresponds to a specific finite difference of the denoising trajectory.
We further characterize the asymptotic behavior of diffusion models by formulating a theoretical relationship between the optimal ODE-based sampling and the anneal mean shift. 
%Additionally, some preliminary applications implied by our brand new geometric perspective are provided.
%, including the identification of score deviation, the development of fast sampler, the re-examination of latent interpolation, and the re-interpret of existing distillation-based fast sampling techniques. 
We hope that our theoretical insights and empirical observations help to better harness the power of score/diffusion-based generative models and facilitate more rapid development in effective training and fast sampling techniques. 

\textbf{Future work.} The intensively used empirical ODE and its optimal version both behave as a typical non-autonomous non-linear system \cite{khalil2002nonlinear}, which offers a potential approach to discover and analyze more properties (\textit{e.g.}, stability) of the diffusion sampling with tools from control theory. 

%\section{Broader Impact}

%We expect that our geometric perspective and theoretical insights will advance the understanding of score/diffusion-based generative models and further facilitate the development of more powerful large-scale generative models. This may in turn be used to amplify negative societal effects, such as synthesizing and spreading fake images and videos for malicious purposes.

\bibliographystyle{alpha}
\bibliography{neurips_ref}

\newpage
\appendix
\section*{Appendix}
\setcounter{tocdepth}{4}
\tableofcontents
\allowdisplaybreaks
\newpage

%\setcounter{tocdepth}{4}
%\tableofcontents
%\allowdisplaybreaks
%\newpage

\section{Useful Results in Diffusion Models}

\subsection{An Empirical Bayes Perspective on the Connection between DSM and DAE}
\label{subsec:app_eb}
In fact, the deep connection between \textit{denoising autoencoder} (DAE) and \textit{denoising score matching} (DSM) already existed within the framework of \textit{empirical Bayes}, which was developed more than half a century ago \citep{robbins1956eb}. We next recap the basic concepts of these terminologies \citep{vincent2008extracting,bengio2013generalized,alain2014regularized,vincent2011dsm,raphan2011least,efron2010large,saremi2019neuralEB,karras2022edm} and discuss how they are interconnected, especially in the common setting of diffusion models (Gaussian kernel density estimation). 
All theoretical results are followed by concise proofs for completion.

Suppose that $\bfx$ is sampled from an underlying data distribution $p_d(\bfx)$, and the corresponding noisy sample $\hatx$ is generated by a Gaussian perturbation kernel $\hatx \sim p_t(\hatx|\bfx)=\calN(\hatx; \bfx, \sigma_t^2\bfI)$ with an isotropic variance $\sigma_t^2$. We can equivalently denote the noisy sample as $\hatx = \bfx + \bfz$ with $\bfz \sim \calN(\mathbf{0}, \sigma_t^2\bfI)$ based on the re-parameterization trick. In diffusion models, we actually have a spectrum of such Gaussian kernels but we just take one of them as an example to simplify the following discussion. 
\begin{lemma}
	\label{lemma:dae}
	The optimal estimator $r_{\bftheta}^{\star}\left(\hatx; \sigma_t\right)$, also known as Bayesian least squares estimator, of the denoising autoencoder (DAE) objective is the conditional expectation $\bbE\left(\bfx|\hatx\right)$
	\begin{equation}
		\label{eq:app_dae}
		\begin{aligned}
			\calL_{\DAE} &=\bbE_{\bfx \sim p_d(\bfx)} \bbE_{\bfz \sim \calN(\mathbf{0}, \sigma_t^2 \bfI)} \lVert r_{\bftheta}\left(\hatx; \sigma_t\right) - \bfx  \rVert^2_2\\
			&=\int{p_d(\bfx) p_t(\hatx|\bfx)} \lVert r_{\bftheta}\left(\hatx; \sigma_t\right) - \bfx  \rVert^2_2\\
			&=\int{p_t(\hatx) p_t(\bfx|\hatx)} \lVert r_{\bftheta}\left(\hatx; \sigma_t\right) - \bfx  \rVert^2_2.
		\end{aligned}
	\end{equation} 
\end{lemma}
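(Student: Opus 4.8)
The plan is to prove Lemma~\ref{lemma:dae} in two stages: first rewrite the DAE objective as an integral over the joint density in two equivalent ways, and then minimize the resulting pointwise quadratic.

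First I would expand the expectations in $\calL_{\DAE}$ explicitly. Since $\bfx \sim p_d(\bfx)$ and $\hatx = \bfx + \bfz$ with $\bfz \sim \calN(\mathbf{0}, \sigma_t^2\bfI)$, the pair $(\bfx, \hatx)$ has joint density $p_d(\bfx)\, p_t(\hatx|\bfx)$, where $p_t(\hatx|\bfx) = \calN(\hatx; \bfx, \sigma_t^2\bfI)$. This immediately gives the second line of \eqref{eq:app_dae}. Then I would invoke Bayes' rule (equivalently, Fubini to swap the order of integration), writing the same joint density as $p_t(\hatx)\, p_t(\bfx|\hatx)$, where $p_t(\hatx) = \int p_d(\bfx)\, p_t(\hatx|\bfx)\,\rmdx$ is the marginal (Parzen window) density and $p_t(\bfx|\hatx)$ is the posterior; this yields the third line.

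Next I would argue that for each fixed $\hatx$, the inner integral $\int p_t(\bfx|\hatx) \lVert r_{\bftheta}(\hatx;\sigma_t) - \bfx\rVert_2^2 \, \rmdx$ is minimized, over the value $r_{\bftheta}(\hatx;\sigma_t) \in \bbR^d$, by the conditional mean $\bbE[\bfx|\hatx]$. This is the standard bias–variance decomposition: adding and subtracting $\bbE[\bfx|\hatx]$ inside the norm and expanding gives $\lVert r_{\bftheta}(\hatx;\sigma_t) - \bbE[\bfx|\hatx]\rVert_2^2 + \bbE[\lVert\bfx - \bbE[\bfx|\hatx]\rVert_2^2 \mid \hatx]$, with the cross term vanishing. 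The first term is nonnegative and zero exactly when $r_{\bftheta}(\hatx;\sigma_t) = \bbE[\bfx|\hatx]$, while the second term is independent of $r_{\bftheta}$. Since this holds pointwise for (almost) every $\hatx$ under $p_t$, a network with sufficient capacity achieves the global minimum by setting $r_{\bftheta}^{\star}(\hatx;\sigma_t) = \bbE[\bfx|\hatx]$, which is the claim.

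The main obstacle is not any deep difficulty but rather stating the regularity assumptions cleanly: one must assume enough integrability ($p_d$ has finite second moment) so that $\bbE[\bfx|\hatx]$ is well-defined and Fubini applies, and one should note that the optimality is over all measurable functions, so the ``sufficient capacity'' hypothesis on $r_{\bftheta}$ is what licenses passing from the pointwise minimizer to the global minimizer of $\calL_{\DAE}$. I would keep these remarks brief since the paper treats them informally.
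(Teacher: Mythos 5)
Your proof is correct. The first part of your argument — rewriting $\calL_{\DAE}$ as an integral against the joint density $p_d(\bfx)p_t(\hatx|\bfx)$ and then factoring it as $p_t(\hatx)p_t(\bfx|\hatx)$ via Bayes' rule — is exactly what the paper does and is what makes the pointwise minimization over $\hatx$ possible. Where you diverge is in how you establish that $\bbE[\bfx|\hatx]$ minimizes the inner integral: you use the bias--variance decomposition (complete the square around $\bbE[\bfx|\hatx]$, note the cross term vanishes, observe the residual is independent of $r_{\bftheta}$), whereas the paper takes the variational route of setting $\nabla_{r_{\bftheta}(\hatx;\sigma_t)}\calL_{\DAE}=\mathbf{0}$ and solving the resulting first-order condition. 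Both are standard. Your decomposition has the small advantage of exhibiting global optimality directly — the objective is manifestly a nonnegative quadratic in $r_{\bftheta}(\hatx;\sigma_t)$ plus a constant — rather than only identifying a stationary point and implicitly relying on convexity; it also makes the ``sufficient capacity'' caveat transparent, since the argument is pointwise over $\hatx$. The paper's first-order computation is shorter and matches the pattern it reuses for the DSM/ESM lemma that follows, which is presumably why it was chosen. Your remarks on integrability and measurability are reasonable hygiene that the paper omits.
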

\begin{proof}
	The solution of this least squares estimation can be easily obtained by setting the derivative of $\calL_{\DAE}$ equal to zero. For each noisy sample $\hatx$, we have
	\begin{equation}
		\begin{aligned}
			\nabla_{r_{\bftheta}\left(\hatx; \sigma_t\right)} \calL_{\DAE}&=\mathbf{0}\\
			\int{p_t(\bfx|\hatx)}\left(r_{\bftheta}^{\star}\left(\hatx; \sigma_t\right) - \bfx\right)\rmdx&=\mathbf{0}\\
			\int{p_t(\bfx|\hatx)}r_{\bftheta}^{\star}\left(\hatx; \sigma_t\right) \rmdx&=
			\int{p_t(\bfx|\hatx)}\bfx\rmdx\\	
			%r_{\bftheta}^{\star}\left(\hatx; \sigma_t\right)&=\int {p_t(\bfx|\hatx)}\bfx \rmdx \\
			r_{\bftheta}^{\star}\left(\hatx; \sigma_t\right)&= \bbE\left(\bfx|\hatx\right).
		\end{aligned}
	\end{equation}
	%That is to say, the optimal least squares estimator for the corrupted sample $\hatx$ is $r_{\bftheta}^{\star}\left(\hatx; \sigma_t\right)=\bbE\left(\bfx|\hatx\right)$.
\end{proof}

\begin{lemma}
	Given a parametric score function $\nablahatx \log p_{\theta, t}(\hatx)$ of the marginal distribution $p_{t}(\hatx)$, it will converge to the true score function $\nablahatx \log p_{t}(\hatx)$ whether it is trained with the denoising score matching (DSM) objective or the explicit score matching (ESM) objective 
	\begin{equation}
		\label{eq:app_dsm}
		\calL_{\DSM}=\bbE_{\bfx \sim p_d(\bfx)} \bbE_{\bfz \sim \calN(\mathbf{0}, \sigma_t^2 \bfI)} \lVert \nablahatx \log p_{\theta, t}(\hatx)- \nablahatx \log p_t(\hatx|\bfx) \rVert^2_2,
	\end{equation}
	\begin{equation}
		\label{eq:app_esm}
		\calL_{\ESM}=\bbE_{\hatx \sim p_t(\hatx)} \lVert \nablahatx \log p_{\theta, t}(\hatx)- \nablahatx \log p_t(\hatx) \rVert^2_2.
	\end{equation}
\end{lemma}
\begin{proof}
	Similar to the Lemma \ref{lemma:dae}, the optimal score function trained with the DSM objective is 
	\begin{equation}
		\begin{aligned}
			\nablahatx \log p_{\theta, t}^{\star}(\hatx) &= \bbE_{p_t(\bfx|\hatx)} \nablahatx \log p_t(\hatx|\bfx)\\
			%&= \int {p_t(\bfx|\hatx)} \nablahatx \log p_t(\hatx|\bfx) \rmdx \\
			&= \int \frac{p_d(\bfx)p_t(\hatx|\bfx)}{p_t(\hatx)} \nablahatx \log p_t(\hatx|\bfx) \rmdx \\
			&= \int \frac{p_d(\bfx)}{p_t(\hatx)} \nablahatx \, p_t(\hatx|\bfx) \rmdx
			= \frac{\nablahatx\int p_d(\bfx)p_t(\hatx|\bfx) \rmdx}{p_t(\hatx)}  \\
			&= \frac{\nablahatx \, p_t(\hatx)}{p_t(\hatx)}
			= \nablahatx \log p_t(\hatx).
		\end{aligned}
	\end{equation}
	By setting the derivative of $\calL_{\ESM}$ equal to zero, we can also obtain $\nablahatx\log p_{\theta, t}^{\star}(\hatx)=\nablahatx \log p_t(\hatx)$.
\end{proof}

%(measurement distribution)
\begin{lemma}
	\label{lemma:posterior}
	The conditional expectation in Lemma~\ref{lemma:dae} can be derived from the score function of the marginal distribution $p_t(\hatx)$, i.e., $\bbE\left(\bfx|\hatx\right)=\hatx + \sigma_t^2\nablahatx \log p_t(\hatx)$.
\end{lemma}
\begin{proof}
	\begin{equation}
		\begin{aligned}
			p_t(\hatx)&=\int p_d(\bfx)p_t(\hatx|\bfx)\rmdx\\
			\nablahatx p_t(\hatx)
			&=\int \frac{\left(\bfx-\hatx\right)}{\sigma_t^2}		
			p_d(\bfx)p_t(\hatx|\bfx)\rmdx\\
			\sigma_t^2\nablahatx p_t(\hatx)&=\int \bfx p_d(\bfx)p_t(\hatx|\bfx)\rmdx -\int \hatx p_d(\bfx)p_t(\hatx|\bfx)\rmdx\\
			\sigma_t^2\nablahatx p_t(\hatx)&=\int \bfx p_d(\bfx)p_t(\hatx|\bfx)\rmdx -\hatx p_t(\hatx)\\
			\sigma_t^2\frac{\nablahatx p_t(\hatx)}{p_t(\hatx)}&=\int \bfx p_t(\bfx|\hatx)\rmdx -\hatx \\
			\bbE\left(\bfx|\hatx\right)&=\hatx + \sigma_t^2\nablahatx \log p_t(\hatx).
		\end{aligned}
	\end{equation}
\end{proof}
\begin{remark}
	The above intriguing connection implies that we can obtain the posterior expectation $\bbE\left(\bfx|\hatx\right)=\int \bfx p_t(\bfx|\hatx)\rmdx$ without explicit accessing to the data density $p_d(\bfx)$ (prior distribution). 
\end{remark}
\begin{remark}
	Furthermore, we can estimate the marginal distribution $p_t(\hatx)$ or its score function $\nablahatx \log p_t(\hatx)$ from a set of observed noisy samples to approximate the posterior expectation $\bbE\left(\bfx|\hatx\right)$, or say the optimal denoising output $r_{\bftheta}^{\star}\left(\hatx; \sigma_t\right)$. This exactly shares the philosophy of empirical Bayes that learns a Bayesian prior distribution from the observed data to help statistical inference. 
\end{remark}
We provide a brief overview of empirical Bayes as follows.

In contrast to the standard Bayesian methods that specify a \textit{prior} (such as Gaussian, or non-informative distribution) before observing any data, empirical Bayes methods advocate estimating the prior \textit{empirically} from the data itself, whether in a non-parametric or parametric way \citep{robbins1956eb,morris1983parametric,efron2010large,raphan2011least}. Using Bayes' theorem, the estimated prior is then converted into the \textit{posterior} by incorporating a \textit{likelihood function} calculated in light of the observed data, as the standard Bayesian methods do. 
%To circumvent the tricky problem of prior selection,  different from the standard Bayesian methods in the aspect of the prior distribution. 
\begin{remark}
	Learning the marginal distribution in Lemma~\ref{lemma:posterior} from the observed data is termed as $f$-modeling in the empirical Bayes literature, in contrast to $g$-modeling that learns the prior distribution from the observed data, see Chapter 21 of a book for more details \citep{efron2016computer}.  
\end{remark}
\begin{proposition}
	The denoising autoencoder (DAE) objective equals to the denoising score matching (DSM) objective, up to a scaling factor (squared variance), i.e., 
	$\calL_{\DAE}=\sigma_t^4\calL_{\DSM}$.
\end{proposition}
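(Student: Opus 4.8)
The plan is to establish the identity $\calL_{\DAE}=\sigma_t^4\calL_{\DSM}$ by relating the two integrands pointwise, using the explicit form of the Gaussian perturbation kernel to convert a displacement $\hatx-\bfx$ into a score $\nablahatx\log p_t(\hatx|\bfx)$, and then invoking the optimality characterizations already proved in Lemma~\ref{lemma:dae} and Lemma~\ref{lemma:posterior}. Concretely, for the Gaussian kernel $p_t(\hatx|\bfx)=\calN(\hatx;\bfx,\sigma_t^2\bfI)$ we have the elementary fact $\nablahatx\log p_t(\hatx|\bfx)=-(\hatx-\bfx)/\sigma_t^2=(\bfx-\hatx)/\sigma_t^2$, so that $\bfx = \hatx + \sigma_t^2\,\nablahatx\log p_t(\hatx|\bfx)$. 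This lets me rewrite the DAE target $\bfx$ appearing inside $\lVert r_{\bftheta}(\hatx;\sigma_t)-\bfx\rVert_2^2$ purely in terms of the conditional score.

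First I would introduce the auxiliary parametrization $r_{\bftheta}(\hatx;\sigma_t)=\hatx+\sigma_t^2\,\nablahatx\log p_{\theta,t}(\hatx)$, i.e.\ identify any candidate denoiser with a candidate score via the same affine map that Lemma~\ref{lemma:posterior} applies to the optimum. Substituting both this and $\bfx=\hatx+\sigma_t^2\nablahatx\log p_t(\hatx|\bfx)$ into the DAE integrand gives
\begin{equation}
\lVert r_{\bftheta}(\hatx;\sigma_t)-\bfx\rVert_2^2
=\left\lVert \sigma_t^2\,\nablahatx\log p_{\theta,t}(\hatx)-\sigma_t^2\,\nablahatx\log p_t(\hatx|\bfx)\right\rVert_2^2
=\sigma_t^4\left\lVert \nablahatx\log p_{\theta,t}(\hatx)-\nablahatx\log p_t(\hatx|\bfx)\right\rVert_2^2.
\end{equation}
Taking $\bbE_{\bfx\sim p_d}\bbE_{\bfz\sim\calN(\mathbf{0},\sigma_t^2\bfI)}$ of both sides and comparing with \eqref{eq:app_dsm} yields $\calL_{\DAE}=\sigma_t^4\calL_{\DSM}$ immediately, since the scalar $\sigma_t^4$ pulls out of the expectation. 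The one subtlety is that this affine correspondence between denoisers and scores is a bijection on the relevant function space, so minimizing $\calL_{\DAE}$ over $r_{\bftheta}$ is the same problem as minimizing $\sigma_t^4\calL_{\DSM}$ over $\nablahatx\log p_{\theta,t}$; this also makes the identity consistent with the already-established facts that the DAE optimum is $\bbE(\bfx\mid\hatx)$ (Lemma~\ref{lemma:dae}) and the DSM optimum is $\nablahatx\log p_t(\hatx)$, the two being linked exactly by Lemma~\ref{lemma:posterior}.

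The main obstacle, such as it is, is bookkeeping rather than depth: one must be careful that the identity is meant as an equality of functionals under the stated affine reparametrization (otherwise the left side ranges over denoisers and the right over score models, and ``equals'' needs interpretation), and one should note the $\sigma_t^4$ versus $\sigma_t^2$ discrepancy in the literature stems from whether $\calL_{\DSM}$ is written with $\nablahatx\log p_t(\hatx|\bfx)$ or with the whitened residual $(\bfx-\hatx)/\sigma_t$. With the convention fixed as in \eqref{eq:app_dsm}, the factor is $\sigma_t^4$ and the proof is the three-line substitution above followed by taking expectations.
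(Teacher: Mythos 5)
Your proposal is correct and follows essentially the same route as the paper's proof: both substitute the affine reparametrization $r_{\bftheta}(\hatx;\sigma_t)=\hatx+\sigma_t^2\nablahatx\log p_{\theta,t}(\hatx)$ and use the Gaussian identity $\nablahatx\log p_t(\hatx|\bfx)=-(\hatx-\bfx)/\sigma_t^2$ to factor out $\sigma_t^4$. Your explicit remark that the correspondence between denoisers and score models is a bijection (so the identity is one of functionals under that reparametrization) is a point the paper leaves implicit, but the argument is the same.
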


\begin{proof}
	\begin{equation}
		\begin{aligned}
			\calL_{\DAE} &=\bbE_{\bfx \sim p_d} \bbE_{\bfz \sim \calN(\mathbf{0}, \sigma_t^2 \bfI)} \lVert r_{\bftheta}\left(\hatx; \sigma_t\right) - \bfx  \rVert^2_2\\
			&=\bbE_{\bfx \sim p_d} \bbE_{\bfz \sim \calN(\mathbf{0}, \sigma_t^2 \bfI)} \lVert \hatx + \sigma_t^2\nablahatx \log p_{\theta, t}(\hatx) - \bfx  \rVert^2_2\\
			&=\sigma_t^4\bbE_{\bfx \sim p_d} \bbE_{\bfz \sim \calN(\mathbf{0}, \sigma_t^2 \bfI)} \lVert \nablahatx \log p_{\theta, t}(\hatx)+ \frac{\hatx - \bfx}{\sigma_t^2}  \rVert^2_2\\
			&=\sigma_t^4\bbE_{\bfx \sim p_d} \bbE_{\bfz \sim \calN(\mathbf{0}, \sigma_t^2 \bfI)} \lVert \nablahatx \log p_{\theta, t}(\hatx)- \nablax \log p_t(\hatx|\bfx) \rVert^2_2 \\
			&= \sigma_t^4\calL_{\rm DSM},
		\end{aligned}
	\end{equation}
	where we adopt $\hatx + \sigma_t^2\nablahatx \log p_{\theta, t}(\hatx)$ as a parametric empirical Bayes estimator to estimate $\bbE\left(\bfx|\hatx\right)$, which can be compactly written as $r_{\bftheta}\left(\hatx; \sigma_t\right)$. In fact, we have $r_{\bftheta}\left(\hatx; \sigma_t\right)\rightarrow r_{\bftheta}^{\star}\left(\hatx; \sigma_t\right)$ as soon as $\nablahatx \log p_{\theta, t}(\hatx)\rightarrow\nablahatx \log p_{\theta, t}^{\star}(\hatx)=\nablahatx \log p_{t}(\hatx)$, and vice versa.
\end{proof}
\begin{remark}
	The relationship between denoising autoencoder (DAE), denoising score matching (DSM), and empirical Bayes are summarized as
	\begin{equation}
		\rlap{$\underbrace{\phantom{r_{\bftheta}\left(\hatx; \sigma_t\right)\rightarrow r_{\bftheta}^{\star}\left(\hatx; \sigma_t\right) = \bbE\left(\bfx|\hatx\right)}}_{\DAE}$} 
		r_{\bftheta}\left(\hatx; \sigma_t\right)\rightarrow r_{\bftheta}^{\star}\left(\hatx; \sigma_t\right) =   \overbrace{\bbE\left(\bfx|\hatx\right)= \underbrace{\hatx + \sigma_t^2\nablahatx \log p_t(\hatx)\leftarrow \hatx + \sigma_t^2 \nablahatx \log p_{\theta, t}(\hatx)}_{\DSM}}^{\mathrm{Empirical\, Bayes}}.
	\end{equation}
\end{remark}

%\subsection{The Weighting Mechanism in the Training of Diffusion Models}

\subsection{The Change-of-Variables Formula in Diffusion Models}
\label{subsec:app_cov}
In this section, we show that various diffusion models sharing the same signal-to-noise ratio (SNR) are closely connected with the change-of-variables formula. 
In particular, all other model types (\textit{e.g.}, variance-preserving (VP) diffusion process used in DDPMs \citep{ho2020ddpm}) can be transformed into the variance-exploding (VE) counterparts. Therefore, we merely focus on the mathematical properties and geometric behaviors of VE-SDEs in the main text to simplify our discussions. 

We first recap the widely used forward SDEs in diffusion models and their basic notations as follows
\begin{equation}
	\label{eq:app_forward}
	\rmdx = \bff(t)\bfx \rmd t + g(t) \rmd \bfw_t, \qquad \bff(\cdot): \bbR \rightarrow \bbR, \quad g(\cdot): \bbR \rightarrow \bbR,
\end{equation}
where $\bfw_t$ is the standard Wiener process; $\bff(t)$ and $g(t)$ are drift and diffusion coefficients, respectively \citep{oksendal2013stochastic,song2021sde}. 
%We denote the distribution of $\bfx_t$ as $p_t(\bfx)$ and such an It\^{o} SDE smoothly transforms the data distribution $p_0(\bfx)=p_d(\bfx)$ to the (approximate) noise distribution $p_T(\bfx)\approx p_n(\bfx)$ in a forward manner. 
The perturbation kernels 
%are all Gaussian distributions with the following expression
\begin{equation}
	\label{eq:app_kernel}
	p_{0t}\left(\bfx_t | \bfx_0 \right)= \calN\left(\bfx_t ; s(t)\bfx_0, s(t)^2\sigma(t)^2\bfI\right), 
\end{equation}
are derived with the standard techniques in differential equations \citep{karras2022edm} and the coefficients $s(t)$ and $\sigma(t)$ in \eqref{eq:app_kernel} are expressed with drift and diffusion coefficients $f(t)$ and $g(t)$:
\begin{equation}
	s(t) = \exp \left(\int_{0}^{t} f(\xi) \rmd \xi\right), \quad \text{and} \quad \sigma(t) = \sqrt{\int_{0}^{t} \left[\frac{g(\xi)}{s(\xi)}\right]^2\rmd \xi}.
\end{equation}
Or equivalently, we can rewrite drift and diffusion coefficients $f(t)$ and $g(t)$ with $s(t)$ and $\sigma(t)$:
\begin{equation}
	f(t) = \frac{\rmd \log s(t)}{\rmd t}, \quad \text{and} \quad g(t)=s(t)\sqrt{\frac{\rmd \left[\sigma^2(t)\right]}{\rmd t}}. 
\end{equation}
Therefore, the forward SDEs in \eqref{eq:app_forward} can be alternatively expressed in terms of $s(t)$ and $\sigma(t)$:
\begin{equation}
	\label{eq:app_sde}
	\rmdx =\frac{\rmd \log s(t)}{\rmd t}\bfx \, \rmd t + s(t)\sqrt{\frac{\rmd \left[\sigma^2(t)\right]}{\rmd t}}\ \rmd \bfw_t.
\end{equation}
An importance implication from \eqref{eq:app_kernel} is that different diffusion models sharing the same $\sigma(t)$ actually have the same signal-to-noise ratio (SNR), since $\text{SNR}=s(t)^2/\left[s(t)^2\sigma(t)^2\right]=1/\left[\sigma(t)^2\right]$. 

The standard notions of VP-SDEs \citep{ho2020ddpm,song2021ddim,song2021sde} can be recovered by setting $s(t)=\sqrt{\alpha(t)}$, $\sigma(t)=\sqrt{\frac{1-\alpha(t)}{\alpha(t)}}$ in \eqref{eq:app_kernel} and \eqref{eq:app_sde}, and $\beta(t) = -\frac{1}{\alpha(t)}\frac{\rmd \alpha(t)}{\rmd t}$. We then have
\begin{equation}
	\begin{aligned}
		\bfx_t &= \sqrt{\alpha(t)}\ \bfx_0 + \sqrt{1-\alpha(t)}\ \bfeps, \quad\bfeps\sim \calN(\mathbf{0}, \bfI), \\ 
		\rmdx &= \frac{1}{2\alpha(t)}\frac{\rmd \alpha(t)}{\rmd t}\bfx \, \rmd t + \sqrt{-\frac{1}{\alpha(t)}\frac{\rmd \alpha(t)}{\rmd t}}\ \rmd \bfw_t,\\
		\rmdx &= -\frac{1}{2}\beta(t)\bfx \, \rmd t +  \sqrt{\beta(t)}\bfw_t.
	\end{aligned}
\end{equation}
Similarly, the standard notions of VE-SDEs \citep{song2019ncsn,song2021sde} can be recovered by setting $s(t)=1$, and we have
\begin{equation}
	\begin{aligned}
		\bfx_t &= \bfx_0 + \sigma(t)\bfeps, \quad\bfeps\sim \calN(\mathbf{0}, \bfI), \\ 
		\rmdx &=  \sqrt{\frac{\rmd \left[\sigma^2(t)\right]}{\rmd t}}\ \rmd \bfw_t.
	\end{aligned}
\end{equation}
The above examples inspire us to eliminate the effect of $s(t)$ if we hope to transform any other types of diffusion models into the VE counterparts, \textit{i.e.}, $f(t)=0$. 

\begin{proposition}
	A diffusion model defined as \eqref{eq:app_sde} can be transformed into its VE counterpart with the change-of-variables formula $\barx=\bfx / s(t)$, keeping the SNR unchanged.
\end{proposition}

\begin{proof}[Sketch of Proof]
	By setting $\barx=\bfx / s(t)$ in \eqref{eq:app_sde}, we have\footnote{Mathematically, we should use It\^{o}'s lemma to establish the equivalence. See Remark 2.1 of ~\cite{chen2024trajectory}.}
	\begin{equation}
		\begin{aligned}
			\frac{\rmd \left[\barx s(t)\right]}{\rmd t} &=\frac{\rmd \log s(t)}{\rmd t}\left[\barx s(t)\right] + s(t)\sqrt{\frac{\rmd \left[\sigma^2(t)\right]}{\rmd t}} \frac{\rmd \bfw_t}{\rmd t} \\\
			\frac{\rmd \barx}{\rmd t}s(t) + \cancel{\frac{\rmd s(t) }{\rmd t}\barx} &=\cancel{\frac{\rmd \log s(t)}{\rmd t}\left[\barx s(t)\right]} + s(t)\sqrt{\frac{\rmd \left[\sigma^2(t)\right]}{\rmd t}} \frac{\rmd \bfw_t}{\rmd t},\\
			\rmd \barx &= \sqrt{\frac{\rmd \left[\sigma^2(t)\right]}{\rmd t}}\ \rmd \bfw_t.
		\end{aligned}
	\end{equation}
\end{proof}
In the above derivation, the $\sigma(t)$ in the new VE-SDE ($\barx$-space) is exactly the same as the $\sigma(t)$ used in the original SDE ($\bfx$-space), and thus the SNR remains unchanged. 

\section{Second-Order ODE Samplers as Finite Differences of Denoising Trajectory}
\label{sec:app_second_order}
In the past several years, various second-order ODE-based samplers have been developed to accelerate the sampling speed of diffusion models \citep{song2021ddim,song2021sde,karras2022edm,lu2022dpm,zhang2023deis,zhou2024fast}.  
We next demonstrate that each of them can be rewritten as a specific way to perform finite difference of the denoising trajectory. 
The PF-ODEs of the sampling trajectory \eqref{eq:epf_ode} and denoising trajectory \eqref{eq:denoising_ode} are provided as follows
\begin{equation*}	
	\text{sampling-ODE:}\, \frac{\rmdx}{\rmd t} = \bfeps(\bfx; t) = \frac{\bfx - r_{\bftheta}(\bfx; t)}{t}, \quad 
    \text{denoising-ODE:}\,
    \frac{\rmd r_{\bftheta}(\bfx; t)}{\rmd t} = - t \frac{\rmd^2 \bfx}{\rmd t^2}, 
\end{equation*}
where we introduce the notation $\bfeps(\bfx; t)$ to simplify the following derivations. Given a current sample $\hatx_{t_{n+1}}$, the second-order Taylor polynomial approximation of the sampling trajectory is 
\begin{equation}
    \label{eq:app_sec_order_r}
    \begin{aligned}
    \hatx_{t_{n}}
    &=\hatx_{t_{n+1}} +  (t_n - t_{n+1})\frac{\rmdx}{\rmd t}\Big|_{\hatx_{t_{n+1}}}
    +\frac{1}{2}(t_n - t_{n+1})^2\frac{\rmd^2 \bfx}{\rmd t^2}\Big|_{\hatx_{t_{n+1}}}\\
    &=\hatx_{t_{n+1}} +  \frac{t_n - t_{n+1}}{t_{n+1}} \left(\hatx_{t_{n+1}} - r_{\bftheta}(\hatx_{t_{n+1}})\right)
    -\frac{1}{2}\frac{(t_n - t_{n+1})^2}{t_{n+1}}\frac{\rmd r_{\bftheta}(\hatx_{t_{n+1}})}{\rmd t}.
    \end{aligned}
\end{equation}
Or, the above equation can be written with the notion of $\bfeps(\bfx; t)$
\begin{equation}
    \label{eq:app_sec_order_e}
    \hatx_{t_{n}}
    =\hatx_{t_{n+1}} + (t_n - t_{n+1}) \bfeps_{\bftheta}(\hatx_{t_{n+1}})
    + \frac{1}{2}(t_n - t_{n+1})^2\frac{\rmd \bfeps_{\bftheta}(\hatx_{t_{n+1}})}{\rmd t}.
\end{equation}

All following proofs are conducted in the context of our VE-SDE, \textit{i.e.}, $\rmdx =\sqrt{2t} \, \rmd \bfw_t$, and the sampling trajectory always starts from $\hatx_{t_N}\sim \calN(\mathbf{0}, T^2\bfI)$ and ends at $\hatx_{t_0}$. 
%We also assume that history points are already available for samplers in PNDM and DEIS. 

\subsection{EDMs \citep{karras2022edm}}
EDMs employ Heun’s 2\textsuperscript{nd} order method, where one Euler step is first applied and followed by a second order correction, which can be written as
\begin{equation}
    \begin{aligned}
    \hatx'_{t_{n}}
    & = \hatx_{t_{n+1}} + (t_n - t_{n+1}) \bfeps_{\bftheta}(\hatx_{t_{n+1}}), \\
    \hatx_{t_{n}}
    & = \hatx_{t_{n+1}} + (t_n - t_{n+1}) \left(0.5\bfeps_{\bftheta}(\hatx_{t_{n+1}}) + 0.5\bfeps_{\bftheta}(\hatx'_{t_n})\right) \\
    & = \hatx_{t_{n+1}} + (t_n - t_{n+1}) \bfeps_{\bftheta}(\hatx_{t_{n+1}})
    + \frac{1}{2}(t_n - t_{n+1})^2 \frac{\bfeps_{\bftheta}(\hatx'_{t_n}) - \bfeps_{\bftheta}(\hatx_{t_{n+1}})}{t_n - t_{n+1}}.
    \end{aligned}
\end{equation}
By using $r_{\bftheta}(\bfx; t)=\bfx - t \bfeps_{\bftheta}(\bfx; t)$, the sampling iteration above is equivalent to
\begin{equation}
\begin{aligned}
    \hatx_{t_{n}}
    & = \hatx_{t_{n+1}} + (t_n - t_{n+1}) \frac{\hatx_{t_{n+1}} - r_{\bftheta}(\hatx_{t_{n+1}})}{t_{n+1}}
    + \frac{1}{2}(t_n - t_{n+1})^2 \frac{\frac{\hatx'_{t_n} - r_{\bftheta}(\hatx'_{t_n})}{t_n} - \frac{\hatx_{t_{n+1}} - r_{\bftheta}(\hatx_{t_{n+1}})}{t_{n+1}}}{t_n - t_{n+1}} \\
    & = \frac{t_n}{t_{n+1}}\hatx_{t_{n+1}} + \frac{t_{n+1} - t_n}{t_{n+1}} r_{\bftheta}(\hatx_{t_{n+1}}) 
    - \frac{1}{2}\frac{(t_n - t_{n+1})^2}{t_{n+1}} \frac{t_{n+1}}{t_n} \frac{r_{\bftheta}(\hatx'_{t_n}) - r_{\bftheta}(\hatx_{t_{n+1}})}{t_n - t_{n+1}}.
\end{aligned}
\end{equation}

\subsection{DPM-Solver \citep{lu2022dpm}}
According to (3.3) in DPM-Solver, the exact solution of PF-ODE in the VE-SDE setting is given by
\begin{equation}
    \hatx_{t_{n}} = \hatx_{t_{n+1}} + \int_{t_{n+1}}^{t_n} \bfeps_{\bftheta}(\hatx_t) \rmd t.
\end{equation}
The Taylor expansion of $\bfeps_{\bftheta}(\hatx_t)$ \textit{w.r.t.}\ time $t$ at $t_{n+1}$ is
\begin{equation}
    \bfeps_{\bftheta}(\hatx_t) = \sum_{m=0}^{k-1} \frac{(t - t_{n+1})^m}{m!} \bfeps_{\bftheta}^{(m)}(\hatx_{t_{n+1}}) + \mathcal{O} \left( (t - t_{n+1})^k \right).
\end{equation}
Then, the DPM-Solver-k sampler can be written as
\begin{equation}
\begin{aligned}
    \hatx_{t_{n}} 
    & = \hatx_{t_{n+1}} + \sum_{m=0}^{k-1} \bfeps_{\bftheta}^{(m)}(\hatx_{t_{n+1}}) \int_{t_{n+1}}^{t_n} \frac{(t - t_{n+1})^m}{m!} \rmd t + \mathcal{O} \left((t_n - t_{n+1})^k \right) \\
    & = \hatx_{t_{n+1}} + \sum_{m=0}^{k-1} \frac{(t_n - t_{n+1})^{m+1}}{(m+1)!} \bfeps_{\bftheta}^{(m)}(\hatx_{t_{n+1}}) + \mathcal{O} \left((t_n - t_{n+1})^{k+1} \right).
\end{aligned}
\end{equation}
Specifically, when $k=2$, the DPM-Solver-2 sampler is given by
\begin{equation}
    \hatx_{t_{n}}
    = \hatx_{t_{n+1}} + (t_n - t_{n+1}) \bfeps_{\bftheta}(\hatx_{t_{n+1}})
    + \frac{1}{2}(t_n - t_{n+1})^2\frac{\rmd \bfeps_{\bftheta}(\hatx_{t_{n+1}})}{\rmd t},
\end{equation}
which is the same as \eqref{eq:app_sec_order_e}. The above second-order term in \citep{lu2022dpm} is approximated by 
\begin{equation}
    \frac{\rmd \bfeps_{\bftheta}(\hatx_{t_{n+1}})}{\rmd t} \approx \frac{\bfeps_{\bftheta}(\hatx_{s_n}) - \bfeps_{\bftheta}(\hatx_{t_{n+1}})}{(t_n - t_{n+1}) / 2},
\end{equation}
where $s_n = \sqrt{t_n t_{n+1}}$ and $\hatx_{s_n} = \hatx_{t_{n+1}} + (s_n - t_{n+1}) \bfeps_{\bftheta}(\hatx_{t_{n+1}})$. We have
\begin{equation}
\begin{aligned}
    \hatx_{t_{n}}
    & = \hatx_{t_{n+1}} + (t_n - t_{n+1}) \bfeps_{\bftheta}(\hatx_{t_{n+1}})
    + \frac{1}{2}(t_n - t_{n+1})^2 \frac{\bfeps_{\bftheta}(\hatx_{s_n}) - \bfeps_{\bftheta}(\hatx_{t_{n+1}})}{(t_n - t_{n+1}) / 2} \\
    & = \frac{t_n}{t_{n+1}}\hatx_{t_{n+1}} + \frac{t_{n+1} - t_n}{t_{n+1}} r_{\bftheta}(\hatx_{t_{n+1}}) - \frac{1}{2}\frac{(t_n - t_{n+1})^2}{t_{n+1}} \frac{t_{n+1}}{s_n} \frac{r_{\bftheta}(\hatx_{s_n}) - r_{\bftheta}(\hatx_{t_{n+1}})}{(t_n - t_{n+1}) / 2}.
\end{aligned}
\end{equation}

\subsection{PNDM \citep{liu2022pseudo}}
Assume that the previous denoising output $r_{\bftheta}(\hatx_{t_{n+2}})$ is available, then one S-PNDM sampler step can be written as
\begin{equation}
\begin{aligned}
    \hatx_{t_{n}}
    & = \hatx_{t_{n+1}} + (t_n - t_{n+1}) \frac{1}{2} \left( 3\bfeps_{\bftheta}(\hatx_{t_{n+1}}) - \bfeps_{\bftheta}(\hatx_{t_{n+2}}) \right) \\
    & = \hatx_{t_{n+1}} + (t_n - t_{n+1}) \bfeps_{\bftheta}(\hatx_{t_{n+1}})
    + \frac{1}{2}(t_n - t_{n+1})^2 \frac{\bfeps_{\bftheta}(\hatx_{t_{n+1}}) - \bfeps_{\bftheta}(\hatx_{t_{n+2}})}{t_n - t_{n+1}} \\
    & = \frac{t_n}{t_{n+1}}\hatx_{t_{n+1}} + \frac{t_{n+1} - t_n}{t_{n+1}} r_{\bftheta}(\hatx_{t_{n+1}}) - \frac{1}{2} \frac{(t_n - t_{n+1})^2}{t_{n+1}} \frac{r_{\bftheta}(\hatx_{t_{n+1}}) - r_{\bftheta}(\hatx_{t_{n+2}})}{t_n - t_{n+1}}.
\end{aligned}
\end{equation}

\subsection{DEIS \citep{zhang2023deis}}
In DEIS paper, the solution of PF-ODE in the VE-SDE setting is given by
\begin{equation}
\begin{aligned}
    \hatx_{t_{n}}
    & = \hatx_{t_{n+1}} + \sum_{j=0}^{r} C_{ij} \bfeps_{\bftheta}(\hatx_{t_{n+1+j}}), \\
    C_{(n+1)j}
    & = \int_{t_{n+1}}^{t_n} \prod_{k \neq j} \frac{\tau - t_{n+1+k}}{t_{n+1+j} - t_{n+1+k}} \rmd \tau.
\end{aligned}
\end{equation}
$r=1$ yields the $\rho$AB1 sampler:
\begin{equation}
\begin{aligned}
    \hatx_{t_{n}}
    & = \hatx_{t_{n+1}} + \bfeps_{\bftheta}(\hatx_{t_{n+1}}) \int_{t_{n+1}}^{t_n} \frac{\tau - t_{n+2}}{t_{n+1} - t_{n+2}} \rmd \tau
    + \bfeps_{\bftheta}(\hatx_{t_{n+2}}) \int_{t_{n+1}}^{t_n} \frac{\tau - t_{n+1}}{t_{n+2} - t_{n+1}} \rmd \tau \\
    & = \hatx_{t_{n+1}} + \bfeps_{\bftheta}(\hatx_{t_{n+1}}) \frac{(t_n - t_{n+2})^2-(t_{n+1} - t_{n+2})^2}{2(t_{n+1} - t_{n+2})}
    + \bfeps_{\bftheta}(\hatx_{t_{n+2}}) \frac{(t_n - t_{n+1})^2}{2(t_{n+2} - t_{n+1})} \\
    & = \hatx_{t_{n+1}} + (t_n - t_{n+1}) \bfeps_{\bftheta}(\hatx_{t_{n+1}})
    + \frac{1}{2}(t_n - t_{n+1})^2 \frac{\bfeps_{\bftheta}(\hatx_{t_{n+1}}) - \bfeps_{\bftheta}(\hatx_{t_{n+2}})}{t_{n+1} - t_{n+2}} \\
    & = \frac{t_n}{t_{n+1}}\hatx_{t_{n+1}} + \frac{t_{n+1} - t_n}{t_{n+1}} r_{\bftheta}(\hatx_{t_{n+1}}) - \frac{1}{2} \frac{(t_n - t_{n+1})^2}{t_{n+1}} \frac{r_{\bftheta}(\hatx_{t_{n+1}}) - r_{\bftheta}(\hatx_{t_{n+2}})}{t_{n+1} - t_{n+2}}.
\end{aligned}
\end{equation}

\section{Proofs of Theorem and Propositions}
\label{app:proofs}

\subsection{Monotone Increase in Sample Likelihood (Theorem~\ref{thm:likelihood} in the main text)}
\label{app:likelihood}

\addtocounter{theorem}{-1}
\begin{theorem}
	Suppose that $\left\lVert  r_{\bftheta}^{\star}(\hatx_{t_n}) - r_{\bftheta}(\hatx_{t_n}) \right\rVert \le \left\lVert r_{\bftheta}^{\star}(\hatx_{t_n}) - \hatx_{t_n}\right\rVert$ for a given sample $\hatx_{t_n}$. In the ODE-based sampling of diffusion models, the sample likelihood exhibits non-decreasing behavior, \textit{i.e.}, $p_{h}(r_{\bftheta}(\hatx_{t_n}))\ge p_{h}(\hatx_{t_n})$ and $p_{h}(\hatx_{t_{n-1}})\ge p_{h}(\hatx_{t_n})$ in terms of the kernel density estimate $p_{h}(\bfx)=\frac{1}{N}\sum_{i}\calN(\bfx; \bfx_i, h^2\bfI)$ with any positive bandwidth $h$.
\end{theorem}

\begin{proof}
	We first prove that given a random vector $\bfv$ falling within a sphere centered at the optimal denoising output $r_{\bftheta}^{\star}(\hatx_{t_n})$ with a radius of $\left\lVert r_{\bftheta}^{\star}(\hatx_{t_n})- \hatx_{t_n}\right\rVert$, \textit{i.e.}, $\left\lVert r_{\bftheta}^{\star}(\hatx_{t_n}) - \hatx_{t_n} \right\rVert \ge \left\lVert\bfv  \right\rVert$, the sample likelihood is non-decreasing from $\hatx_{t_n}$ to  $r_{\bftheta}^{\star}(\hatx_{t_n})+\bfv$, \textit{i.e.}, $p_{h}(r_{\bftheta}^{\star}(\hatx_{t_n}) + \bfv)\ge p_{h}(\hatx_{t_n})$. Then, we provide two settings for $\bfv$ to finish the proof.
	
	The increase of sample likelihood from $\hatx_{t_n}$ to  $r_{\bftheta}^{\star}(\hatx_{t_n})+\bfv$ in terms of $p_{h}(\bfx)$ is
	\begin{equation}
		\begin{aligned}
			&\quad\, p_h(r_{\bftheta}^{\star}(\hatx_{t_n}) + \bfv) - p_h(\hatx_{t_n})=\frac{1}{N}\sum_{i}
			\left[\calN\left(r_{\bftheta}^{\star}(\hatx_{t_n}) + \bfv;\bfx_i, h^2\bfI\right)-\calN\left(\hatx_{t_n};\bfx_i, h^2\bfI\right)\right]\\
			&\overset{\text{(i)}}{\ge} \frac{1}{2h^2N}\sum_{i}\calN\left(\hatx_{t_n};\bfx_i, h^2\bfI\right)\left[\lVert \hatx_{t_n} - \bfx_i \rVert^2 
			-\lVert r_{\bftheta}^{\star}(\hatx_{t_n}) + \bfv - \bfx_i \rVert^2 \right]\\
			&=\frac{1}{2h^2N}\sum_{i}\calN\left(\hatx_{t_n};\bfx_i, h^2\bfI\right)\left[\lVert \hatx_{t_n}\rVert^2   - 2\hatx_{t_n}^{T}\bfx_i -\lVert r_{\bftheta}^{\star}(\hatx_{t_n}) + \bfv \rVert^2 + 2\left(r_{\bftheta}^{\star}(\hatx_{t_n}) + \bfv\right)^{T}\bfx_i  \right]\\
			&\overset{\text{(ii)}}{=}\frac{1}{2h^2N}\sum_{i}\calN\left(\hatx_{t_n};\bfx_i, h^2\bfI\right)\left[\lVert \hatx_{t_n}\rVert^2   - 2\hatx_{t_n}^{T}r_{\bftheta}^{\star}(\hatx_{t_n}) -\lVert r_{\bftheta}^{\star}(\hatx_{t_n}) + \bfv \rVert^2 + 2\left(r_{\bftheta}^{\star}(\hatx_{t_n}) + \bfv\right)^{T} r_{\bftheta}^{\star}(\hatx_{t_n}) \right]\\
			&=\frac{1}{2h^2N}\sum_{i}\calN\left(\hatx_{t_n};\bfx_i, h^2\bfI\right)\left[\lVert \hatx_{t_n}\rVert^2   - 2\hatx_{t_n}^T r_{\bftheta}^{\star}(\hatx_{t_n})+\lVert r_{\bftheta}^{\star}(\hatx_{t_n})\rVert^2 - \lVert \bfv \rVert^2 \right]\\
			&=\frac{1}{2h^2N}\sum_{i}\calN\left(\hatx_{t_n};\bfx_i, h^2\bfI\right)\left[\lVert r_{\bftheta}^{\star}(\hatx_{t_n}) - \hatx_{t_n}\rVert^2  - \lVert \bfv \rVert^2\right] \\
			&\ge 0, 
		\end{aligned}
	\end{equation}
	where 
	(i) uses the definition of convex function $f(\bfx_2)\ge f(\bfx_1)+f^{\prime}(\bfx_1)(\bfx_2-\bfx_1)$ with $f(\bfx)=\exp \left(-\frac{1}{2}\bfx\right)$, $\bfx_1=\lVert\left(\hatx_{t_n}-\bfx_i \right)/h\rVert^2$ and $\bfx_2=\lVert\left(r_{\bftheta}^{\star}(\hatx_{t_n}) + \bfv-\bfx_i \right)/h\rVert^2$; 
	(ii) uses the relationship between two consecutive steps $\hatx_{t_n}$ and $r_{\bftheta}^{\star}(\hatx_{t_n})$ in mean shift with the Gaussian kernel (see \eqref{eq:mean_shift})
	\begin{equation}
		r_{\bftheta}^{\star}(\hatx_{t_n})=\bfm(\hatx_{t_n})=\sum_{i} \frac{\exp \left(-\lVert \hatx_{t_n} - \bfx_i \rVert^2/2h^2\right)}{\sum_{j}\exp \left(-\lVert \hatx_{t_n} - \bfx_j \rVert^2/2h^2\right)} \bfx_i,
	\end{equation}
	which implies the following equation also holds
	\begin{equation}
		\sum_i \calN\left(\hatx_{t_n};\bfx_i, h^2\bfI\right) \bfx_i = \sum_i \calN\left(\hatx_{t_n};\bfx_i, h^2\bfI\right) r_{\bftheta}^{\star}(\hatx_{t_n}).
	\end{equation}
	Since $\left\lVert r_{\bftheta}^{\star}(\hatx_{t_n}) - \hatx_{t_n} \right\rVert \ge \left\lVert\bfv  \right\rVert$, or equivalently, $\left\lVert r_{\bftheta}^{\star}(\hatx_{t_n}) - \hatx_{t_n} \right\rVert^2 \ge \left\lVert\bfv  \right\rVert^2$, we conclude that the sample likelihood monotonically increases from $\hatx_{t_n}$ to $r_{\bftheta}^{\star}(\hatx_{t_n})+\bfv$ unless $\hatx_{t_n}=r_{\bftheta}^{\star}(\hatx_{t_n})+\bfv$, in terms of the kernel density estimate $p_{h}(\bfx)=\frac{1}{N}\sum_{i}\calN(\bfx; \bfx_i, h^2\bfI)$ with any positive bandwidth $h$. 
	
	We next provide two settings for $\bfv$, which trivially satisfy the condition $\left\lVert r_{\bftheta}^{\star}(\hatx_{t_n}) - \hatx_{t_n} \right\rVert \ge \left\lVert\bfv  \right\rVert$
	% and can be verified from Figure~\ref{fig:meanshift}
	, and have the following corollaries:
	\begin{itemize}
		\item $p_{h}(r_{\bftheta}(\hatx_{t_n}))\ge p_{h}(\hatx_{t_n})$, when $\bfv=r_{\bftheta}(\hatx_{t_n})-r_{\bftheta}^{\star}(\hatx_{t_n})$. 
		\item $p_{h}(\hatx_{t_{n-1}})\ge p_{h}(\hatx_{t_n})$, when  $\bfv=r_{\bftheta}(\hatx_{t_n})-r_{\bftheta}^{\star}(\hatx_{t_n})+\frac{t_{n-1}}{t_n}\left(\hatx_{t_n}-r_{\bftheta}(\hatx_{t_n})\right)$. 
	\end{itemize}
\end{proof}

\subsection{Magnitude Expansion Property (Proposition~\ref{prop:norm} in the main text)}
\label{app:norm}
%The Proposition~\ref{prop:norm} and the associated corollary in the main submission are restated as follows.
\addtocounter{proposition}{-6}
\begin{proposition}
	Given a high-dimensional vector $\bfx \in \bbR^d$ and an isotropic Gaussian noise $\bfz \sim \calN(\mathbf{0} ; {\sigma^2\bfI_d})$, $\sigma>0$, we have $\bbE \left\lVert \bfz \right\rVert^2=\sigma^2 d$, and with high probability, $\bfz$ stays within a ``thin shell'': $\lVert \bfz \rVert = \sigma \sqrt{d} \pm O(1)$. Additionally, $\bbE \left\lVert \bfx + \bfz \right\rVert ^2 =\lVert \bfx \rVert^2 +  \sigma^2d$, $\lim_{d\rightarrow \infty}\bbP \left( \lVert \bfx + \bfz  \rVert > \lVert \bfx \rVert \right)=1$. 
\end{proposition}
\addtocounter{proposition}{5}

\begin{proof}
	We denote $\bfz_i$ as the $i$-th dimension of random variable $\bfz$, then the expectation and variance is $\bbE \left[\bfz_i\right]=0$, $\bbV \left[\bfz_i\right]=\sigma^2$, respectively. The fourth central moment is $\bbE\left[ \bfz_i^4\right] = 3{\sigma ^4}$. Additionally, 
    \begin{equation}
        \begin{aligned}
            \bbE \left[\bfz_i^2\right]=\bbV \left[\bfz_i\right]+\bbE\left[\bfz_i\right]^2
            &=\sigma^2,\qquad
            \bbE \left[\lVert \bfz \rVert^2\right]=\bbE \left[\sum_{i=1}^{d}\bfz_i^2\right]=\sum_{i=1}^{d}\bbE \left[\bfz_i^2\right]=\sigma^2 d, \\
            \bbV\left[{\lVert \bfz  \rVert^2} \right] 
            &= \mathbb{E} \left[ {\lVert \bfz  \rVert}^4\right] - \left(\bbE\left[
    		  \lVert \bfz  \rVert^2 \right] \right)^2 
    		%&= \bbE \left[ \left( {\sum\nolimits_{i = 1}^d {\bfz_i^2} } \right)^2 \right] - {d^2}{\sigma ^4} \\
    		%&= d\cdot\bbE\left[ {\bfz_1^4 + \bfz_1^2\left( {\sum\nolimits_{i = 2}^d {\bfz_i^2} } \right)} \right] - {d^2}{\sigma ^4} \\
    		%&= 3d{\sigma ^4} + d\left( d - 1 \right){\sigma ^4} - {d^2}{\sigma ^4}
    		= 2d \sigma ^4,
            \end{aligned}
    \end{equation}
%Since the fourth central moment of Gaussian distribution is $\bbE\left[ \bfz_i^4\right] = 3{\sigma ^4}$.

	Then, we have 
	\begin{equation}
		\begin{aligned}
		\bbE \left[ 
		\lVert \bfx + \bfz \rVert^2 - 
		\lVert \bfx \rVert^2
		\right] 
		%&= \bbE \left[ \lVert \bfx \rVert^2 + 	\lVert \bfz \rVert^2 + 2 \bfx^T\bfz - \lVert \bfx \rVert^2\right] \\
		& = \bbE \left[ 
		\lVert \bfz \rVert^2 + 2 \bfx^T\bfz 
		\right]
        =
        \bbE \left[ 
		\lVert \bfz \rVert^2
		\right]=\sigma^2 d. 
	\end{aligned}	
	\end{equation}
	% which gives
	% \begin{equation}
	% 	\bbE 
	% 	\left[
	% 	\frac{\lVert \bfx + \bfz \rVert ^2}{\lVert \bfx \rVert^2}
	% 	\right] = 1 + \frac{\sigma^2 }{\lVert \bfx \rVert^2}d.
	% \end{equation} 
	%That is to say, the magnitude of the noisy sample $\bfx+\bfz$ is expected to be larger than that of the original sample $\bfx$.
    %if the variance of added zero-mean noise is non-zero. 
	%As the dimension increase, this ratio will be amplified with the factor $\sigma^2/\lVert \bfx \rVert^2$.

Furthermore, the standard deviation of $\lVert \bfz  \rVert^2$ is ${\sigma ^2}\sqrt {2d}$, which means
\begin{equation}
	\begin{aligned}
		\lVert \bfz  \rVert^2 = {\sigma ^2}d \pm {\sigma ^2}\sqrt {2d}  = {\sigma ^2}d \pm O\left( \sqrt{d} \right), \qquad
		\lVert \bfz \rVert \hspace{0.4em} = \sigma \sqrt{d} \pm O\left( 1 \right),
	\end{aligned}
\end{equation}
holds with high probability. In fact, we can bound the sub-gaussian tail as follows
\begin{equation}
	\mathbb{P} \left( \lvert
	\lVert \bfz  \rVert - \sigma \sqrt n \rvert \ge t \right) \le 2\exp \left( - c{t^2} \right), \quad \forall\, t\ge 0,
\end{equation}
where $c$ is an absolute constant. This property indicates that the magnitude of Gaussian random variable distributes very close to the sphere of radius $\sigma\sqrt{n}$ in the high-dimensional case. 

Suppose $t=\sigma\sqrt d  - 2\lVert \bfx \rVert>0$, we have 
\begin{equation}
    \label{eq:app_prob}
    \begin{aligned}
    \underbrace{\bbP\left(
	\lVert \bfx + \bfz  \rVert \le \lVert \bfx \rVert \right) < \bbP
	\left( \lvert \lVert \bfz \rVert - \sigma \sqrt{d}
	\rvert \ge \sigma \sqrt{d}  - 2\lVert\bfx \rVert \right) }_{\text{See Figure~\ref{fig:prob}}}
	&\le 2\exp \left(  - c{\left( {\sigma \sqrt{d}  - 2\lVert\bfx\rVert} \right)}^2 \right),\\
    \lim_{d\rightarrow \infty}
    \bbP \left( \lVert \bfx + \bfz  \rVert \le \lVert \bfx \rVert \right) < \lim_{d\rightarrow \infty} 2\exp 
    &\left(  - c{{\left( {\sigma \sqrt{d}  - 2\lVert \bfx \rVert} \right)}^2} \right)=0.
    \end{aligned}
\end{equation}
Thus, $\lim_{d\rightarrow \infty}\bbP \left( \lVert \bfx + \bfz  \rVert > \lVert \bfx \rVert \right)=1$.
\end{proof}
\begin{figure}[h]
	\centering
    \includegraphics[width=0.35\textwidth]{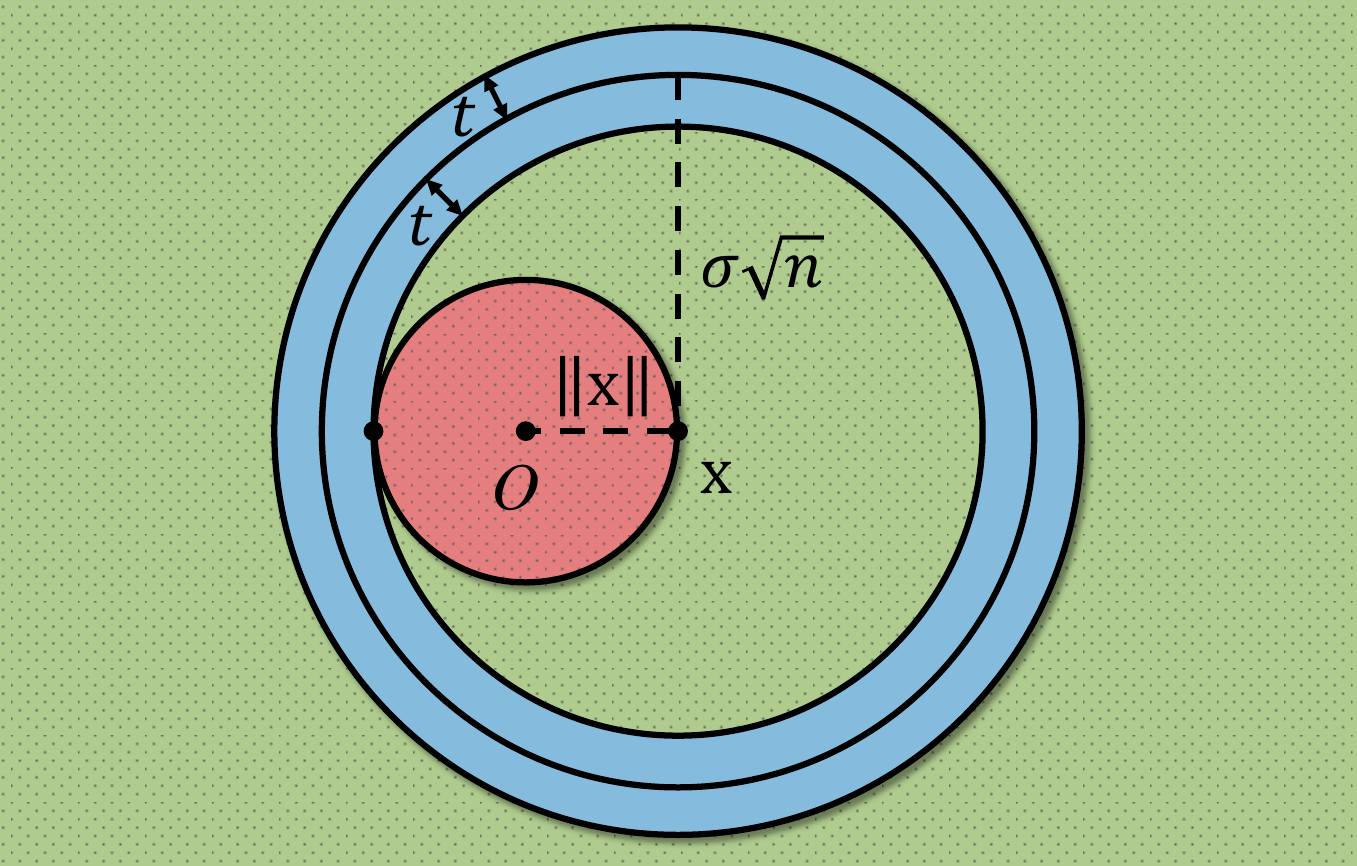}
    \caption{A simple geometric illustration (the green region is bigger than the red region).}
    \label{fig:prob}
\end{figure}
%0.4 0.35
\subsection{The Optimal Denoising Output (Proposition~\ref{prop:optimal} in the main text)}
\label{app:optimal}
Suppose we have a training dataset $\calD=\{\bfx_i\}_{i=1}^{N}$ where each data $\bfx_i$ is sampled from an unknown data distribution $p_d(\bfx)$, and the corresponding noisy counterpart $\hatx_i$ is generated by a Gaussian perturbation kernel $\hatx \sim p_t(\hatx|\bfx)=\calN(\hatx ; \bfx, \sigma_t^2\bfI)$ with an isotropic variance $\sigma_t^2$. The empirical data distribution $p_{d}(\bfx)$ is denoted as a summation of multiple \textit{Dirac delta functions} (\textit{a.k.a} a mixture of Gaussian distribution): $p_{d}(\bfx)=\frac{1}{N}\sum_{i=1}^{N}\delta(\lVert \bfx - \bfx_i\rVert)$, and the Gaussian kernel density estimate is 
\begin{equation}
    p_t(\hatx)=\int p_t(\hatx|\bfx)p_{d}(\bfx) = \frac{1}{N}\sum_{i}\calN\left(\hatx;\bfx_i, \sigma_t^2\bfI\right).
\end{equation}
Based on Lemmas~\ref{lemma:dae} and~\ref{lemma:posterior}, the optimal denoising output is
\begin{equation}
    \begin{aligned}
        r_{\bftheta}^{\star}\left(\hatx; \sigma_t\right) =
        \bbE\left(\bfx|\hatx\right)&=\hatx + \sigma_t^2\nablahatx \log p_t(\hatx)\\
        &=\hatx + \sigma_t^2 \sum_{i}\frac{\nablahatx\calN\left(\hatx;\bfx_i, \sigma_t^2\bfI\right)}{\sum_{j}\calN\left(\hatx;\bfx_j, \sigma_t^2\bfI\right)}\\
        &=\hatx + \sigma_t^2 \sum_{i}\frac{\calN\left(\hatx;\bfx_i, \sigma_t^2\bfI\right)}{\sum_{j}\calN\left(\hatx;\bfx_j, \sigma_t^2\bfI\right)}\left( \frac{\bfx_i-\hatx}{\sigma_t^2}\right)\\ 
        &=\hatx + \sum_{i}\frac{\calN\left(\hatx;\bfx_i, \sigma_t^2\bfI\right)}{\sum_{j}\calN\left(\hatx;\bfx_j, \sigma_t^2\bfI\right)}\left(\bfx_i-\hatx\right)\\ 
        &=\sum_{i}\frac{\calN\left(\hatx;\bfx_i, \sigma_t^2\bfI\right)}{\sum_{j}\calN\left(\hatx;\bfx_j, \sigma_t^2\bfI\right)}\bfx_i\\ 
        &=\sum_{i} \frac{\exp \left(-\lVert \hatx - \bfx_i \rVert^2/2\sigma_t^2\right)}{\sum_{j}\exp \left(-\lVert \hatx - \bfx_j \rVert^2/2\sigma_t^2\right)} \bfx_i.
    \end{aligned}
\end{equation}
This equation appears to be highly similar to the classic non-parametric mean shift \citep{fukunaga1975estimation,cheng1995mean,comaniciu2002mean}, especially annealed mean shift \citep{shen2005annealedms}. The detailed discussion is provided in Section~\ref{sec:meanshift}.

\section{Rethinking Fast Sampling Techniques}
\label{sec:app_sampling}

The slow sampling speed is a major bottleneck of diffusion models. To address this issue, one potential solution is to replace the vanilla Euler solver with higher-order ODE solvers that use larger or adaptive step sizes \citep{lu2022dpm,zhang2023deis}. This sort of training-free approaches improve the synthesis quality almost for free, but their NFEs ($\ge$10) are still too large compared to GANs \citep{karras2020analyzing,sauer2022stylegan}. In fact, due to the inevitable truncation error in each numerical step (unless exactly linear trajectory), their minimum achievable NFE is limited by the curvature of ODE trajectory.
Another solution is to explicitly straighten the ODE trajectory with knowledge distillation  \citep{luhman2021knowledge,salimans2022progressive,zheng2023fast,song2023consistency}. 
They optimize a new student score model ($r_{\bftheta}$) to align with the prediction of a pre-trained and fixed teacher score model ($r_{\bfphi}$). Empirically, these learning-based approaches can achieve decent synthesis quality with a significantly fewer NFE ($\approx 1$) compared to those ODE solver-based approaches. 
%knowledge distillation acts as a tool to \textit{linearize} the trajectory at several discrete time steps. That is to say, besides the original model compression usage, another new scenario for knowledge distillation is acting as \textit{trajectory linearization}.

\textbf{Rethinking Distillation-Based Techniques.} A learned score-based model with a specified ODE solver fully determine the sampling trajectory and the denoising trajectory. Various distillation-based fast sampling techniques can be interpreted as different ways to \textit{linearize} the original trajectory. We provide a sketch in Figure~\ref{fig:distillation} to highlight the difference of typical examples, including KD \citep{luhman2021knowledge}, PD \citep{salimans2022progressive}, DFNO \citep{zheng2023fast}, and CD \citep{song2023consistency}.
We can see that the student sampling direction is adjusted to make the initial sample directly point to the synthetic sample of the pre-defined teacher model. To achieve this goal, new noise-target pairs are built in an online \citep{salimans2022progressive,song2023consistency,berthelot2023tract} or offline fashion \citep{luhman2021knowledge,zheng2023fast}. 
Recently, CD \citep{song2023consistency} and TRACT \citep{berthelot2023tract} began to rely on the denoising trajectory to guide the score fine-tuning and thus enable fast sampling. We present the training objective of CD as follows 
\begin{equation}
	\label{eq:cd}
	\bbE_{\bfx \sim p_d} \bbE_{\bfz \sim \calN(\mathbf{0}, t_{n+1}^2\bfI)} \lVert r_{\bftheta}\left(\hatx; t_{n+1}\right) - r_{\bftheta^{-}}\left(\calT_{\bfphi}\left(\hatx\right); t_{n}\right)  \rVert^2_2, \quad \hatx = \bfx + \bfz, 
\end{equation} 
where $\calT_{\bfphi}(\hatx)$ is implemented as a one-step Euler: $\frac{t_n}{t_{n+1}}\hatx +  \frac{t_{n+1}-t_n}{t_{n+1}}r_{\bfphi}(\hatx; t_{n+1})$. CD follows the time schedule of EDMs \citep{karras2022edm}, aside from removing the $t_0$, and adopts pre-trained EDMs to initialize the student model. ${\bftheta^{-}}$ is the exponential moving average (EMA) of ${\bftheta}$.
%(see Section~\ref{sec:background})
% to stabilize without regard to the slight difference between $\bftheta$ and $\bftheta^{-}$

Based on our geometric observations, we then provide an intuitive interpretation of the training objective \eqref{eq:cd}, as shown in Figure~\ref{fig:distill_cd}. 
(1) The role of $\calT_{\bfphi}\left(\hatx\right)$ is to locate the sampling trajectory passing a given noisy sample $\hatx$ and make one numerical step along the trajectory to move $\hatx$ towards its converged point. (2) $r_{\bftheta^{-}}(\cdot)$ further projects $\calT_{\bfphi}\left(\hatx\right)$ into the corresponding denoising trajectory with the step size $t_n$, which is closer to the converged point compared with $r_{\bftheta}(\hatx)$ . (3) The student denoising output $r_{\bftheta}\left(\cdot\right)$ is then shifted to match its underlying target $r_{\bftheta^{-}}\left(\cdot\right)$ in the denoising trajectory.
By iteratively fine-tuning denoising outputs until convergence, the student model is hopefully endowed with the ability to perform few-step sampling from those trained discrete time steps, and thus achieves excellent performance in practice \citep{song2023consistency}. 
% the denoising output of $\hatx$ with the step size $t_{n+1}$
\begin{figure}[t]
	\centering
    \begin{subfigure}[b]{0.24\textwidth}
			\includegraphics[width=\textwidth]{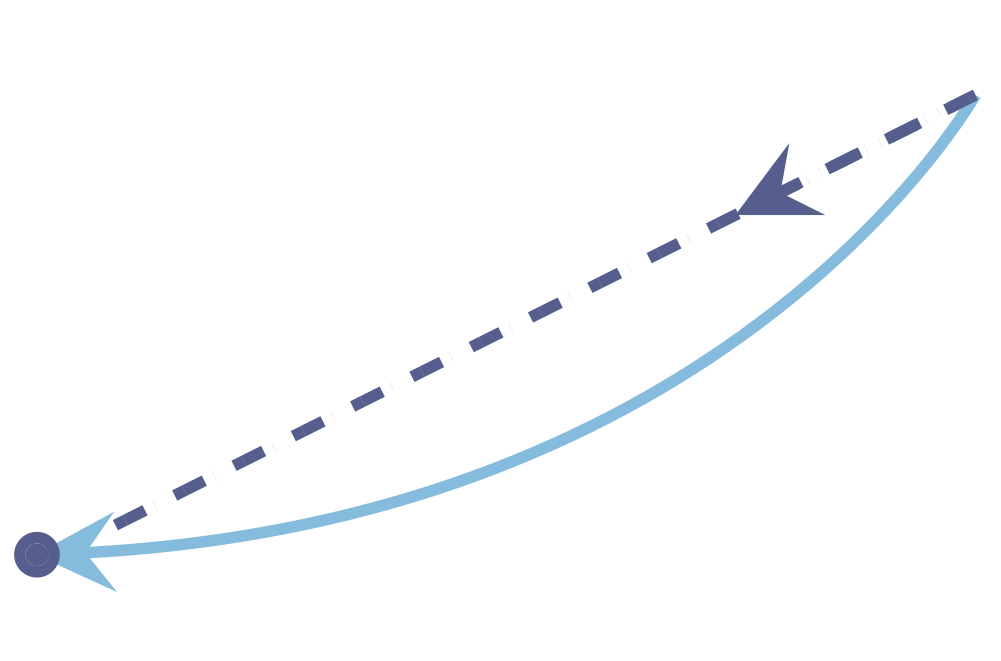}
			\caption{KD.}
		\end{subfigure}
    \begin{subfigure}[b]{0.24\textwidth}
			\includegraphics[width=\textwidth]{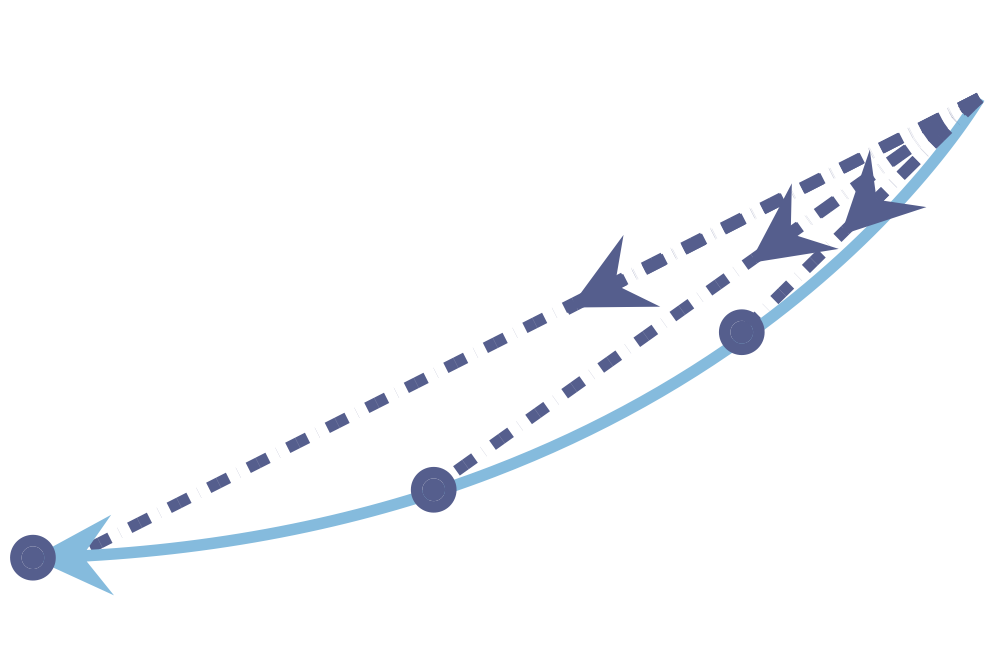}
			\caption{DFNO.}
		\end{subfigure}
    \begin{subfigure}[b]{0.24\textwidth}
			\includegraphics[width=\textwidth]{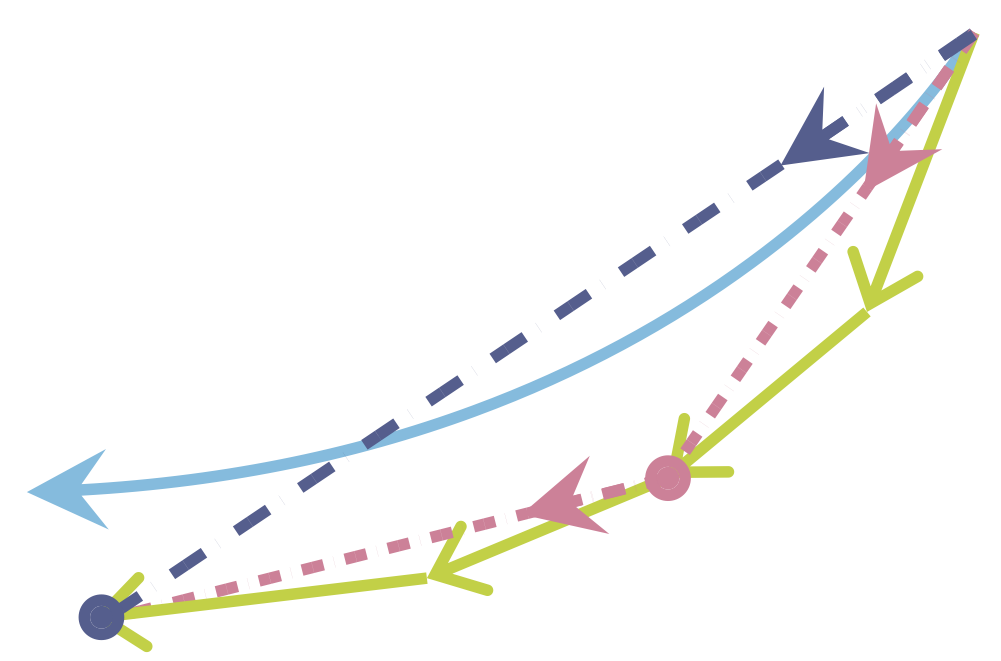}
			\caption{PD.}
		\end{subfigure}
    \begin{subfigure}[b]{0.24\textwidth}
			\includegraphics[width=\textwidth]{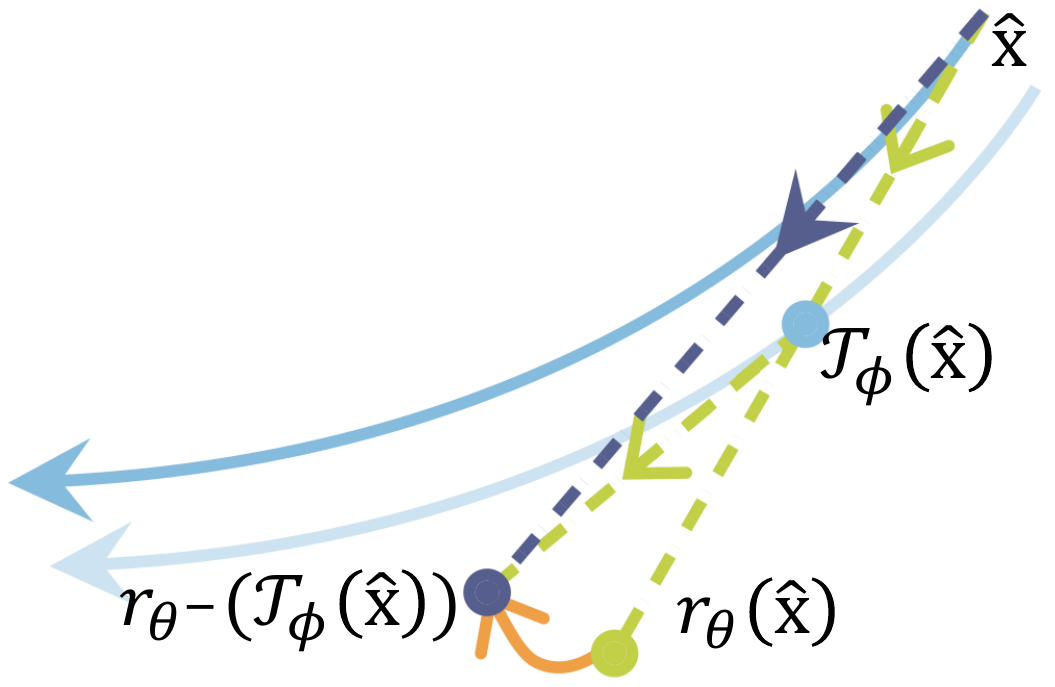}
			\caption{CD.}
            \label{fig:distill_cd}
		\end{subfigure}
    \caption{The comparison of distillation-based techniques. The \textit{offline} techniques first simulate a long ODE trajectory with the teacher score and then make the student score point to the final sample (KD \citep{luhman2021knowledge}) or also intermediate samples on the trajectory (DFNO \citep{zheng2023fast}). The \textit{online} techniques iteratively fine-tune the student prediction to align with the target generated by the few-step teacher model along the sampling trajectory (PD \citep{salimans2022progressive}) or the denoising trajectory (CD \citep{song2023consistency}).}
	\label{fig:distillation}
\end{figure}
%, TRACT \citep{berthelot2023tract}

%We next present how the geometric structures help to develop a faster ODE-based sampler and offer a new interpretation to the success of the recently proposed consistency models \cite{song2023consistency}. 

%\subsection{Consistency Training}
%To get rid of the pre-trained score model, we can approximate the underlying ground-truth trajectory with 
%\begin{equation}
%	\label{eq:ct}
%	\bbE_{\bfx \sim p_d} \bbE_{\bfz \sim \calN(\mathbf{0}, \bfI)} \lVert r_{\bftheta}\left(\bfx+s_{n+1}\bfz; s_{n+1}\right) - r_{\bftheta^{-}}\left(\bfx+s_{n}\bfz; s_{n}\right)  \rVert^2_2,
%\end{equation} 
%with the boundary condition $r_{\bftheta}\left(\bfx+s_{1}\bfz; s_{1}\right) =\bfx$.

\section{In-Distribution Latent Interpolation}
\label{sec:app_latent}
An attractive application of diffusion models is to achieve semantic image editing by manipulating latent representations \citep{ho2020ddpm,song2021ddim,song2021sde}. 
Such semantic editing is relatively controllable for ODE-based sampling due to its deterministic trajectory, compared with SDE-based sampling that injects stochastic noise in each step. We then take \textit{latent interpolation} as an example to reveal the potential pitfalls in practice from a geometric viewpoint. 

The training objective \eqref{eq:dae} for score estimation tells that given a noise level $\sigma_t^2$, the denoiser is \textit{only} trained with samples \textit{belonging to} the distribution $p_t(\hatx)$. This important fact implies that for the latent encoding $\hatx_{t_N}\sim \calN(\mathbf{0}, T^2\bfI)$, the denoiser performance is only guaranteed for the input approximately distributed in a sphere of radius $T \sqrt{d}$ (see Section~\ref{subsec:norm}). This geometric picture helps in understanding the conditions under which latent interpolation may fail.
\begin{proposition}
	In high dimensions, linear interpolation shifts the latent distribution while spherical linear interpolation asymptotically ($d\rightarrow \infty$) maintains the latent distribution. 
\end{proposition}

\begin{figure}[t]
	\centering
	\begin{subfigure}[b]{0.33\textwidth}
		\includegraphics[width=\textwidth]{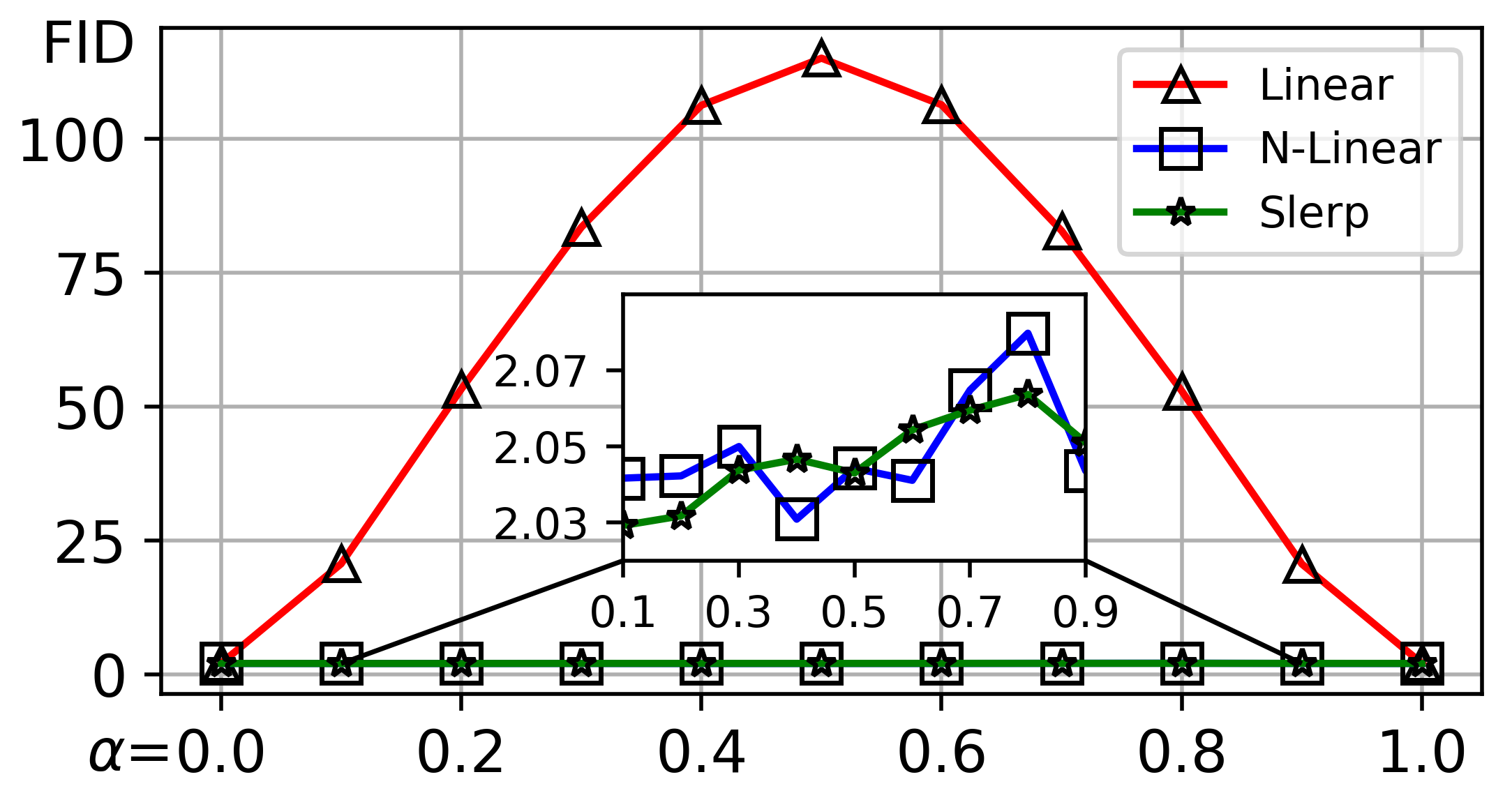}
		\caption{The comparison of FID.}
	\end{subfigure}
	\begin{subfigure}[b]{0.66\textwidth}
		\includegraphics[width=\textwidth]{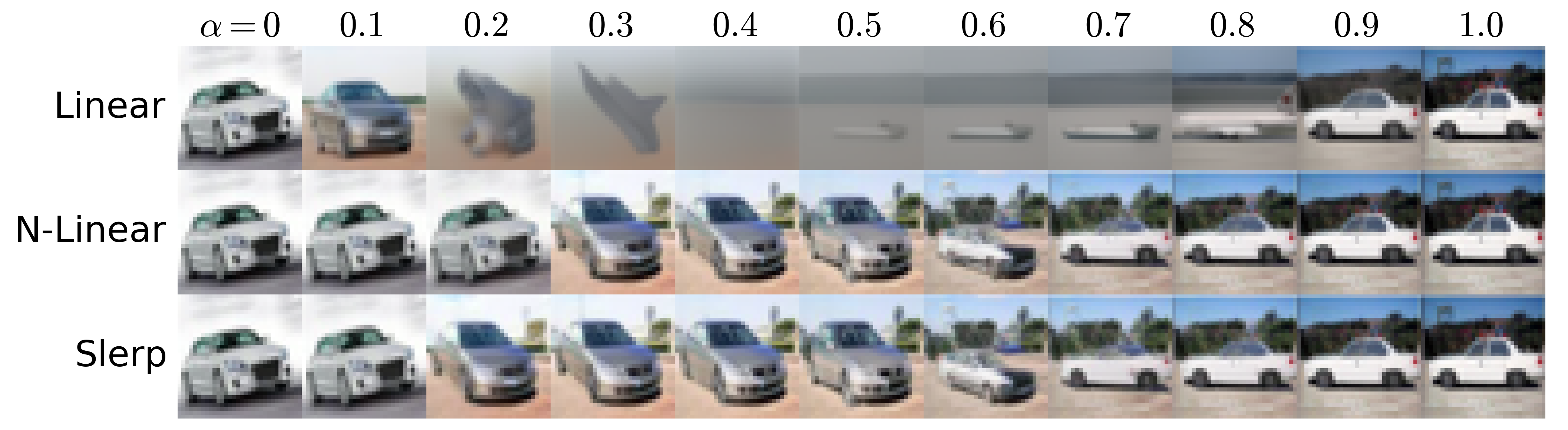}
		\caption{Visualization of latent interpolation with different strategies.}
	\end{subfigure}
	\caption{Linear latent interpolation results in blurry images, while a simple re-scaling trick greatly preserves the fine-grained image details and enables a smooth traversal among different modes.}
 % and this effect becomes especially severe as the coefficient $\alpha$ approaches 0.5
	\label{fig:interpolation}
%blurs the generated images with losing of fine-grained details
\end{figure}
% Interp KNNs
\begin{figure}[t]
	\centering
	\begin{subfigure}[b]{0.32\textwidth}
		\includegraphics[width=\textwidth]{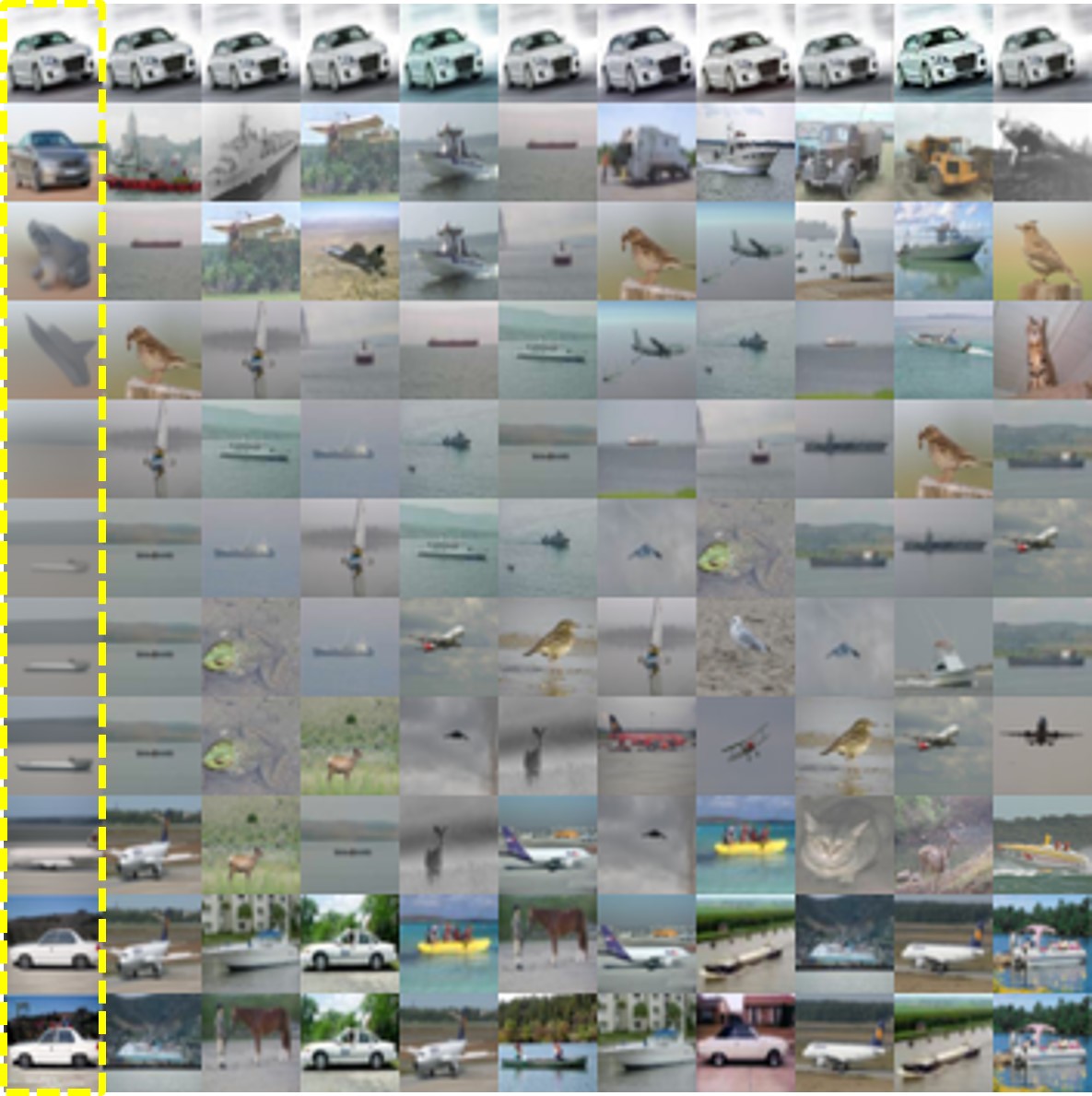}
		\caption{Linear interpolation.}
	\end{subfigure}
	\begin{subfigure}[b]{0.32\textwidth}
		\includegraphics[width=\textwidth]{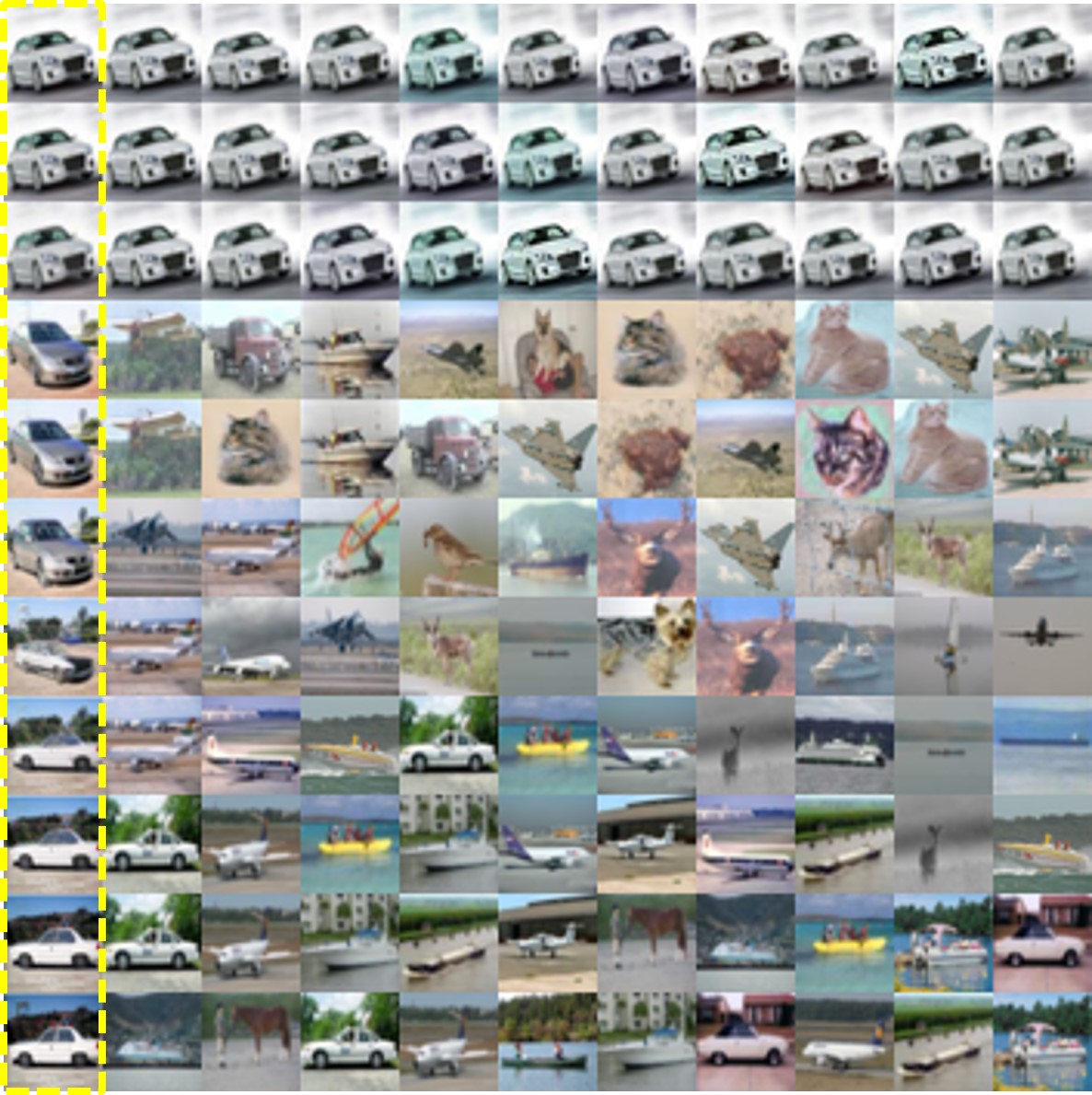}
		\caption{N-linear interpolation.}
	\end{subfigure}
	\begin{subfigure}[b]{0.32\textwidth}
		\includegraphics[width=\textwidth]{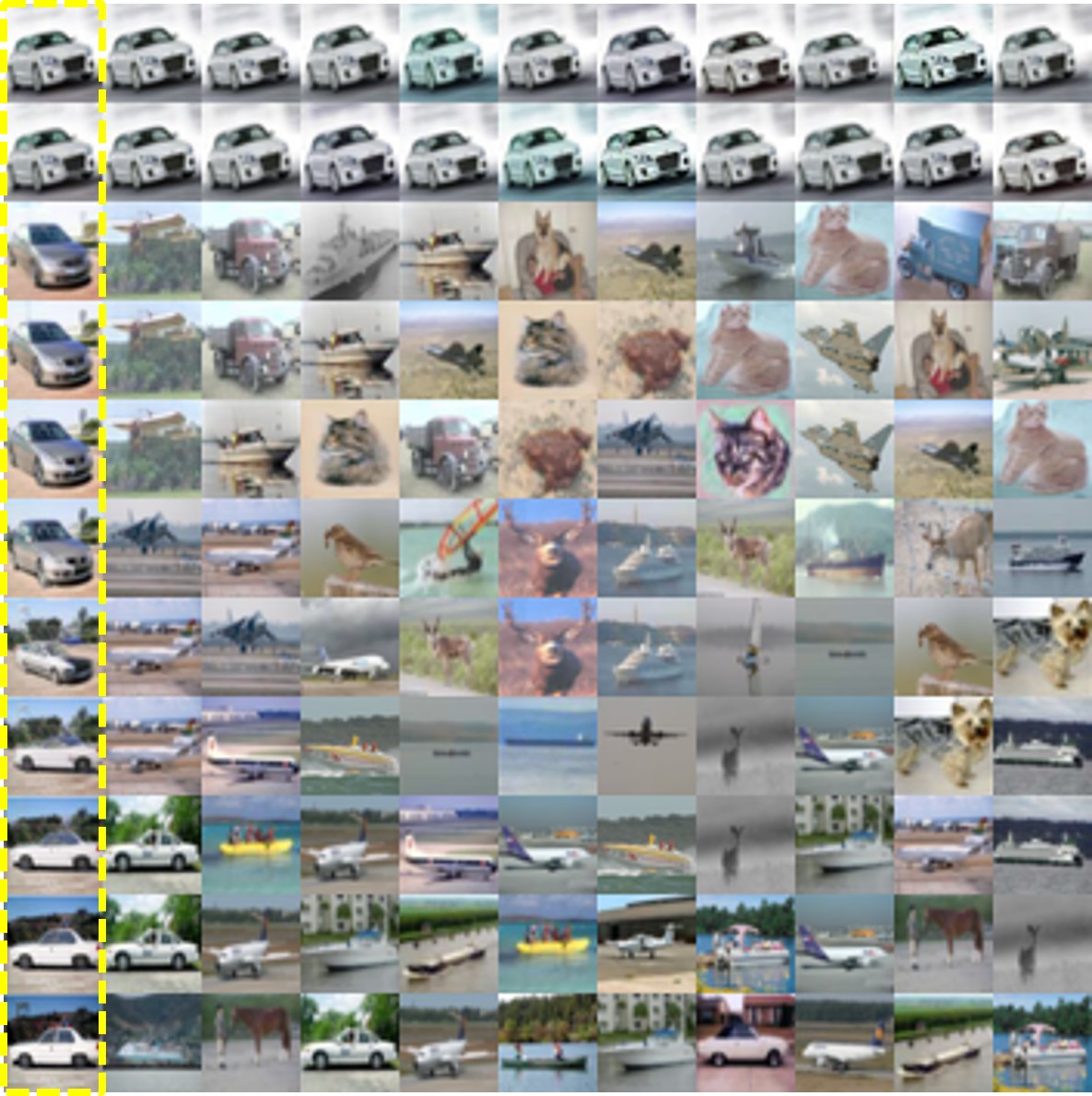}
		\caption{Slerp.}
	\end{subfigure}
	\caption{We provide the k-nearest neighbors (k=10) of our generated images to the real data, in order to demonstrate how different modes are smoothly traversed in the latent interpolation process.}
	\label{fig:knn_uncond_ddpmpp}
\end{figure}

Given two independent latent encodings, $\hatx_{t_N}^{(1)},\, \hatx_{t_N}^{(2)}\sim \calN(\mathbf{0}, T^2\bfI)$, they are almost orthogonal 
with the angle $\frac{1}{2}\pi+O_p(d^{-{0.5}})$ in high dimensions \citep{hall2005geometric,vershynin2018high}.
% inverse cosine of their inner product as 
%\cos^{-1}(\hatx_{s_N}^{(1)}, \hatx_{s_N}^{(2)})=    perpendicular
In this case, \textit{linear interpolation} $\hatx_{t_N}^{(\alpha)} = (1-\alpha) \hatx_{t_N}^{(1)} +\alpha \hatx_{t_N}^{(2)}$ \citep{ho2020ddpm} quickly pushes the resulting encoding $\hatx_{t_N}^{(\alpha)}$ away from the original distribution into a squashed sphere of radius $T\sqrt{\scriptstyle d\left((1-\alpha)^2+\alpha^2\right)}$, which almost has no intersection with the original sphere of radius $T \sqrt{d}$, unless $\alpha\rightarrow 0\, \text{or}\, 1$. 
Our trained denoiser thus can not provide a reliable estimation for $r_{\bftheta}(\hatx_{t_N}^{(\alpha)}, t_N)$ to derive the score direction, as shown in Figure~\ref{fig:interpolation}. 
%As shown in Figure~\ref{fig:interpolation}, linear interpolation blurs the generated images with losing of fine-grained details. This problem becomes more pronounced as the coefficient $\alpha$ approaches 0.5, making interpolated encodings distributed in a squashed sphere of minimal radius (about $\frac{\sqrt{2}}{2}T\sqrt{d}$), and the largest shift from the original distribution.
Another strategy named as \textit{spherical linear interpolation} (slerp) \citep{song2021ddim,song2021sde,ramesh2022hierarchical,song2023consistency} greatly alleviates (but is not free from) the squashing effect in high dimensions and thus stabilizes the synthesis quality of interpolated encodings. 
%It first calculates $\psi$ as the angle subtended by the arc connecting two encodings, and then blends them with $\hatx_{s_N}^{(\alpha)}=\frac{\sin [(1-\alpha) \psi]}{\sin (\psi)} \hatx_{s_N}^{(1)}+\frac{\sin (\alpha \psi)}{\sin (\psi)} \hatx_{s_N}^{(2)}$.Figure~\ref{fig:interpolation} shows a clear performance improvement of slerp over linear interpolation, and the results further confirm the necessity of ensuring the interpolated latent encodings (approximately) fall within the range trained by score-based models. 
But it still suffers from distribution shift in low dimensional cases (see Section~\ref{subsec:app_comparison}). 
The comparison of above two strategies further brings out a basic concept called \textit{in-distribution interpolation}. 
\begin{proposition}
	In-distribution interpolation preserves the latent distribution under interpolation. 
\end{proposition}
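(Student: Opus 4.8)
The plan is to first pin down what \emph{in-distribution interpolation} means: given two independent latent encodings $\hatx_{s_N}^{(1)}, \hatx_{s_N}^{(2)} \sim \calN(\mathbf{0}, T^2\bfI)$, I would define it as the \emph{rescaled} linear interpolation
\begin{equation}
	\hatx_{s_N}^{(\alpha)} = \frac{(1-\alpha)\,\hatx_{s_N}^{(1)} + \alpha\,\hatx_{s_N}^{(2)}}{\sqrt{(1-\alpha)^2 + \alpha^2}}, \qquad \alpha \in [0,1],
\end{equation}
or, more generally, any scheme $\hatx_{s_N}^{(\alpha)} = a(\alpha)\,\hatx_{s_N}^{(1)} + b(\alpha)\,\hatx_{s_N}^{(2)}$ with deterministic coefficients obeying $a(\alpha)^2 + b(\alpha)^2 = 1$ together with the endpoint conditions $a(0)=b(1)=1$ and $a(1)=b(0)=0$. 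This is exactly the ``re-scaling trick'' that normalizes the squashed radius $T\sqrt{d\,((1-\alpha)^2+\alpha^2)}$ identified in the preceding proposition back to $T\sqrt{d}$.

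The argument itself is a two-moment computation for Gaussians. First, since $\hatx_{s_N}^{(1)}$ and $\hatx_{s_N}^{(2)}$ are independent Gaussians, their concatenation is a centered Gaussian vector, so any fixed linear combination of its coordinates --- in particular $\hatx_{s_N}^{(\alpha)}$ --- is again Gaussian. Second, by linearity of expectation its mean is $a(\alpha)\mathbf{0} + b(\alpha)\mathbf{0} = \mathbf{0}$. Third, by independence the cross term vanishes, so its covariance is $a(\alpha)^2 T^2\bfI + b(\alpha)^2 T^2\bfI = (a(\alpha)^2 + b(\alpha)^2)\,T^2\bfI = T^2\bfI$. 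Since a Gaussian law is determined by its first two moments, $\hatx_{s_N}^{(\alpha)} \sim \calN(\mathbf{0}, T^2\bfI) = p_n$ for every $\alpha \in [0,1]$, which is precisely the claim.

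There is essentially no analytic obstacle here; the points that require care are presentational. One should stress that, unlike the merely asymptotic ($d\to\infty$) guarantee for spherical linear interpolation in the preceding proposition, the present statement is \emph{exact} and holds in every dimension $d$, because the interpolation coefficients are deterministic rather than data-dependent (slerp's coefficients involve the realized angle between the two encodings, which only concentrates at $\tfrac{1}{2}\pi$ asymptotically). The second subtlety is fixing the normalization so that $a(\alpha)^2+b(\alpha)^2=1$ holds identically in $\alpha$; without it the computation yields $(a^2+b^2)\,T^2\bfI$ and the distribution is shifted, which recovers the linear-interpolation failure mode. Finally, if one wished to strengthen the proposition to assert that the \emph{entire} interpolated sampling trajectory (not only the $t=T$ endpoint) stays on the data--noise bridge, one would additionally invoke the marginal-preserving property of the PF-ODE from Section~\ref{sec:background}; but as stated the proposition concerns only the latent distribution, for which the moment computation above suffices.
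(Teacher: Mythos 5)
Your proof is correct and matches the paper's approach: both argue that a fixed linear combination $a(\alpha)\hatx_{s_N}^{(1)}+b(\alpha)\hatx_{s_N}^{(2)}$ of independent zero-mean Gaussians is Gaussian with zero mean, and then verify that the normalization $a(\alpha)^2+b(\alpha)^2=1$ (the paper parametrizes directly as $a=\sqrt{1-\lambda^2}$, $b=\lambda$) forces the covariance to stay $T^2\bfI$. Your general formulation in terms of the constraint $a^2+b^2=1$, and the remark that the guarantee is exact in every dimension $d$ because the coefficients are deterministic rather than data-dependent (unlike slerp), are nice clarifications but the underlying moment computation is the same as the paper's.
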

%, thanks to the linearity of Gaussian random variable. 
This concept gives rise to a family of interpolation strategies. In particular, for the Gaussian encoding $\hatx_{t_N}$, there exists a variance-preserving interpolation $\hatx_{t_N}^{(\lambda)} = \sqrt{\left(1-\lambda^2\right)} \hatx_{t_N}^{(1)} +\lambda \hatx_{t_N}^{(2)}\sim \calN\left(\mathbf{0}, T^2\bfI\right)$ to prevent distribution shift. Since a uniform $\lambda$ makes $\hatx_{t_N}^{(\lambda)}$ largely biased to $\hatx_{t_N}^{(1)}$, we derive $\lambda$ by re-scaling other heuristic strategies to scatter the coefficient more evenly, such as the \textit{normalized linear} (n-linear) interpolation ($\lambda=\alpha/\sqrt{\alpha^2+(1-\alpha^2)}$) with uniformly sampled coefficient $\alpha$. As shown in Figure~\ref{fig:interpolation}, this simple re-scaling trick significantly boosts the visual quality compared with the original counterpart. Additionally, slerp behaves as $\lambda=\sin \alpha \frac{\pi}{2}$ in high dimensions due to $\psi\approx \frac{\pi}{2}$, and this coefficient was used in \citep{dhariwal2021diffusion} for interpolation. 
%achieves a slightly better FID than slerp.

With the help of such an in-distribution interpolation, all interpolated encodings faithfully move along our trained ODE trajectory with a reliable denoising estimation for $r_{\bftheta}(\hatx_{t_N}^{(\lambda)}, t_N)$. 
We further calculate the k-nearest neighbors of our generated images to the real data (see Figure~\ref{fig:knn_uncond_ddpmpp}), to demonstrate how different modes are smoothly traversed in this process $\hatx_{t_N}\rightarrow \hatx_{t_N}^{\lambda}\rightarrow \hatx_{t_0}^{\lambda}$. 
%Finally, it is non-trivial to simply generalize the above analysis for $\hatx_{s_N}$ to perform interpolation in intermediate encodings $\{\hatx_{s}\}_{s_{1}}^{s_{N-1}}$. Nevertheless, we should still pay attention to the distribution shift issue and struggle to minimize its negative effect during interpolation, and we leave this as future work.

%in-distribution interpolation strategy to respect the latent structure. 
\subsection{Detailed Comparison of Three Latent Interpolation Strategies}
\label{subsec:app_comparison}
\begin{figure}[t]
	\centering
	\begin{subfigure}[b]{0.32\textwidth}
		\includegraphics[width=\textwidth]{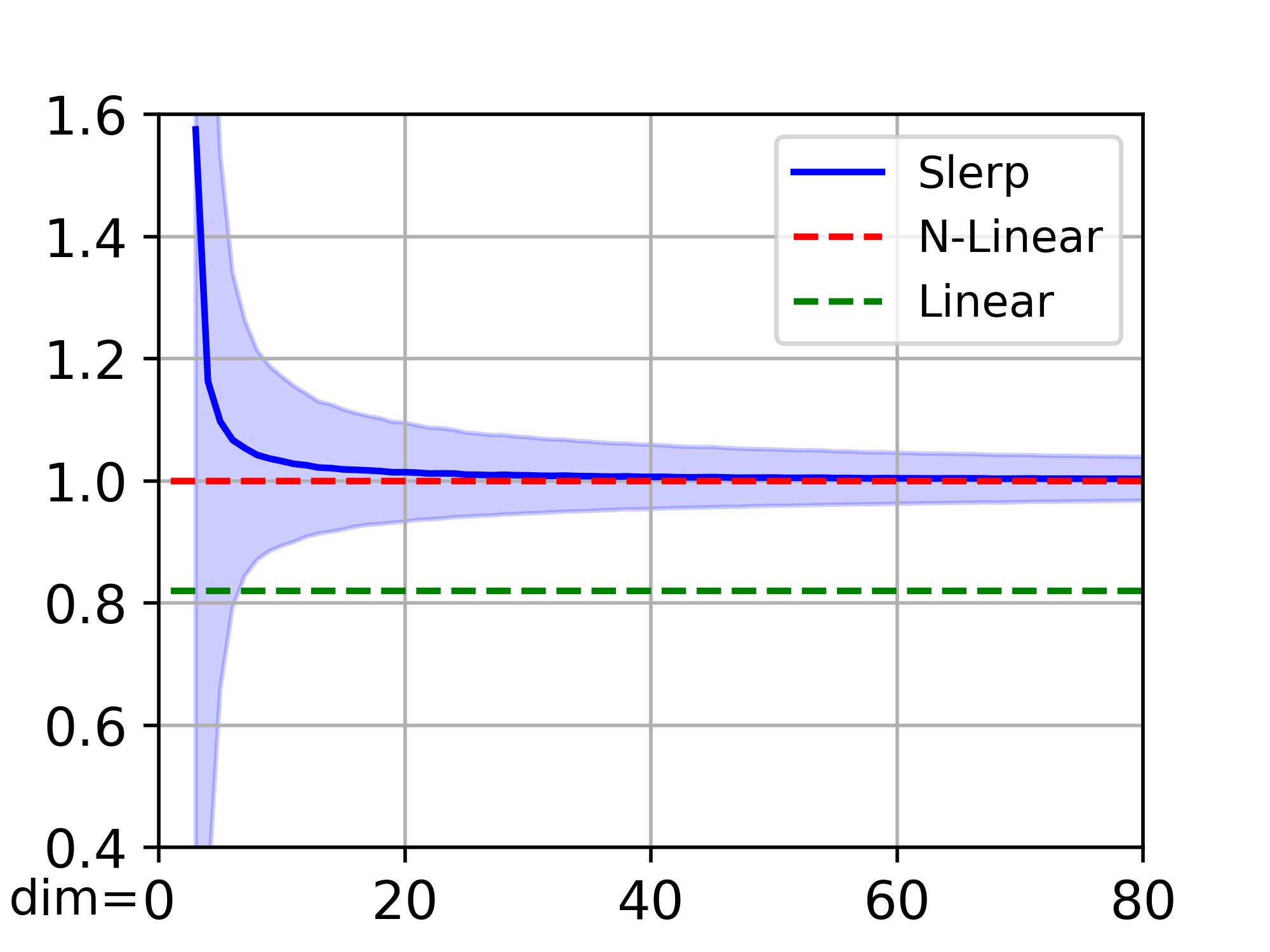}
		\caption{$f(0.1)$.}
	\end{subfigure}
	\begin{subfigure}[b]{0.32\textwidth}
		\includegraphics[width=\textwidth]{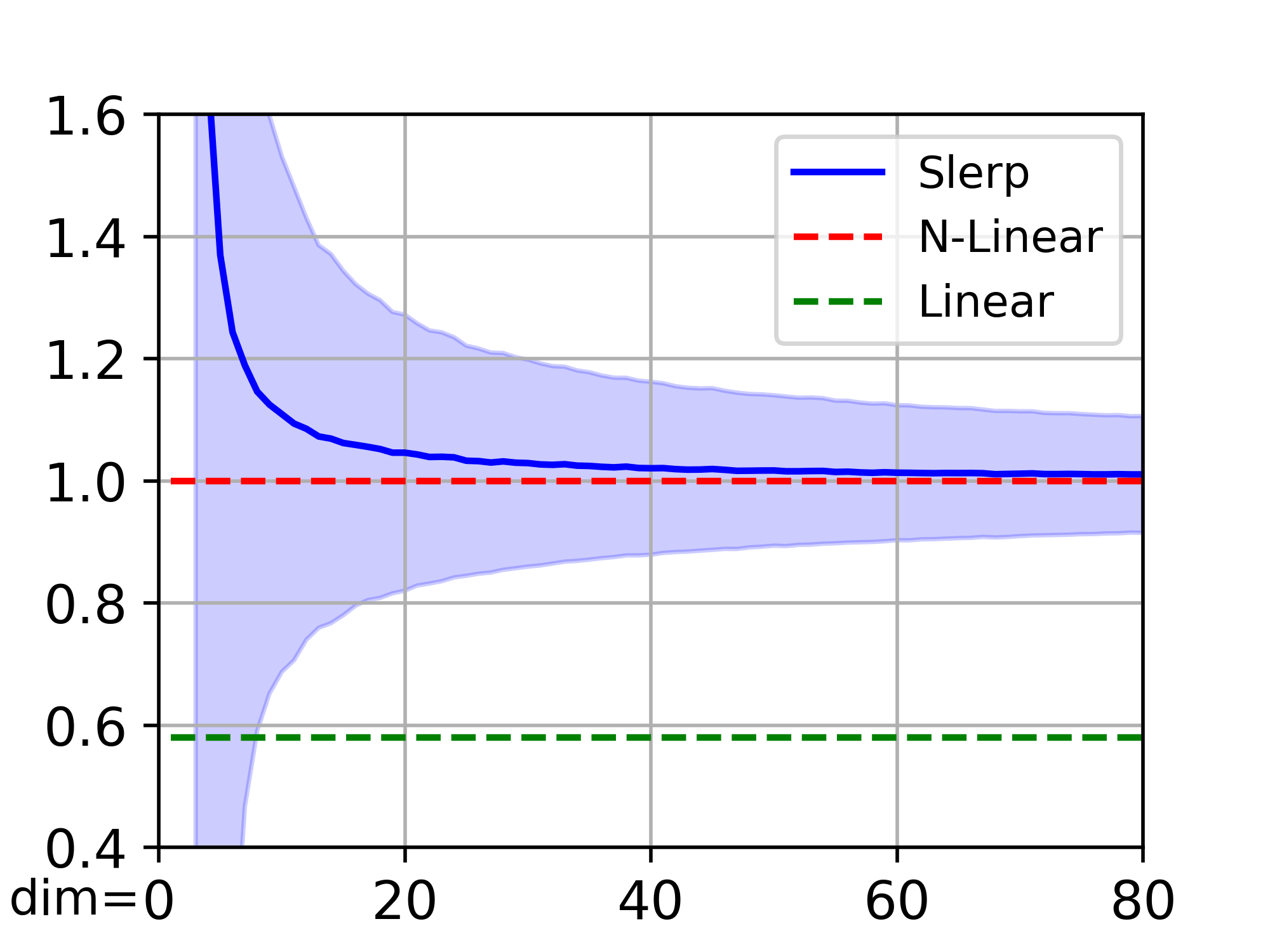}
		\caption{$f(0.3)$.}
	\end{subfigure}
	\begin{subfigure}[b]{0.32\textwidth}
		\includegraphics[width=\textwidth]{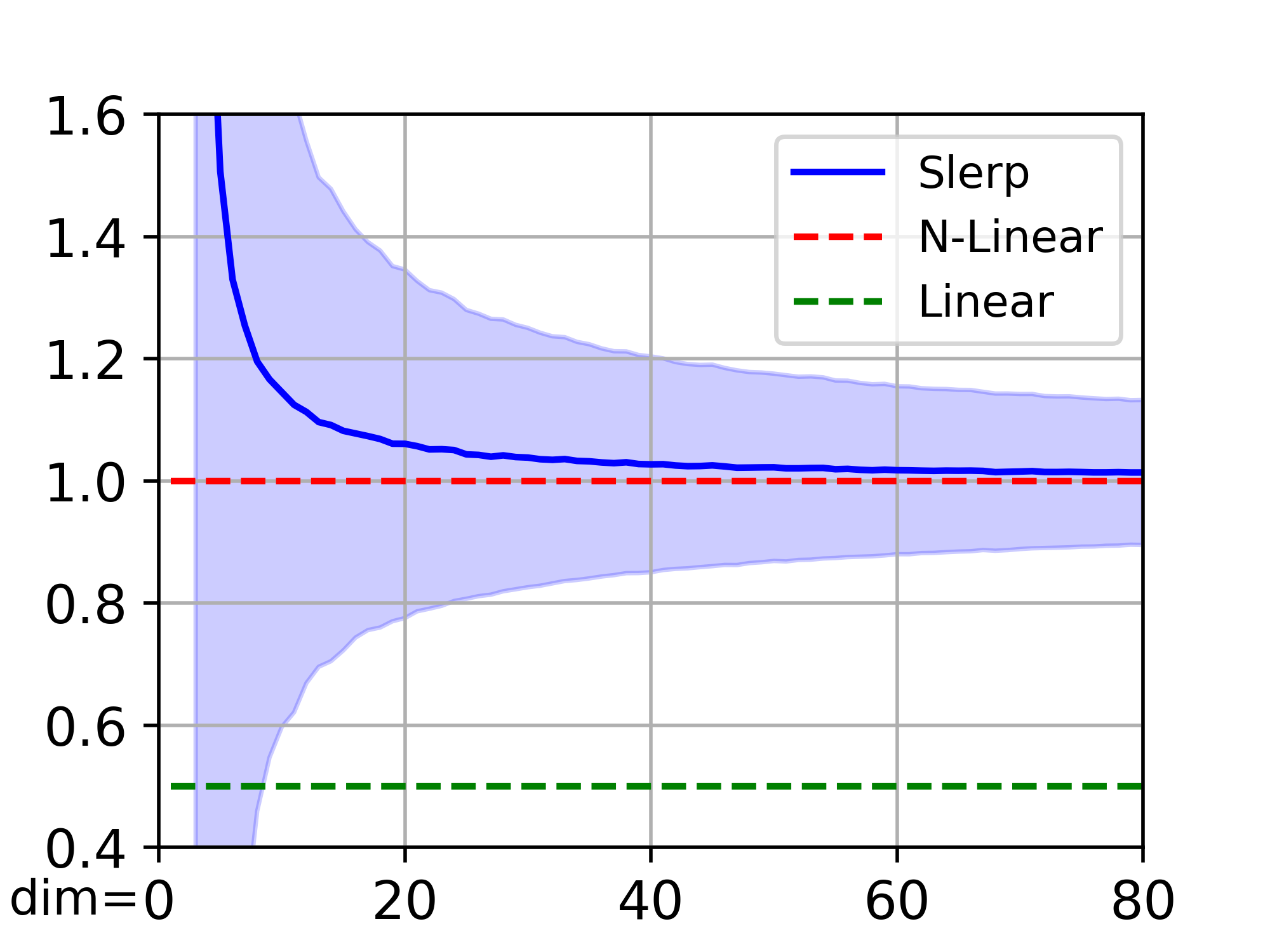}
		\caption{$f(0.5)$.}
	\end{subfigure}
	\caption{The comparison of three interpolation strategies with different dimensions.}
	\label{fig:variance}
\end{figure}

Given two independent latent encodings, $\hatx_{t_N}^{(1)},\, \hatx_{t_N}^{(2)}\sim \calN(\mathbf{0}, T^2\bfI)$, and $0< \alpha< 1$, 
we have three latent interpolation strategies as follows and their comparison is summarized in Table~\ref{app:comparison}.

\begin{itemize}
	\item \textit{linear interpolation}: 
	\begin{equation}
		\hatx_{t_N}^{(\alpha)} = (1-\alpha) \hatx_{t_N}^{(1)} +\alpha \hatx_{t_N}^{(2)}; 
	\end{equation}
	\item \textit{spherical linear interpolation} (slerp): 
	\begin{equation}
		\hatx_{t_N}^{(\alpha)}=\frac{\sin [(1-\alpha) \psi]}{\sin (\psi)} \hatx_{t_N}^{(1)}+\frac{\sin (\alpha \psi)}{\sin (\psi)} \hatx_{t_N}^{(2)}, \quad { \psi  = \arccos \left(
			\frac{\langle \hatx_{t_N}^{(1)},\hatx_{t_N}^{(2)} \rangle }{{\lVert {\hatx_{t_N}^{(1)}} \rVert}_2 \cdot {\lVert \hatx_{t_N}^{(2)} \rVert}_2} \right)},
	\end{equation}
	%\scriptstyle
	where $\psi$ is calculated as the angle subtended by the arc connecting two encodings;
	\item \textit{in-distribution interpolation}: 
	\begin{equation}
		\hatx_{t_N}^{(\lambda)} = \sqrt{\left(1-\lambda^2\right)} \hatx_{t_N}^{(1)} +\lambda \hatx_{t_N}^{(2)}.
	\end{equation}
	In particular, by setting $\lambda=\alpha/\sqrt{\alpha^2+(1-\alpha^2)}$, we obtain
	\item \textit{normalized linear interpolation} (n-linear):
	\begin{equation}
		\hatx_{t_N}^{(\alpha)} = 
		\frac{\sqrt{1-\alpha^2}}{\sqrt{\alpha^2+(1-\alpha^2)}}\hatx_{t_N}^{(1)} +\frac{\alpha}{\sqrt{\alpha^2+(1-\alpha^2)}} \hatx_{t_N}^{(2)}.
	\end{equation}
\end{itemize}
Thanks to the linearity of Gaussian random variable, the interpolated latent encoding by each of the above strategies still belongs to a Gaussian distribution with zero mean. We thus only need to check the variance to see whether the distribution shift happens.

The variance comparison is provided as follows
\begin{itemize}
	\item \textit{linear interpolation}: 
	\begin{equation}
		\begin{aligned}
			\bbV\left(\hatx_{t_N}^{(\alpha)}\right) &= \bbV\left((1-\alpha) \hatx_{t_N}^{(1)} +\alpha \hatx_{t_N}^{(2)}\right)\\
			&= (1-\alpha)^2\bbV\left( \hatx_{t_N}^{(1)} \right)+\alpha^2 \bbV\left(\hatx_{t_N}^{(2)}\right)\\
			&=\left((1-\alpha)^2+\alpha^2\right)T^2\bfI
			\overset{\mathrm{def}}{=}f(\alpha)T^2\bfI,  
		\end{aligned}
	\end{equation}
	where $\sup f(\alpha)=1$ if $\alpha\rightarrow0\,\text{or}\ 1$, and $\min f(\alpha)=0.5$ if $\alpha=0.5$.
	%quickly pushes the resulting encoding $\hatx_{s_N}^{(\alpha)}$ away from the original distribution into a squashed sphere of radius $T\sqrt{\scriptstyle d\left((1-\alpha)^2+\alpha^2\right)}$, which almost has no intersection with the original sphere.
	\item \textit{spherical linear interpolation} (slerp): 
	\begin{equation}
		\begin{aligned}
			\bbV\left(\hatx_{t_N}^{(\alpha)}\right)
			&=\bbV\left(\frac{\sin [(1-\alpha) \psi]}{\sin (\psi)} \hatx_{t_N}^{(1)}+\frac{\sin (\alpha \psi)}{\sin (\psi)} \hatx_{t_N}^{(2)}\right)\\    
			&= \frac{\sin^2 [(1-\alpha) \psi]+\sin^2 (\alpha \psi)}{\sin^2 (\psi)}T^2\bfI\overset{\mathrm{def}}{=}f(\alpha)T^2\bfI,
		\end{aligned}
	\end{equation}
	where $f(0.5)=\frac{2\sin^2 [\psi/2]}{\sin^2 (\psi)}=\frac{1}{2\cos^2 (\psi/2)}=2\sec^2(\psi/2) \ge 0.5$. 
	%if $\alpha=$, and $\min f(\alpha)=\,$ if $\alpha=$. `
	
	Since these two latent encodings $\hatx_{t_N}^{(1)}$ and $\hatx_{t_N}^{(2)}$ are almost orthogonal with the angle $\frac{1}{2}\pi+O_p(d^{-{0.5}})$ in high dimensions \citep{hall2005geometric,vershynin2018high}, we can conclude that asymptotically, 
	\begin{equation}
		\begin{aligned}
			\lim_{d\rightarrow \infty}\bbV\left(\hatx_{t_N}^{(\alpha)}\right)
			&= \left(\sin^2 [(1-\alpha) \frac{\pi}{2}]+\sin^2 (\alpha \frac{\pi}{2})\right)T^2\bfI=T^2\bfI,
		\end{aligned}
	\end{equation}
	\item \textit{in-distribution interpolation}: 
	\begin{equation}
		\bbV\left(\hatx_{t_N}^{(\lambda)}\right) = \bbV\left(\sqrt{\left(1-\lambda^2\right)} \hatx_{t_N}^{(1)} +\lambda \hatx_{t_N}^{(2)}\right)=f(\lambda)T^2\bfI=T^2\bfI.
	\end{equation}
\end{itemize}
We provide a comparison of three interpolation strategies with different dimensions in Figure~\ref{fig:variance}.
\begin{remark}
	In high dimensions, linear interpolation \citep{ho2020ddpm} shifts the latent distribution while spherical linear interpolation \citep{song2021ddim} asymptotically ($d\rightarrow \infty$) maintains the latent distribution. 
\end{remark}
\begin{remark}
	In-distribution interpolation preserves the latent distribution under interpolation. 
\end{remark}

\begin{remark}
	If two latent encodings share the same magnitude, i.e., $\lVert\hatx_{t_N}^{(1)}\rVert^2=\lVert\hatx_{t_N}^{(2)}\rVert^2=C^2$, then spherical linear interpolation maintains the magnitude, i.e., $\lVert\hatx_{t_N}^{(\alpha)}\rVert^2=C^2$.
\end{remark}

\begin{proof}
	Since $\sin \left[\left( {1 - \alpha } \right)\psi \right] = \sin \psi \cos \left( \alpha \psi \right) - \cos \psi \sin \left( \alpha \psi  \right)$, we have
	\begin{equation}
		\label{eq:slerp_norm}
		\begin{aligned}
			\lVert\hatx_{t_N}^{(\alpha)}\rVert^2 
			&= \left\| \frac{\sin \left[ {\left( {1 - \alpha } \right)\psi } \right]}{\sin \psi }{\hatx_{t_N}^{(1)}} +
			{\frac{\sin \left( {\alpha \psi } \right)}{\sin \psi }}\hatx_{t_N}^{(2)} \right\|^2 \\
			&= \frac{1}{\sin^2\psi}
			\left(\sin^2
			\left[ 
			{\left( {1 - a} \right)\psi } 
			\right]
			\lVert {\hatx_{t_N}^{(1)}}\rVert^2+\sin^2\left( {\alpha \psi } \right)
			\lVert{\hatx_{t_N}^{(2)}} \rVert^2  \right.\\&\left.
			+2\sin \left[ {\left( {1 - \alpha } \right)\psi } \right]\sin \left( {\alpha \psi } \right)\langle {\hatx_{t_N}^{(1)},\hatx_{t_N}^{(2)}} \rangle \right),
		\end{aligned}    
	\end{equation}
	where
	\begin{equation}
		\label{eq:slerp_norm1}
		\begin{aligned}
			& \sin^2\left[ {\left( {1 - a} \right)\psi } \right]\lVert {\hatx_{t_N}^{(1)}}\rVert^2 + \sin^2\left( {\alpha \psi } \right)\lVert {\hatx_{t_N}^{(2)}}\rVert^2 \\
			&= \left\{ {\left[ {{{\sin }^2}\psi {{\cos }^2}\left( {\alpha \psi } \right) + {{\cos }^2}\psi {{\sin }^2}\left( {\alpha \psi } \right) - 2\sin \psi \cos \psi \sin \alpha \psi \cos \alpha \psi } \right] + {{\sin }^2}\left( {\alpha \psi } \right)} \right\}{C^2} \\
			&= \left[ {{{\sin }^2}\psi  + 2{{\cos }^2}\psi {{\sin }^2}\left( {\alpha \psi } \right) - 2\sin \psi \cos \psi \sin \left( {\alpha \psi } \right)\cos \left( {\alpha \psi } \right)} \right]{C^2}
		\end{aligned}    
	\end{equation}
	and
	\begin{equation}
		\label{eq:slerp_norm2}
		\begin{aligned}
			& 2\sin \left[ {\left( {1 - \alpha } \right)\psi } \right]\sin \left( {\alpha \psi } \right)\langle {\hatx_{t_N}^{(1)},\hatx_{t_N}^{(2)}} \rangle \\
			%&= \left[ {2\sin \psi \sin \left( {\alpha \psi } \right)\cos \left( {\alpha \psi } \right) - 2\cos \psi {\sin^2}\left( {\alpha \psi } \right)} \right]\langle {\hatx_{s_N}^{(1)},\hatx_{s_N}^{(2)}} \rangle \\
			& = \left[ {2\sin \psi \sin \left( {\alpha \psi } \right)\cos \left( {\alpha \psi } \right) - 2\cos \psi {{\sin }^2}\left( {\alpha \psi } \right)} \right]{C^2}\cos \psi \\
			&= \left[ {2\sin \psi \cos \psi \sin \left( {\alpha \psi } \right)\cos \left( {\alpha \psi } \right) - 2{\cos^2}\psi {\sin^2}\left( {\alpha \psi } \right)} \right]{C^2}.
		\end{aligned}    
	\end{equation}
	We then substitute \eqref{eq:slerp_norm1} and \eqref{eq:slerp_norm2} into \eqref{eq:slerp_norm} and obtain $\lVert\hatx_{t_N}^{(\alpha)}\rVert^2 = C^2$.
\end{proof}
\begin{remark}
	If two latent encodings share the same magnitude, i.e., $\lVert\hatx_{t_N}^{(1)}\rVert^2=\lVert\hatx_{t_N}^{(2)}\rVert^2=C^2$, then linear interpolation shrinks the magnitude, i.e., $\lVert\hatx_{t_N}^{(\alpha)}\rVert^2< C^2$ while in-distribution interpolation approximately preserves the magnitude in high dimensions.
\end{remark}
\begin{proof}
	
	In high dimensions ($d\gg 1$, $C=T\sqrt{d}\gg 1$), $\lvert\langle \hatx_{t_N}^{(1)}, \hatx_{t_N}^{(2)}\rangle \rvert\approx C$ \citep{vershynin2018high}. Then, we have
	
	\begin{itemize}
		\item \textit{linear interpolation:}
		\begin{equation}
			\begin{aligned}
				\lVert\hatx_{t_N}^{(\alpha)}\rVert^2 
				&= \left\| (1-\alpha){\hatx_{t_N}^{(1)}} +
				\alpha\hatx_{t_N}^{(2)} \right\|^2\\
				&=(1-\alpha)^2\lVert{\hatx_{t_N}^{(1)}}\rVert^2 + \alpha^2\lVert\hatx_{t_N}^{(2)}\rVert^2 + 2\alpha(1-\alpha)\langle \hatx_{t_N}^{(1)}, \hatx_{t_N}^{(2)}\rangle\\
				&\approx \left((1-\alpha)^2 + \alpha^2\right)C^2<C^2.
			\end{aligned}
		\end{equation}
		\item \textit{in-distribution interpolation:}
		\begin{equation}
			\begin{aligned}
				\lVert\hatx_{t_N}^{(\alpha)}\rVert^2 
				&= \left\| \sqrt{(1-\lambda^2)}{\hatx_{t_N}^{(1)}} +
				\lambda\hatx_{t_N}^{(2)} \right\|^2\\
				&=(1-\lambda^2)\lVert{\hatx_{t_N}^{(1)}}\rVert^2 + \lambda^2\lVert\hatx_{t_N}^{(2)}\rVert^2 + 2\lambda\sqrt{(1-\lambda^2)}\langle \hatx_{t_N}^{(1)}, \hatx_{t_N}^{(2)}\rangle\\
				&\approx C^2.
			\end{aligned}
		\end{equation}   
	\end{itemize}
\end{proof}

\begin{table}[t]
	\caption{The comparison of three latent interpolation strategies.}
	\label{app:comparison}
	\centering
	\resizebox{0.7\textwidth}{!}{
		\begin{tabular}{c|cc}
			\toprule
			Interpolation & Distribution Shift & Magnitude Shift \\
			\midrule
			Linear & \checkmark & \checkmark \\
			Spherical linear & (asymptotic) \ding{53} & \ding{53} \\
			In-distribution & \ding{53} & (asymptotic, approximate) \ding{53} \\	
			\bottomrule
		\end{tabular}
	}
\end{table}
%\newpage

%\section*{Appendix}
%\label{appendix}

\begin{figure}[t]
	\centering
	\includegraphics[width=0.8\textwidth]{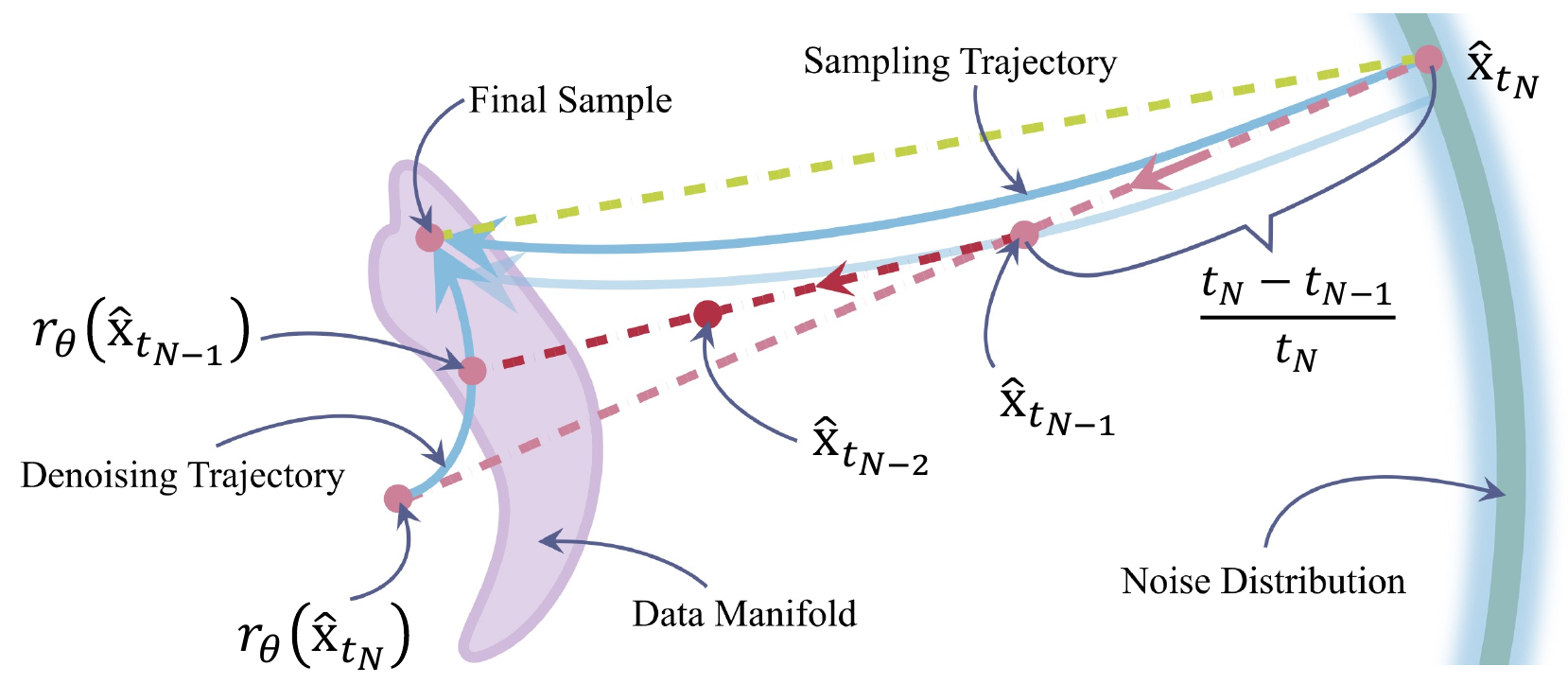}
	\caption{An illustration of two consecutive Euler steps. Each Euler step in ODE-based sampling is a convex combination of the denoising output and the current position to determine the next position.}
	\label{fig:app_model}
\end{figure}
\section{Additional Experimental Details}
\label{app:exp}
\subsection{Experimental Settings}
Unless otherwise specified, we follow the configurations and experimental settings of a recent framework called EDMs, with $\bff(\bfx, t)=\mathbf{0}$ and $g(t)=\sqrt{2t}$ \citep{karras2022edm,song2023consistency}. In this case, the forward VE SDE is $\rmdx =\sqrt{2t} \,  \rmd \bfw_t$, and the empirical probability flow ODE is $
\rmdx = \frac{\bfx - r_{\bftheta}\left(\bfx; t\right)}{t}\rmd t$. The default ODE-based sampler is Heun’s 2\textsuperscript{nd} order method starting from $\hatx_{t_N} \sim \calN(\mathbf{0}, T^2 \bfI)$ to $\hatx_{t_0}$. 
The time horizon is divided with the formula $t_n=(t_1^{1/\rho}+\frac{n-1}{N-1}(t_N^{1/\rho}-t_1^{1/\rho}))^{\rho}$, where $t_1=0.002$, $t_N=80$, $n\in [1,N]$ and $t_0=0$, $\rho=7$ \citep{karras2022edm}.
\begin{itemize}
    \item We adopt the official implementation and pre-trained checkpoints\footnote{\url{https://nvlabs-fi-cdn.nvidia.com/edm} (Creative Commons Attribution-NonCommercial ShareAlike 4.0 International License)} of EDMs \citep{karras2022edm} for experiments on CIFAR-10\footnote{\url{https://www.cs.toronto.edu/~kriz/cifar.html}}~\citep{krizhevsky2009learning} (32$\times$32) with $N=18$, NFE $=35$; and FFHQ\footnote{\url{https://github.com/NVlabs/ffhq-dataset}}~\citep{karras2019style} (64$\times$64) with $N=40$, NFE $=79$; and AFHQv2\footnote{\url{https://github.com/clovaai/stargan-v2/blob/master/README.md\#animal-faces-hq-dataset-afhq}}~\citep{choi2020stargan} (64$\times$64) with $N=40$, NFE $=79$.
    \item We adopt the re-implementation of EDMs and pre-trained checkpoints\footnote{\url{https://github.com/openai/consistency_models} (MIT License)} from consistency models \citep{song2023consistency} for experiments on LSUN Bedroom, LSUN Cat\footnote{\url{https://www.yf.io/p/lsun}}~\citep{yu2015lsun} (256$\times$256), and ImageNet-64\footnote{\url{https://www.image-net.org/index.php}}~\citep{russakovsky2015ImageNet}  (64$\times$64), with $N=40$, NFE $=79$.
    %since the official implementation did not provide pre-trained models on these datasets (unconditional generation case).
\end{itemize}
%``The data is available for free to researchers for non-commercial use.''
The experimental results provided in Sections~\ref{subsec:more_results_vesde} and~\ref{subsec:more_results_other_sde} confirm that the geometric structures observed in the main submission widely exist on other datasets (such as LSUN Bedroom, LSUN Cat) and other model settings (such as conditional generation, various network architectures). We also provide the preliminary results on VP-SDE in Section~\ref{subsec:vpsde}. 

In Section~\ref{subsec:visual}, we demonstrate that our proposed ODE-Jump works well on other datasets (such as FFHQ, AFHQv2), other ODE-based samplers (such as DPM-Solver, PLMS), a SDE-based sampler and a popular large-scale text-to-image model called Stable Diffusion\footnote{\url{https://github.com/CompVis/stable-diffusion/} (CreativeML Open RAIL-M License)} \citep{rombach2022ldm}.  

All experiments are conducted on 4 NVIDIA A100 GPUs.

\subsection{More Results on the VE-SDE}
\label{subsec:more_results_vesde}
In the main submission, we conduct unconditional generation with the checkpoint\footnote{\url{https://nvlabs-fi-cdn.nvidia.com/edm/pretrained/edm-cifar10-32x32-uncond-vp.pkl}} of the DDPM++ architecture to observe geometric structures of the sampling trajectory and the denoising trajectory. Under this experimental setup, we provide more results in this section, including: 
\begin{itemize}
    \item \textit{A Complement of the Observation~\ref{obs:faster}:} 
    The denoising trajectory converges faster than the sampling trajectory in terms of the expected $\ell_2$ distance, as shown in Figure~\ref{fig:trajectory_uncond_ddpmpp}. 
    \item \textit{Angle Deviation:} The results are provided in Figure~\ref{fig:cosine_uncond_ddpmpp};
    \item \textit{Diagnosis of Score Deviation:} Apart from Figure~\ref{fig:diagnosis} in the main submission, we also observe that there exists a few cases where $\{r_{\bftheta}^{\star}(\hatx_{t})\}$ and $\{r_{\bftheta}^{\star}(\hatx_{t}^{\star})\}$ almost share the same final sample, as shown in Figure~\ref{fig:diagnosis_uncond_ddpmpp}. In this case, the score deviation is relatively small such that for the final sample of the denoising trajectory, its nearest neighbor to the real image in the dataset is exactly the final sample of the corresponding optimal trajectory.
    \item \textit{Trajectory Shape on Other Datasets:} The results are provided in Figure~\ref{fig:bedroom_uncond_ddpmpp} and~\ref{fig:cat_uncond_ddpmpp}.
    Similar to Figure~\ref{fig:magnitude}, we track the magnitude of original data in the forward SDE process at 500 time steps uniformly distributed in $\left[0, 80\right]$ and we track the magnitude of synthesized samples in the backward ODE process, starting from 50k randomly-sampled noise. We calculate the means and standard deviations at each time step, and plot them as 
    circle mark and short black line, respectively, in Figure~\ref{fig:magnitude_bedroom} and~\ref{fig:magnitude_cat}. 
    We find that the magnitude shift happens in the last few time steps (see Figure~\ref{fig:magnitude_bedroom} and~\ref{fig:magnitude_cat}), unlike the case in Figure~\ref{fig:magnitude} where the two distributions are highly aligned. 
    In Figure~\ref{fig:trajectory_bedroom} and~\ref{fig:trajectory_cat}, we visualize the deviation in the sampling trajectory and the denoising trajectory as Figure~\ref{fig:trajectory}. A sampling trajectory and its associated denoising trajectory are also provided for illustration.  
    %At the top of each sub-figure, we display the trajectory every two steps. 
\end{itemize}

\begin{figure}[t]
	\centering
	\includegraphics[width=0.42\textwidth]{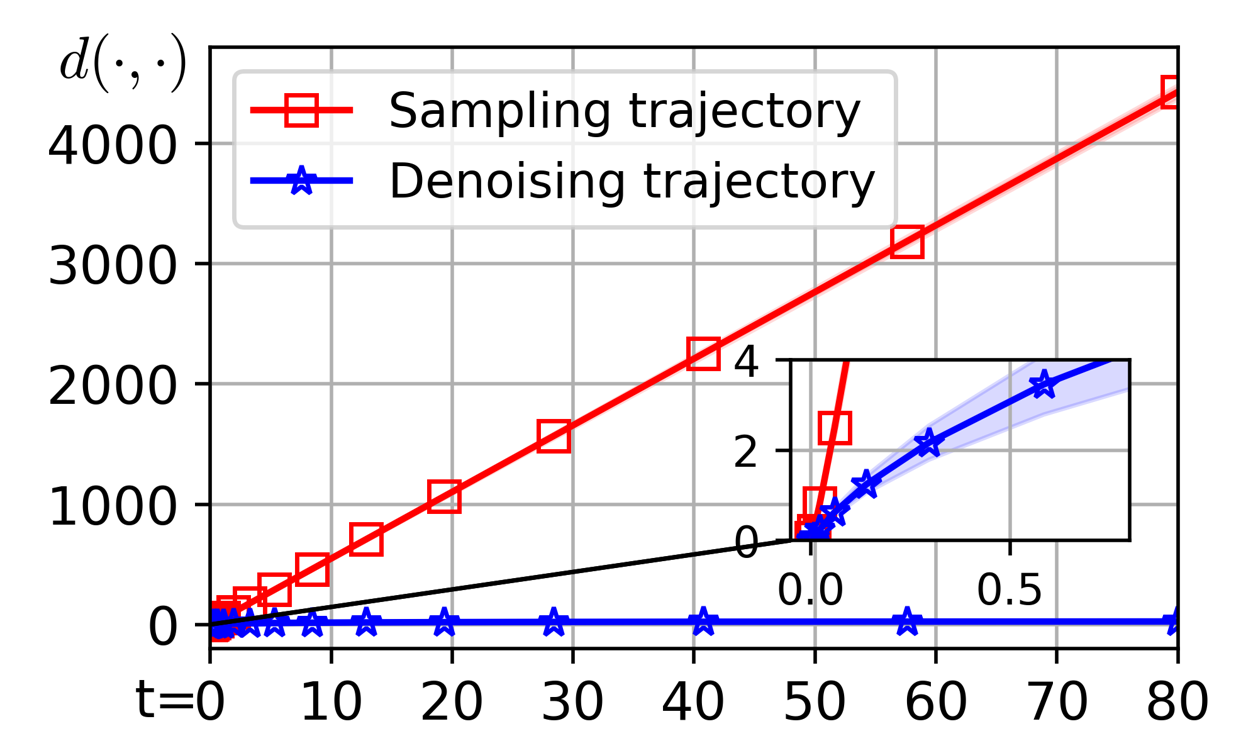}
	\caption{We compare the blue curves in Figure~\ref{fig:trajectory}, and find that the expected $\ell_2$ distance between the intermediate samples and the final sample in the denoising trajectory is always smaller than that of the sampling trajectory, which supports the fast convergence of the denoising trajectory. 
	}
	\label{fig:trajectory_uncond_ddpmpp}
\end{figure}
\begin{figure}[t]
	\centering
	\includegraphics[width=0.42\textwidth]{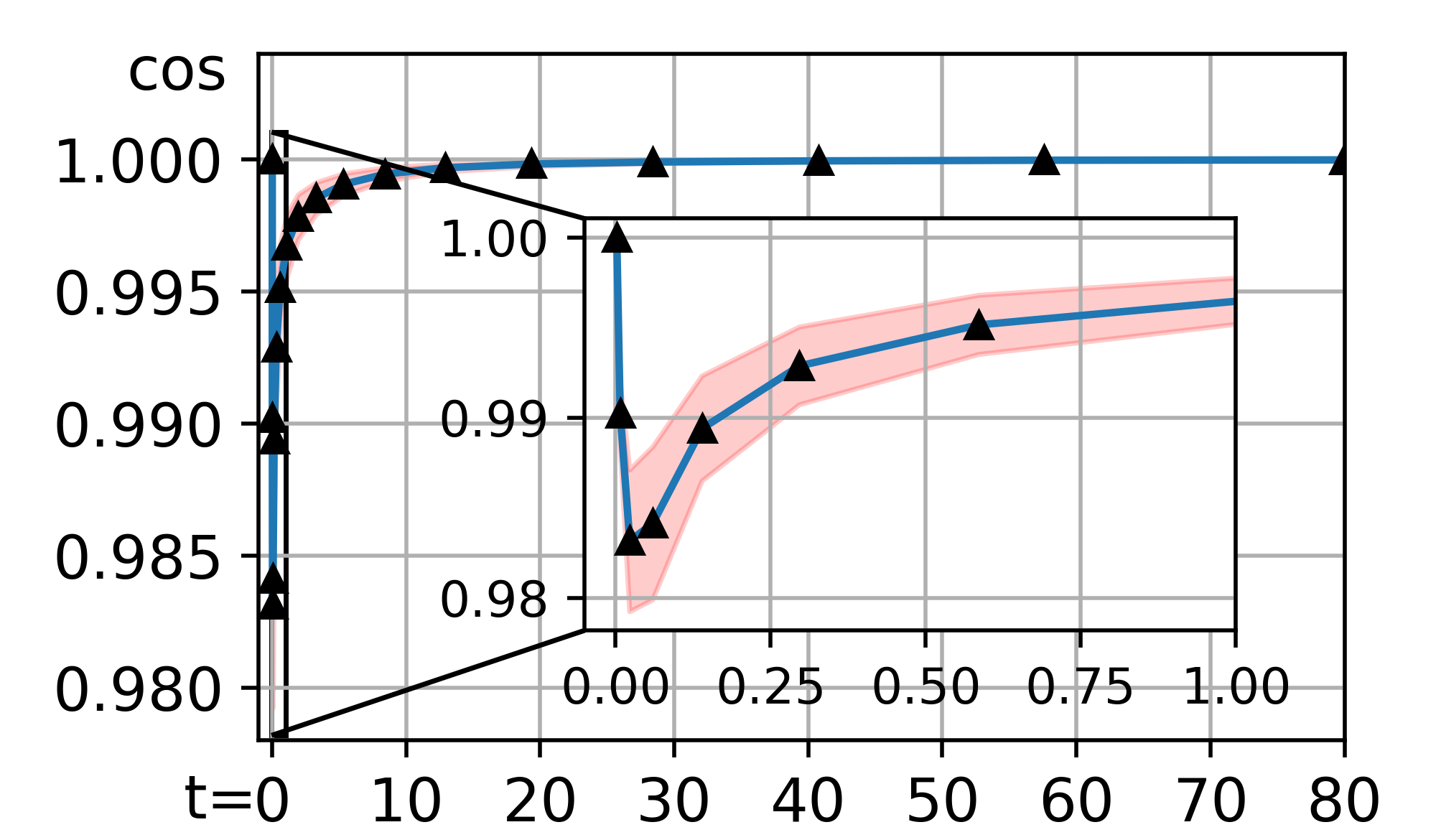}
	\caption{The angle deviation in the sampling trajectory is calculated by the cosine between the backward ODE direction $-\frac{\rmdx_t}{\rmd t}\big|_t$ and the direction pointing to the final point $(\hatx_{t_0} - \hatx_{t})$ at discrete time $t$. The angle deviation always stays in a narrow range (from about 0.98 to 1.00). 
	}
	\label{fig:cosine_uncond_ddpmpp}
\end{figure}

\begin{figure}[t]
	\centering
	\begin{subfigure}[b]{0.98\textwidth}
		\includegraphics[width=\textwidth]{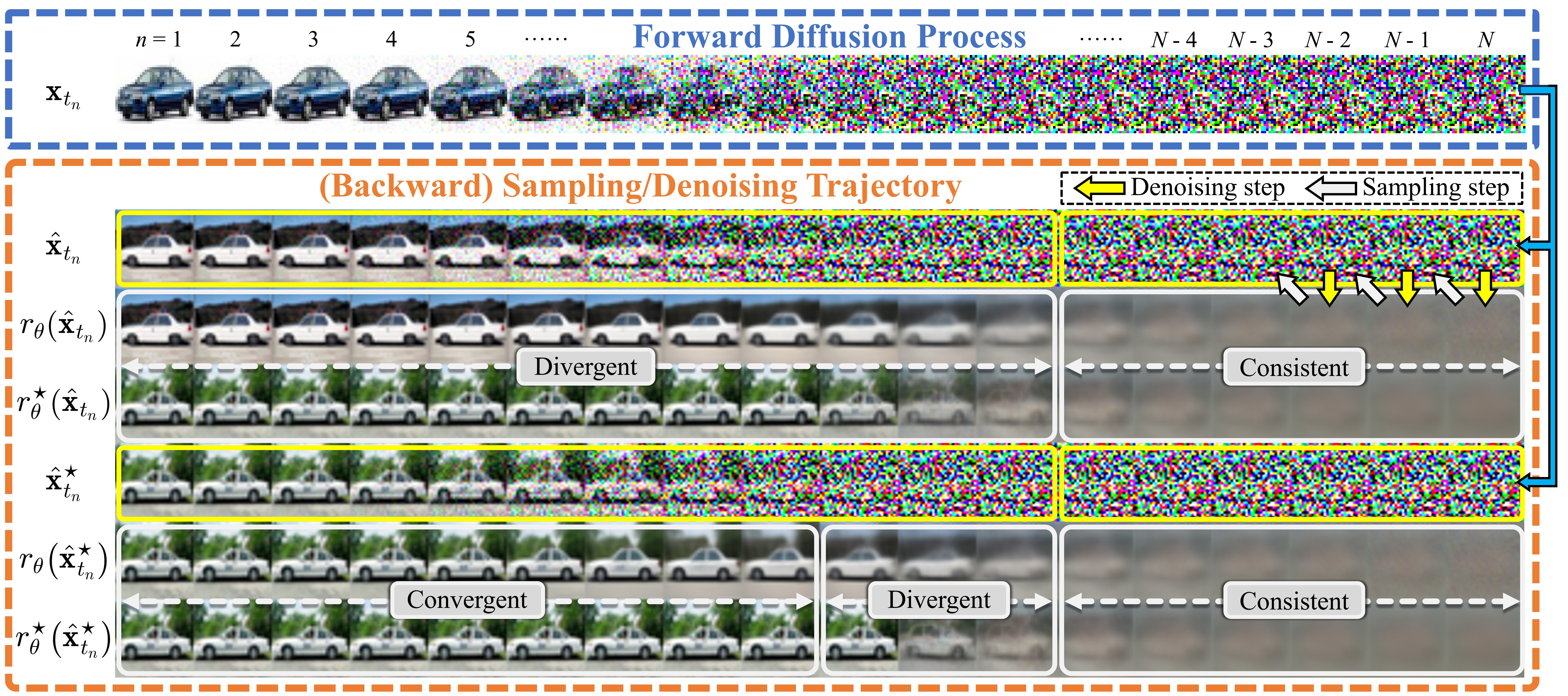}
	\end{subfigure}
	\begin{subfigure}[b]{0.98\textwidth}
		\includegraphics[width=\textwidth]{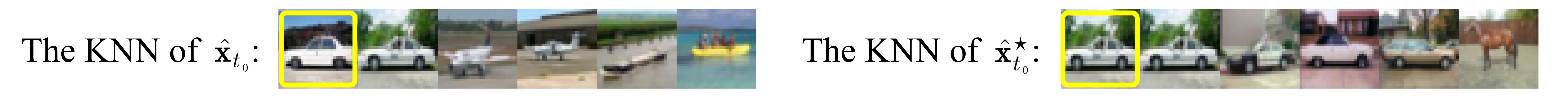}
	\end{subfigure}
	\caption{The unconditional generation on CIFAR-10. \textit{Top:} We visualize a forward diffusion process of a randomly-selected image to obtain its encoding $\hatx_{t_N}$ and simulate multiple trajectories starting from this encoding. 
		%The timestamp ranges from $s_N$ to $s_1$ for better comparison.
		\textit{Bottom:} The k-nearest neighbor (k=5) of $\hatx_{t_0}$ and $\hatx_{t_0}^{\star}$.}
	\label{fig:diagnosis_uncond_ddpmpp}
\end{figure}

\begin{figure}[t]
	\centering
	\begin{subfigure}[b]{0.98\textwidth}
		\includegraphics[width=\textwidth]{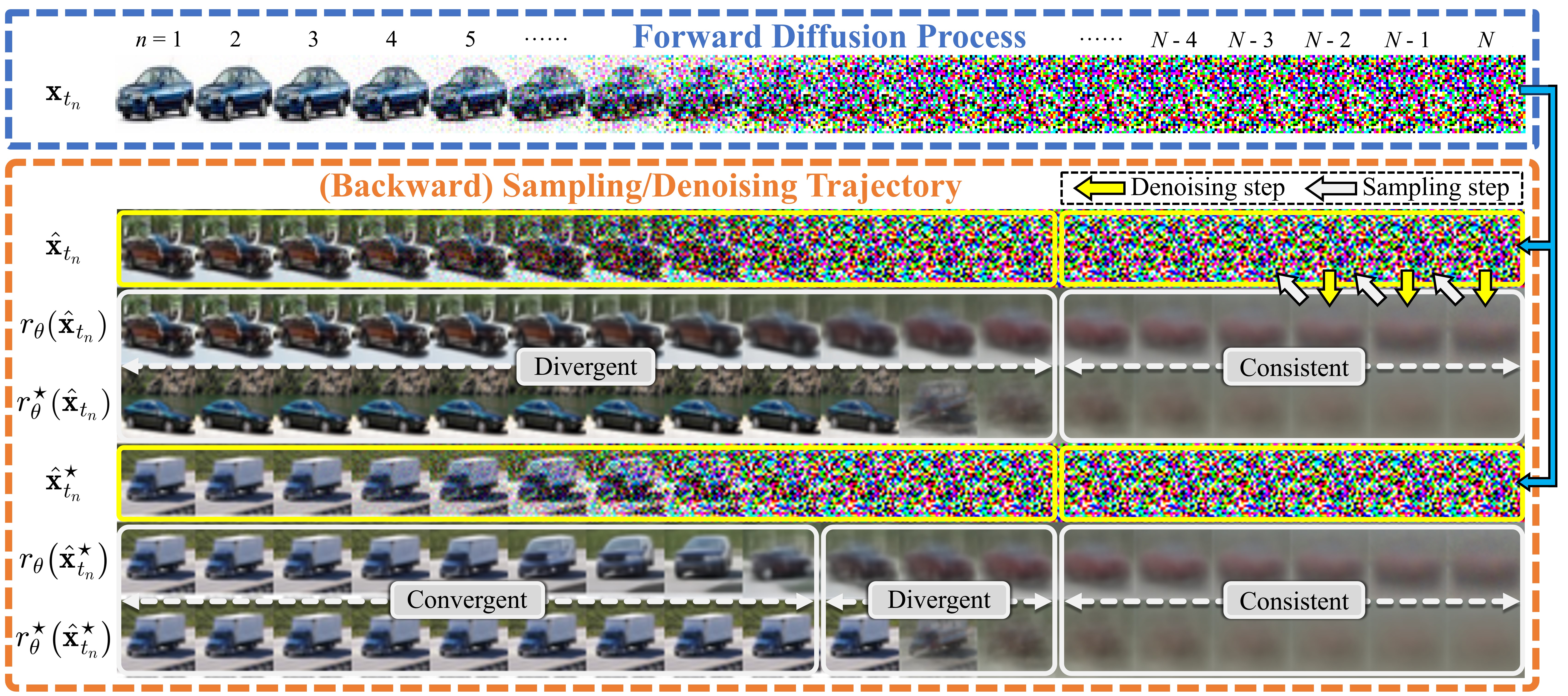}
		%\caption{None.}
	\end{subfigure}
	\begin{subfigure}[b]{0.98\textwidth}
		\includegraphics[width=\textwidth]{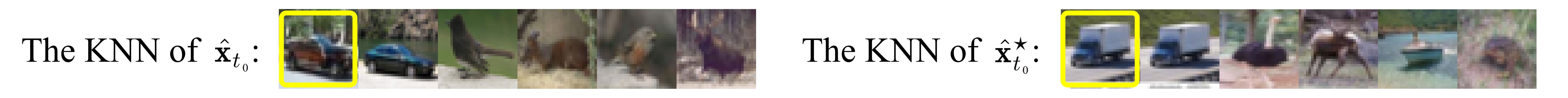}
	\end{subfigure}
	\caption{The conditional generation on CIFAR-10. \textit{Top:} We visualize a forward diffusion process of a randomly-selected image to obtain its encoding $\hatx_{t_N}$ and simulate multiple trajectories starting from this encoding. 
		\textit{Bottom:} The k-nearest neighbor (k=5) of $\hatx_{t_0}$ and $\hatx_{t_0}^{\star}$.}
	\label{fig:diagnosis_cond_ddpmpp}
\end{figure}

\begin{figure}[t]
	\centering
	\begin{subfigure}[b]{1.0\textwidth}
		\includegraphics[width=\textwidth]{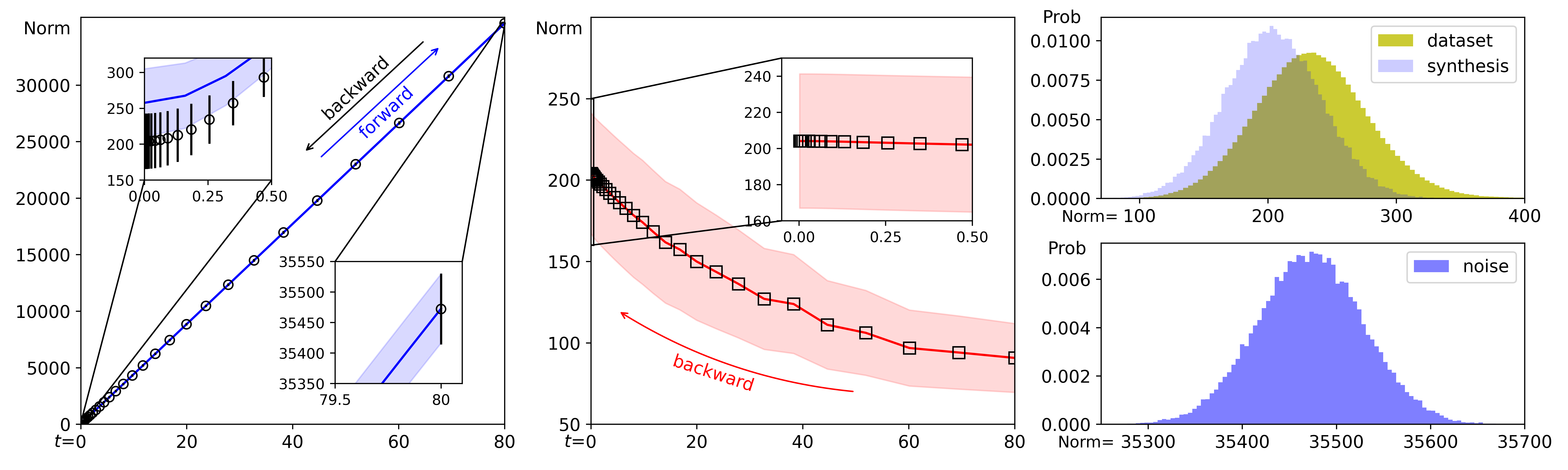}
    \caption{The statistics of magnitude.}
    \label{fig:magnitude_bedroom}
	\end{subfigure}
    \begin{subfigure}[b]{1.0\textwidth}
		\includegraphics[width=\textwidth]{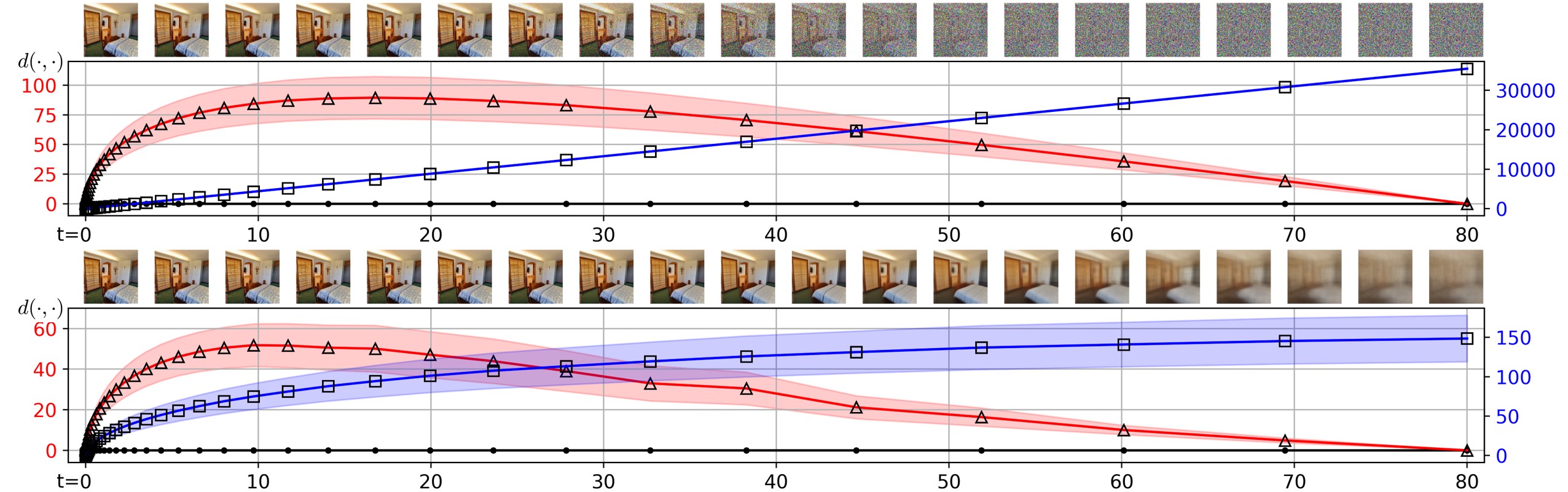}
    \caption{Deviation in the sampling trajectory (top) and the denoising trajectory (bottom).}
    \label{fig:trajectory_bedroom}
	\end{subfigure}
    \caption{LSUN Bedroom.}
    \label{fig:bedroom_uncond_ddpmpp}
\end{figure}
%\vspace{-0.5em}
\begin{figure}[t]
	\centering
    \begin{subfigure}[b]{1\textwidth}
		\includegraphics[width=\textwidth]{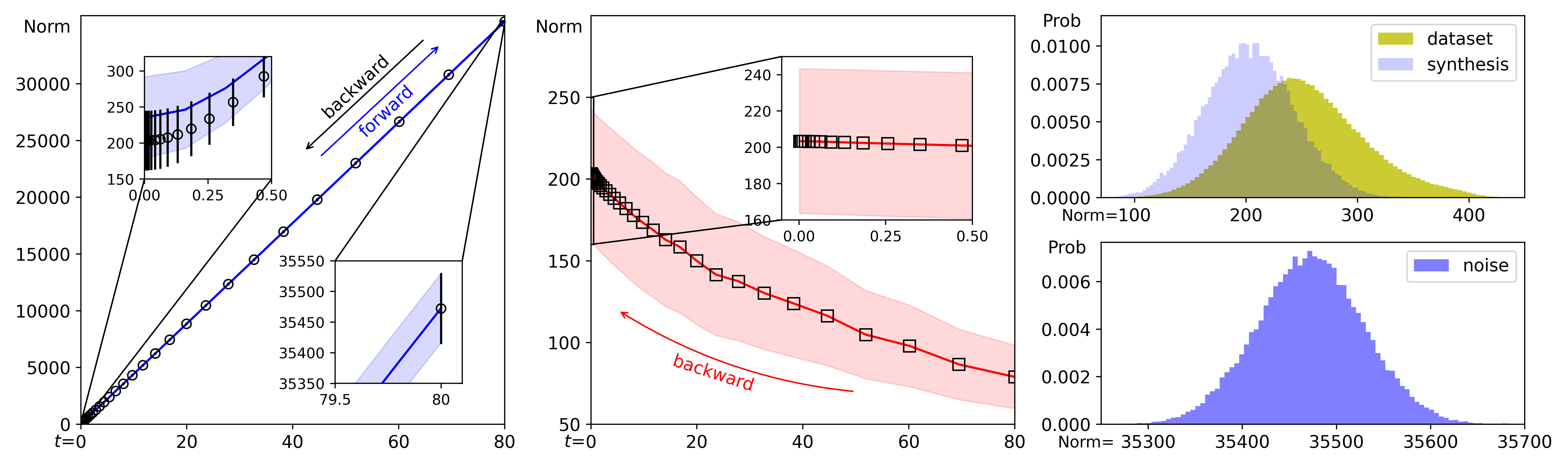}
    \caption{The statistics of magnitude.}
	\label{fig:magnitude_cat}
	\end{subfigure}
    \begin{subfigure}[b]{1\textwidth}
		\includegraphics[width=\textwidth]{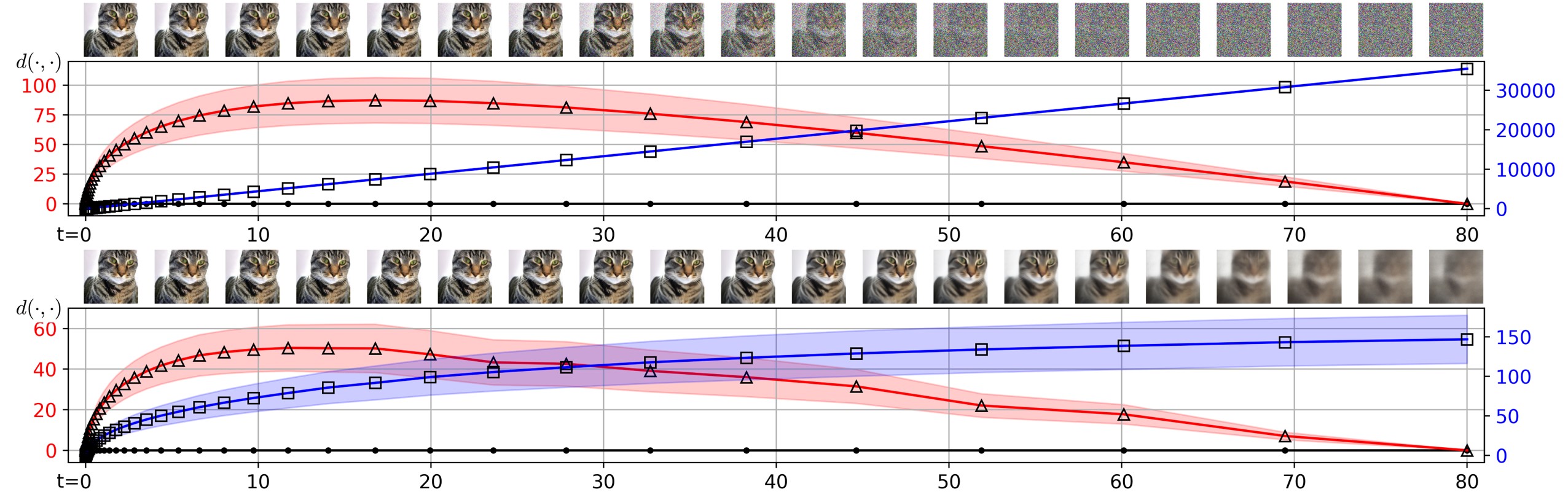}
    \caption{Deviation in the sampling trajectory (top) and the denoising trajectory (bottom).}
	\label{fig:trajectory_cat}
	\end{subfigure}
    \caption{LSUN Cat.}
    \label{fig:cat_uncond_ddpmpp}
\end{figure}
%\vspace{-0.5em}

\subsection{More Results on Other SDEs}
\label{subsec:more_results_other_sde}

In this section, we provide more results with different VE SDE settings on CIFAR-10, including: 
\begin{itemize}
    \item The conditional generation with the checkpoint\footnote{\url{https://nvlabs-fi-cdn.nvidia.com/edm/pretrained/edm-cifar10-32x32-cond-vp.pkl}} of DDPM++ architecture:
    \begin{itemize}
        \item \textit{Trajectory Shape:} The results are provided in Figure~\ref{fig:cifar_cond_ddpmpp}.
    	\item \textit{Diagnosis of Score Deviation:} The results are provided in Figure~\ref{fig:diagnosis_cond_ddpmpp}.
    \end{itemize}
	All observations are similar to those of the unconditional case in the main submission.
    \item The unconditional generation with the checkpoint\footnote{\url{https://nvlabs-fi-cdn.nvidia.com/edm/pretrained/edm-cifar10-32x32-uncond-ve.pkl}} of NCSN++ architecture:
    \begin{itemize}
		\item \textit{Trajectory Shape:} The results are provided in Figure~\ref{fig:cifar_uncond_ncsnpp}. The observations are similar to those of the unconditional generation case in the main submission. An outlier appears at about $t=40$ and we find that its score deviation becomes abnormally large. This implies the model probably fails to learn a good score direction at this point. 
    \end{itemize}
\end{itemize}

\subsection{More Results on the VP-SDE}
\label{subsec:vpsde}

In this section, we provide preliminary results with a variance-preserving SDE, following the configurations of EDM \citep{karras2022edm}
(see Table 1 in their paper). We conduct unconditional generation with the checkpoint\footnote{\url{https://nvlabs-fi-cdn.nvidia.com/edm/pretrained/baseline/baseline-cifar10-32x32-uncond-vp.pkl}} of DDPM++ architecture.

We find that in this case, the data distribution and the noise distribution are still smoothly connected with an explicit, quasi-linear sampling trajectory, and another implicit denoising trajectory that converges faster. The geometric properties of VE-SDEs discussed in Section~\ref{subsec:trajectory} are generally (but not strictly) hold for this VP-SDE. 

In fact, as we discussed in Appendix~\ref{subsec:app_cov}, we can transform other model types (\textit{e.g.}, VP-SDEs) into the VE counterparts with change-of-variables formula and focus on the geometric behaviors of VE-SDE instead. Nevertheless, further investigation is required in order to thoroughly understand the behavior of VP-SDEs. We leave it for future work.

\begin{figure}[t]
	\centering
 	\begin{subfigure}[b]{1\textwidth}
		\includegraphics[width=\textwidth]{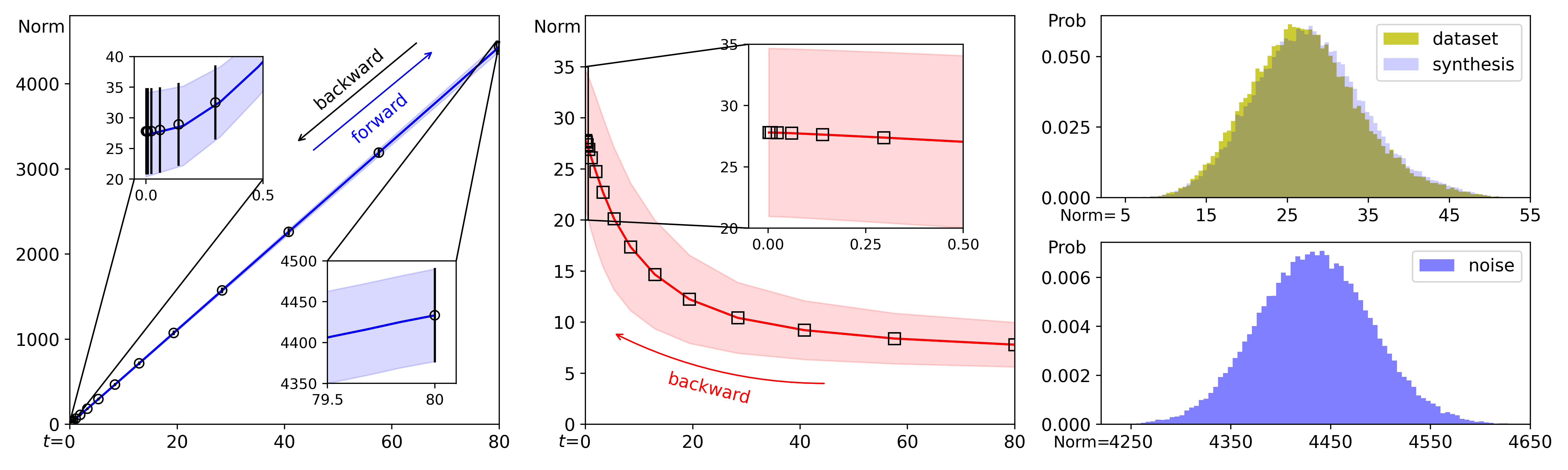}
        \caption{The statistics of magnitude.}
	\end{subfigure}
	\begin{subfigure}[b]{1\textwidth}
        \includegraphics[width=\textwidth]{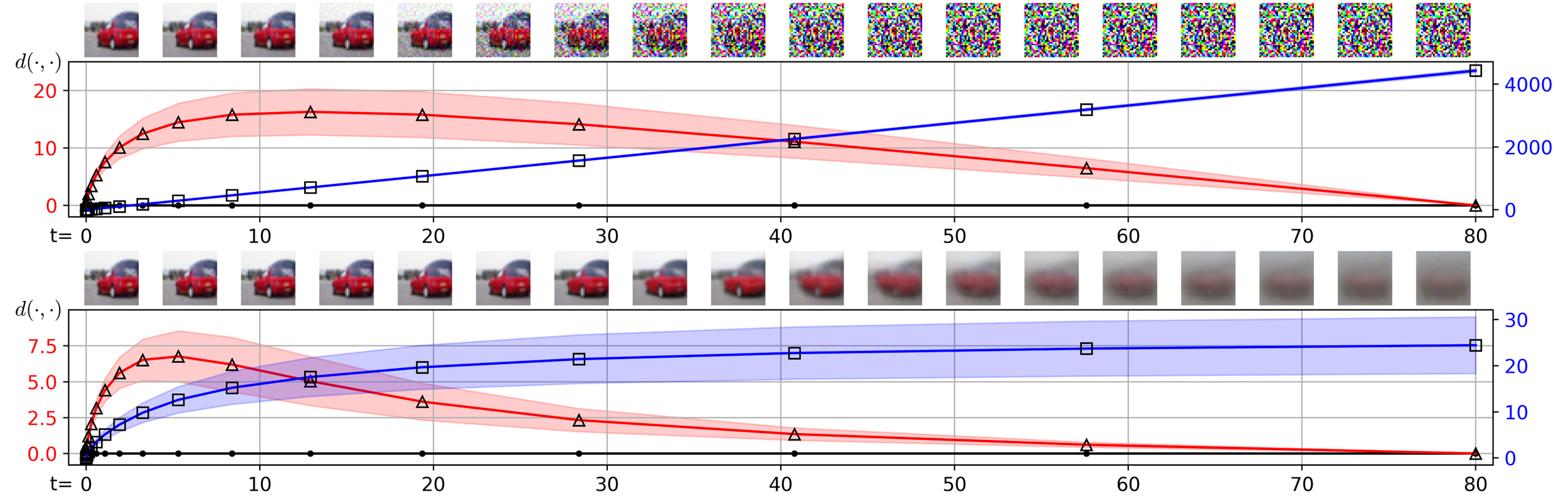}
        \caption{Deviation in the sampling trajectory (top) and the denoising trajectory (bottom).}
	\end{subfigure}
    \caption{The conditional generation with VE-SDE and DDPM++ architecture.}     	\label{fig:cifar_cond_ddpmpp}
\end{figure}

%\begin{figure}[h]
%	\centering
%	\includegraphics[width=0.5\textwidth]{}
%	\caption{The angle deviation in the sampling trajectory is calculated by the cosine between the backward ODE direction $-\frac{\rmdx_t}{\rmd t}\big|_s$ and the direction pointing to the final point $(\hatx_{s_0} - \hatx_{s})$ at discrete time $s$. The angle deviation always stays in a narrow range (from about 0.98 to 1.00). 
%	}
%	\label{fig:cosine_cond_ddpmpp}
%\end{figure}
	
\begin{figure}[t]
	\centering
     \begin{subfigure}[b]{1\textwidth}
		\includegraphics[width=\textwidth]{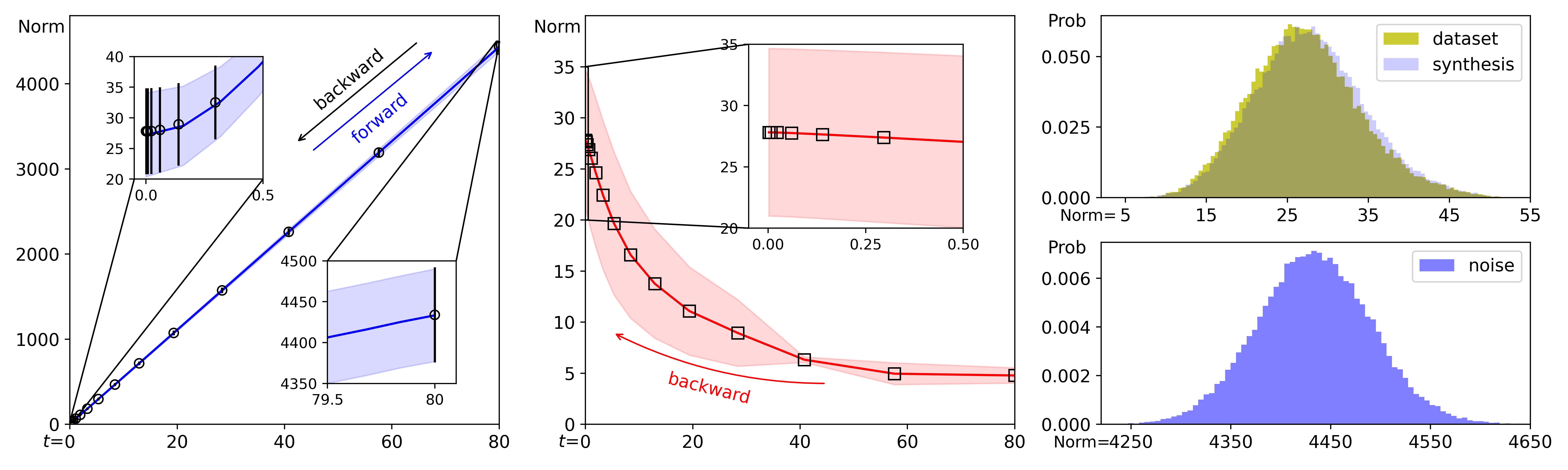}
    \caption{The statistics of magnitude.}
	\end{subfigure}
	\begin{subfigure}[b]{1\textwidth}
		\includegraphics[width=\textwidth]{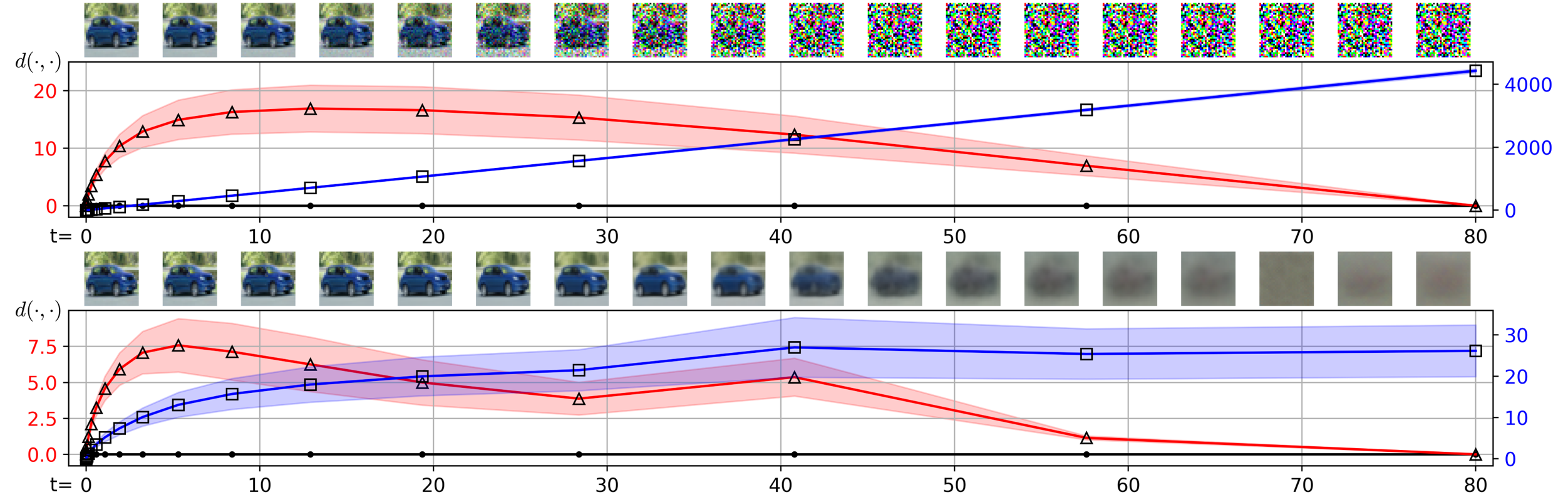}
    \caption{Deviation in the sampling trajectory (top) and the denoising trajectory (bottom).}
	\end{subfigure}
    \caption{The unconditional generation with VE-SDE and NCSN++ architecture.}
    \label{fig:cifar_uncond_ncsnpp}
\end{figure}

\begin{figure}[t]
	\centering
     \begin{subfigure}[b]{1\textwidth}
		\includegraphics[width=\textwidth]{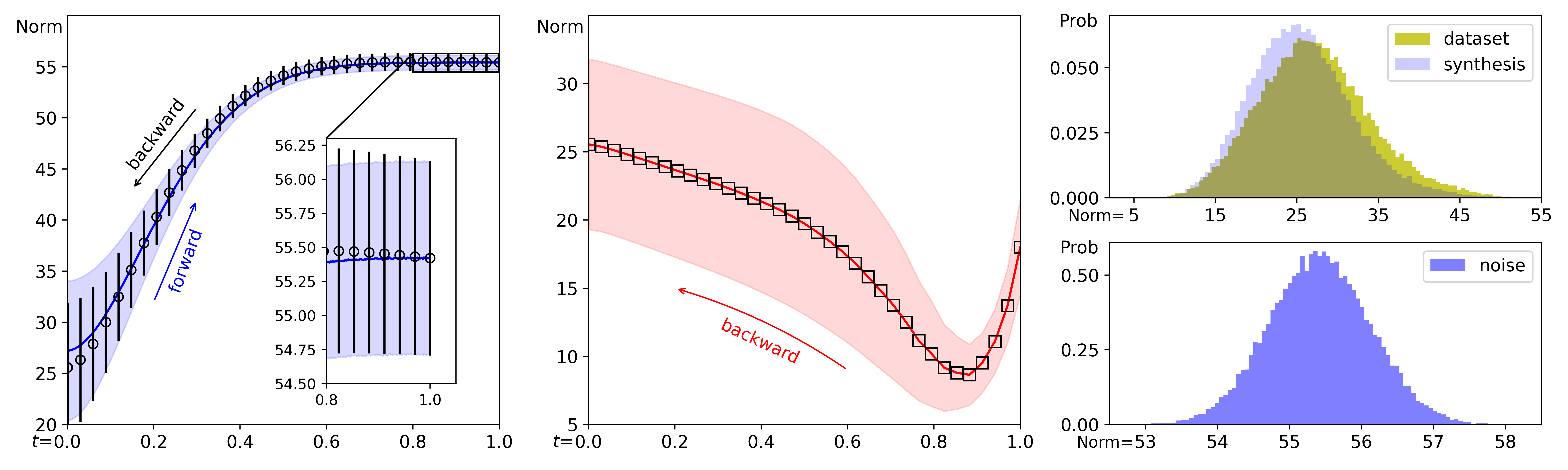}
    \caption{The statistics of magnitude.}
	\end{subfigure}
	\begin{subfigure}[b]{1\textwidth}
		\includegraphics[width=\textwidth]{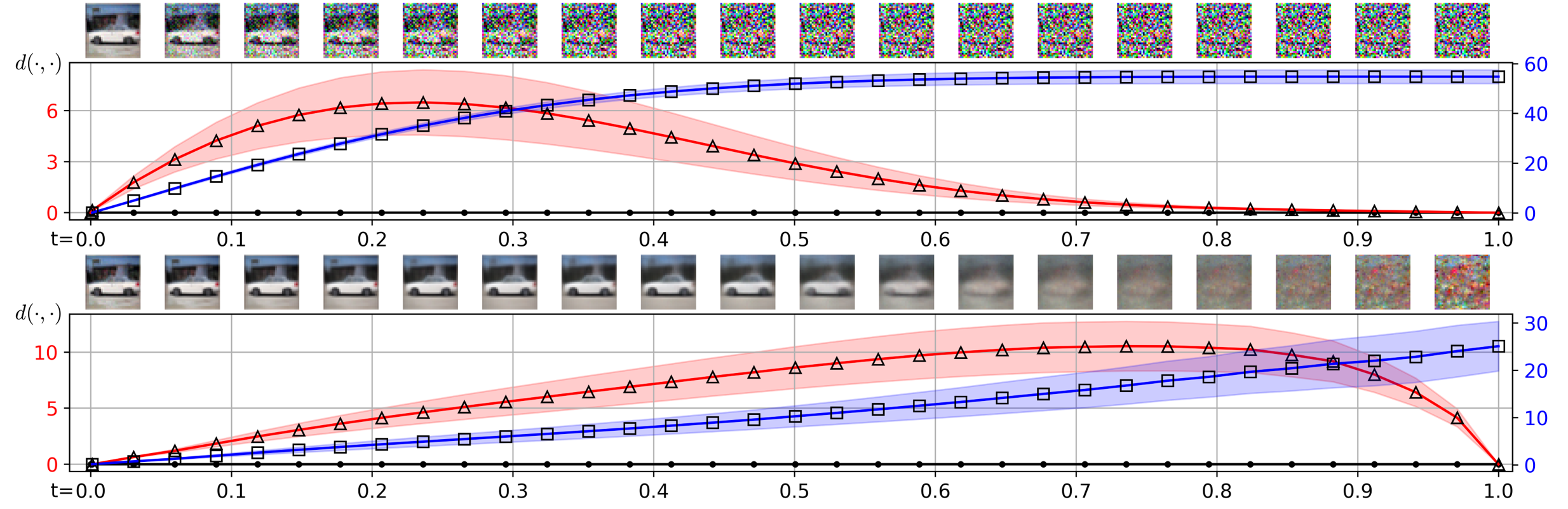}
    \caption{Deviation in the sampling trajectory (top) and the denoising trajectory (bottom).}
	\end{subfigure}
    \caption{The unconditional generation with VP-SDE and DDPM++ architecture.}
    \label{fig:cifar_uncond_VP}
\end{figure}

\subsection{More Visual Results}
\label{subsec:visual}

\begin{itemize}
 	%\item Figure~\ref{fig:more_visual_dpm} provides more comparison of the sampling trajectory \textit{v.s.}\ the denoising trajectory on different datasets, with the default setting of DPM-Solver sampler\footnote{\url{https://github.com/LuChengTHU/dpm-solver}} \citep{lu2022dpm}. For CIFAR-10, we adopt DPM-Solver-3 with $N=18$, NFE $=48$; for ImageNet-64, we adopt DPM-Solver-3 with $N=18$, NFE $=48$; for ImageNet-256 (with guidance), we adopt DPM-Solver-2  with $N=18$, NFE $=34$.
	\item Figure~\ref{fig:more_visual_sde} provides more comparison of the sampling trajectory \textit{v.s.}\ the denoising trajectory on different datasets, with the second-order stochastic sampler\footnote{\url{https://github.com/NVlabs/edm}} \citep{karras2022edm}.
	\item Figure~\ref{fig:more_visual_edm} provides more comparison of the sampling trajectory  \textit{v.s.}\ the denoising trajectory on different datasets, with Heun’s 2\textsuperscript{nd} order method \citep{karras2022edm} for ODE-based sampling as the main submission. We calculate FIDs along the sampling trajectory and the denoising trajectory. Our proposed ODE-Jump sampling converges significantly faster than EDMs in terms of FID. More results are provided in Figures~\ref{fig:visual_cat}-\ref{fig:visual_imagenet}.
  \item Figures~\ref{fig:more_visual_sd} and~\ref{fig:sd_plms} provide more comparison of the sampling trajectory  \textit{v.s.}\ the denoising trajectory on Stable Diffusion\footnote{\url{https://github.com/CompVis/stable-diffusion/}}~\citep{rombach2022ldm}. We adopt the checkpoint ``sd-v1-4.ckpt'', and compute FID following the original paper, \textit{i.e.}, comparing 30,000 synthetic samples ($256\times256$ resolution) with the validation set of MS-COCO-2014 dataset.
\end{itemize}

\begin{figure}[t]
	\centering
	\begin{subfigure}[b]{0.32\textwidth}
		\includegraphics[width=\textwidth]{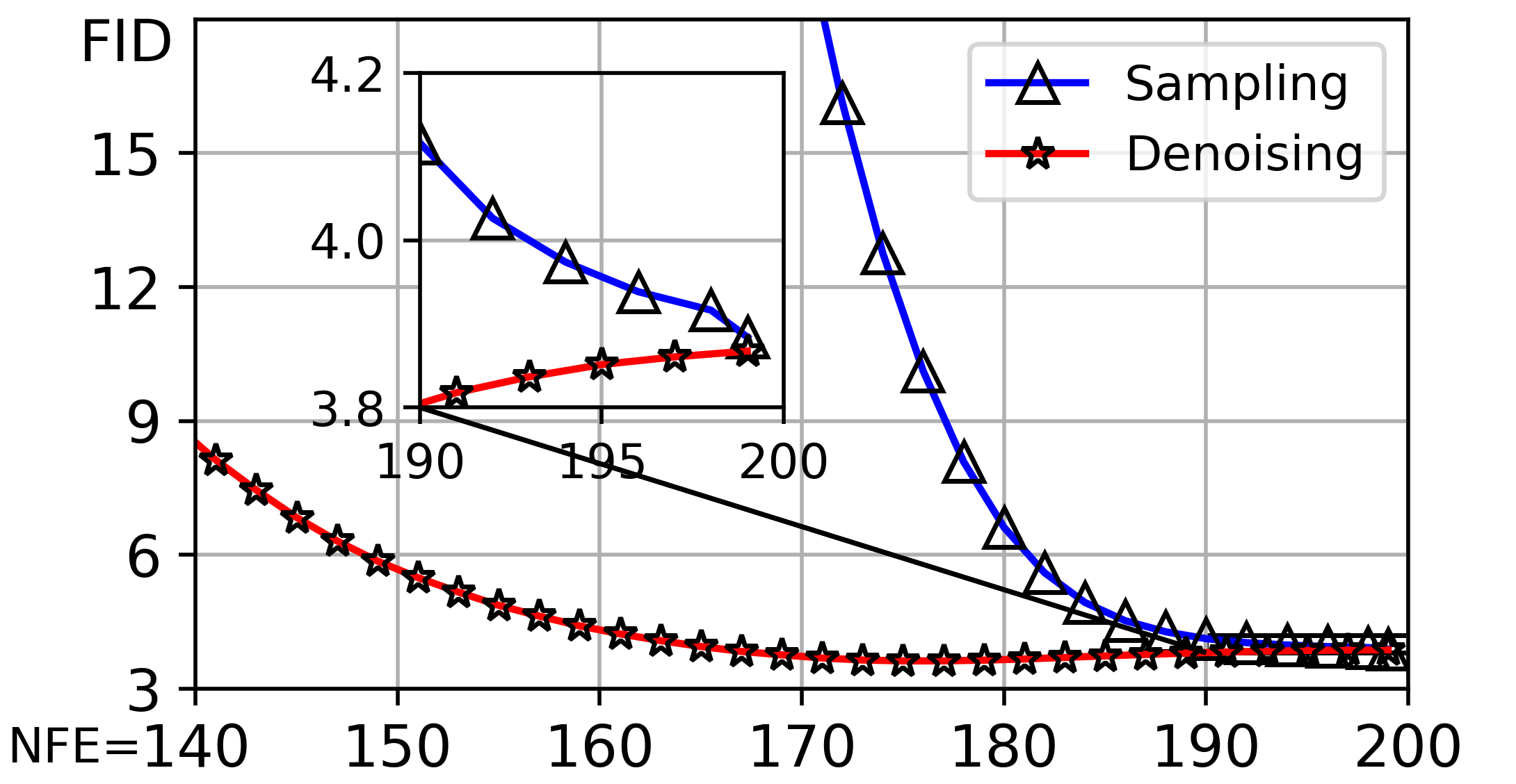}
		\caption{CIFAR-10 (32$\times$32).}
	\end{subfigure}
	\begin{subfigure}[b]{0.32\textwidth}
		\includegraphics[width=\textwidth]{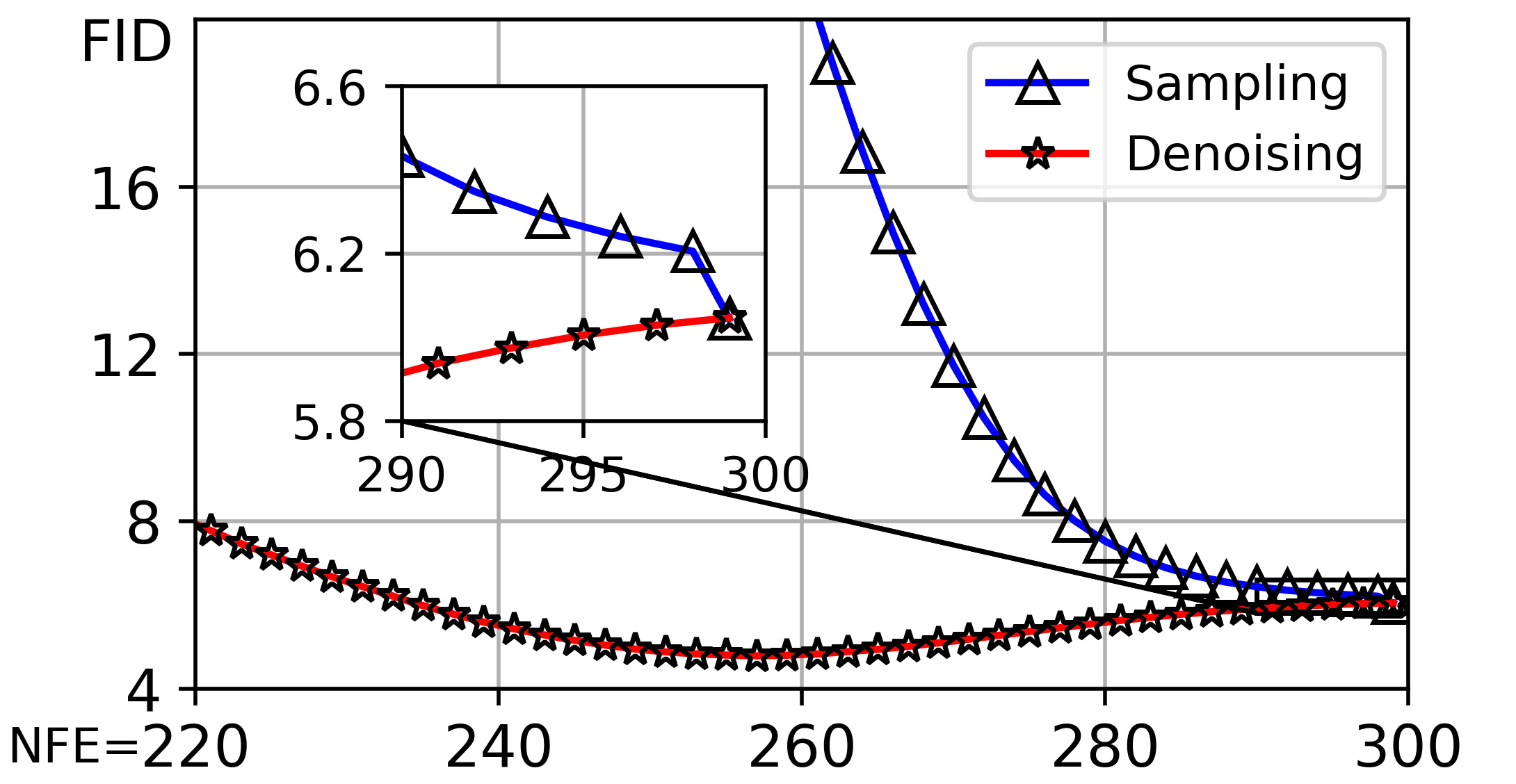}
		\caption{FFHQ ($64\times64$).}
	\end{subfigure}
	\begin{subfigure}[b]{0.32\textwidth}
		\includegraphics[width=\textwidth]{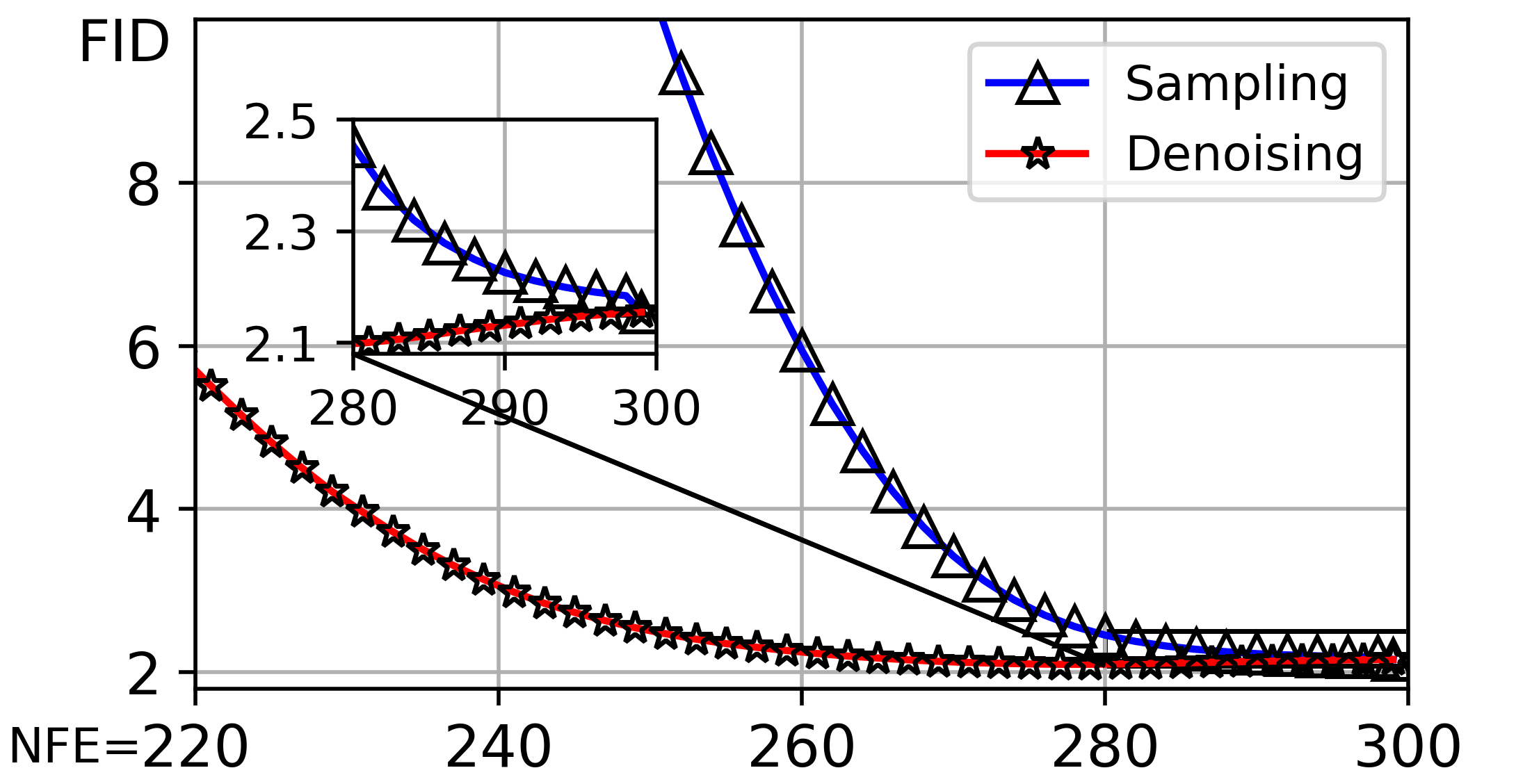}
		\caption{AFHQv2 ($64\times64$).} 
	\end{subfigure}
	\caption{The comparison of Fr\'echet Inception Distance (FID \citep{heusel2017gans}, lower is better) \textit{w.r.t.}\ the number of score function evaluations (NFEs). The sampling trajectory is simulated by second-order stochastic sampler \citep{karras2022edm}. The denoising trajectory still converges considerably faster than the sampling trajectory in this case.}
	\label{fig:more_visual_sde}
\end{figure}

\begin{figure}[t]
	\centering
	\begin{subfigure}[b]{0.32\textwidth}
		\includegraphics[width=\textwidth]{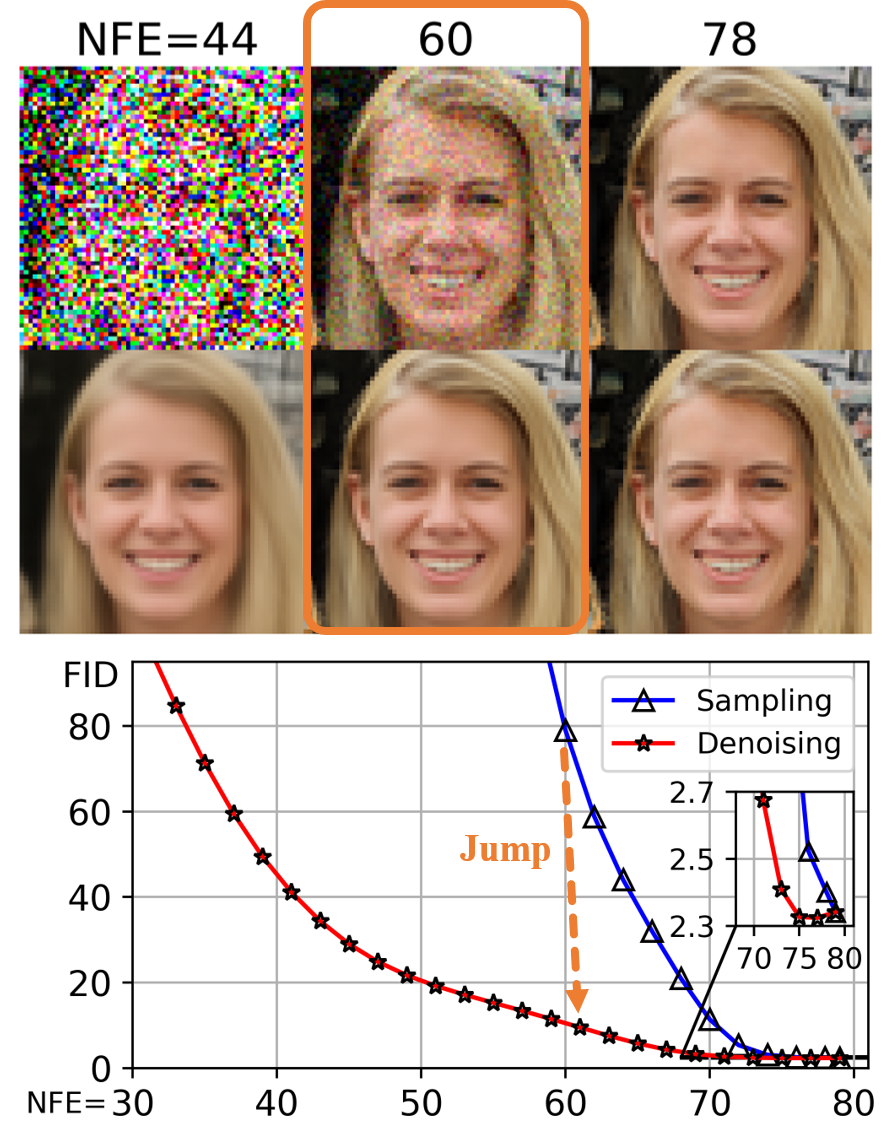}
		\caption{FFHQ ($64\times64$).}
	\end{subfigure}
	\begin{subfigure}[b]{0.32\textwidth}
		\includegraphics[width=\textwidth]{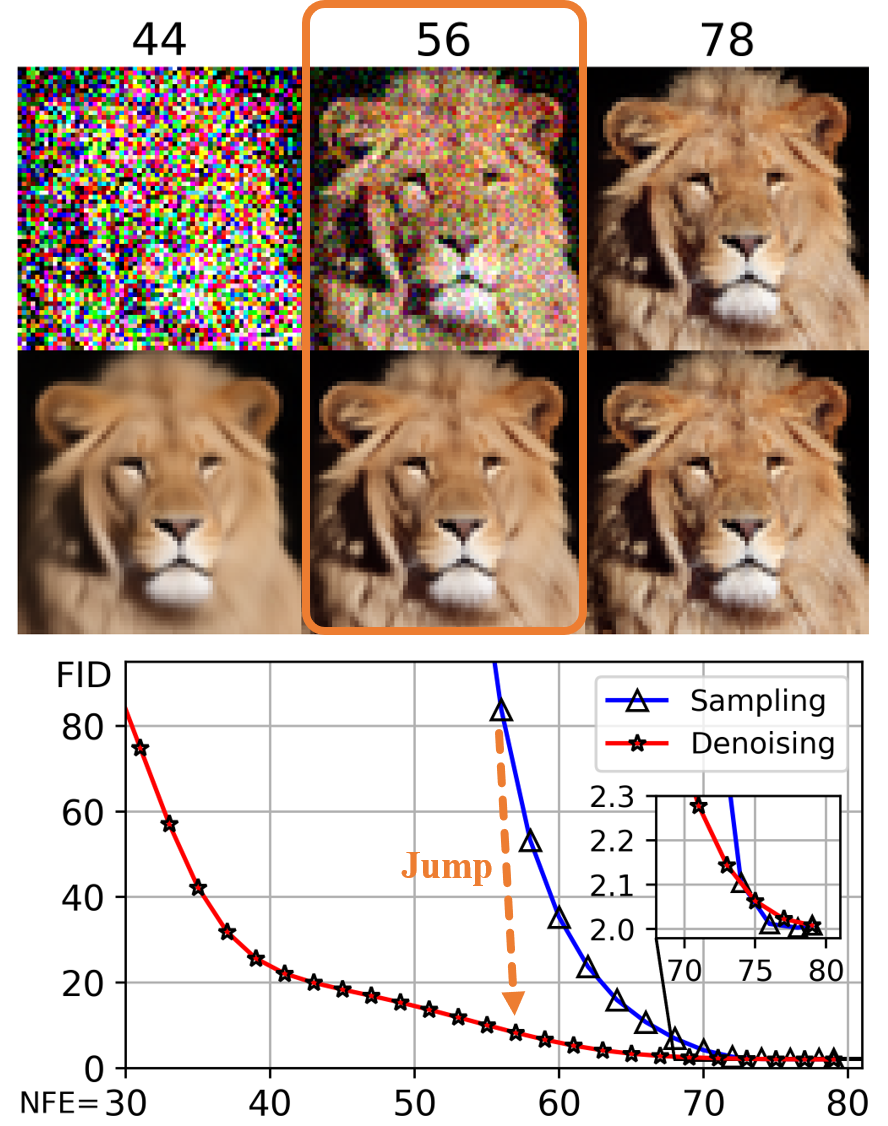}
		\caption{AFHQv2 ($64\times64$).}
	\end{subfigure}
	\begin{subfigure}[b]{0.32\textwidth}
		\includegraphics[width=\textwidth]{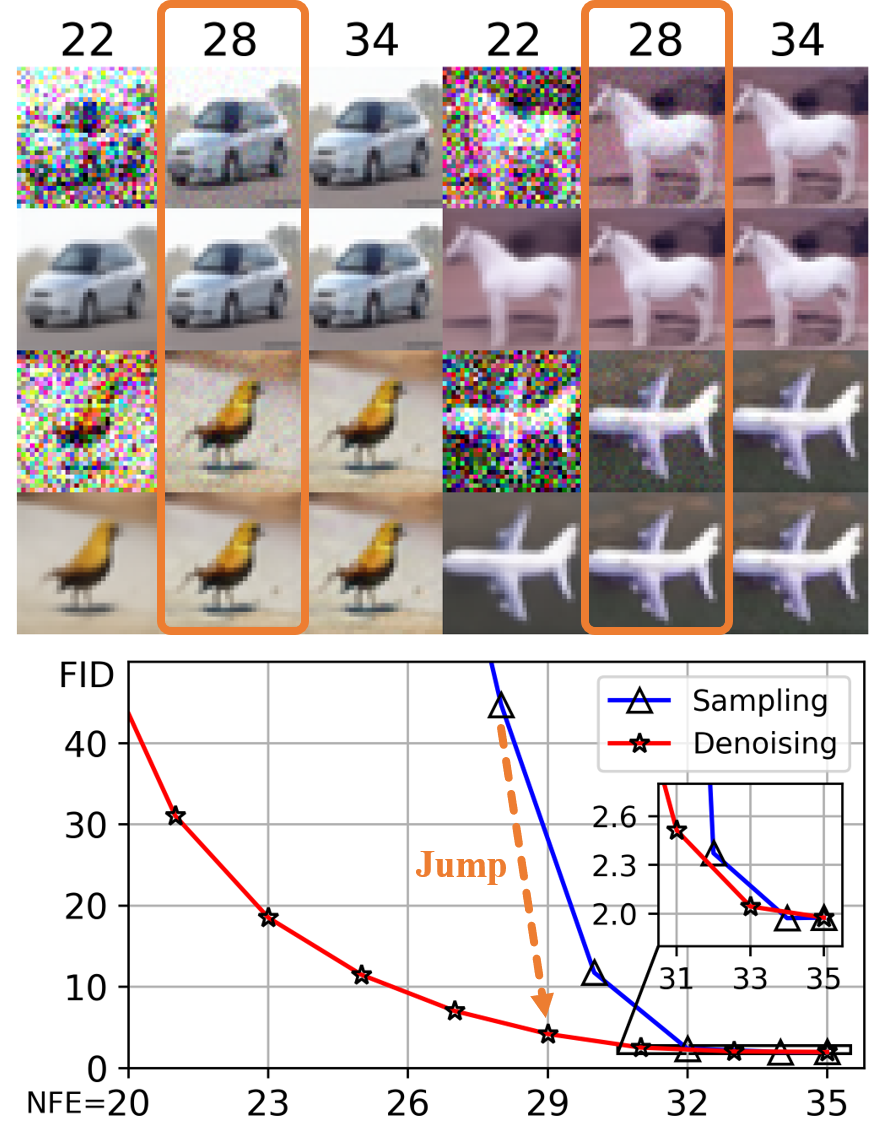}
		\caption{CIFAR-10 (32$\times$32).}
	\end{subfigure}
	\caption{The comparison of visual quality (top is sampling trajectory, bottom is denoising trajectory) and Fr\'echet Inception Distance (FID \citep{heusel2017gans}, lower is better) \textit{w.r.t.}\ the number of score function evaluations (NFEs). The sampling trajectory is simulated by Heun’s 2\textsuperscript{nd} order method \citep{karras2022edm} as the main submission. The denoising trajectory converges considerably faster than the sampling trajectory in terms of FID and visual quality. We thus employ ODE-Jump to obtain decent samples with a much less NFE.}
	\label{fig:more_visual_edm}
\end{figure}

\begin{figure}[t]
	\centering
	\includegraphics[width=\textwidth]{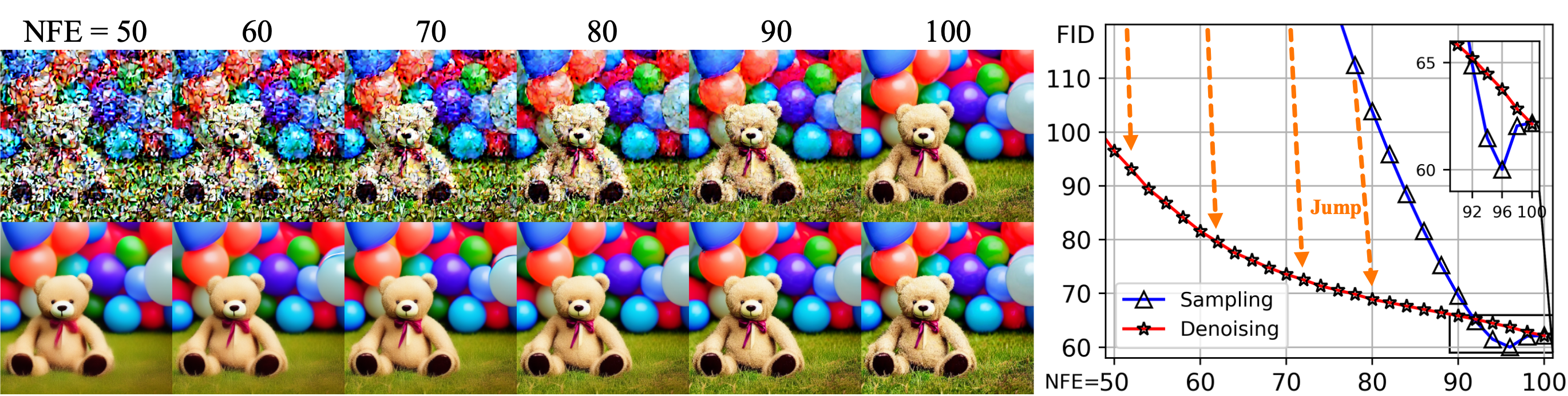}
	\caption{Stable Diffusion Results \citep{rombach2022ldm}. The comparison of visual quality (top is sampling trajectory, bottom is denoising trajectory) and Fr\'echet Inception Distance (FID \citep{heusel2017gans}, lower is better) \textit{w.r.t.}\ the number of score function evaluations (NFEs). The sampling trajectory is simulated by the default PLMS sampler with NFE $=100$. The denoising trajectory converges considerably faster than the sampling trajectory in terms of FID and visual quality. We thus employ ODE-Jump to obtain decent samples with a much less NFE.}
	\label{fig:more_visual_sd}
\end{figure}

% Add Visual Comparison Here
\clearpage
\begin{figure}[t]
	\centering
	\begin{subfigure}[b]{1\textwidth}
		\includegraphics[width=\textwidth]{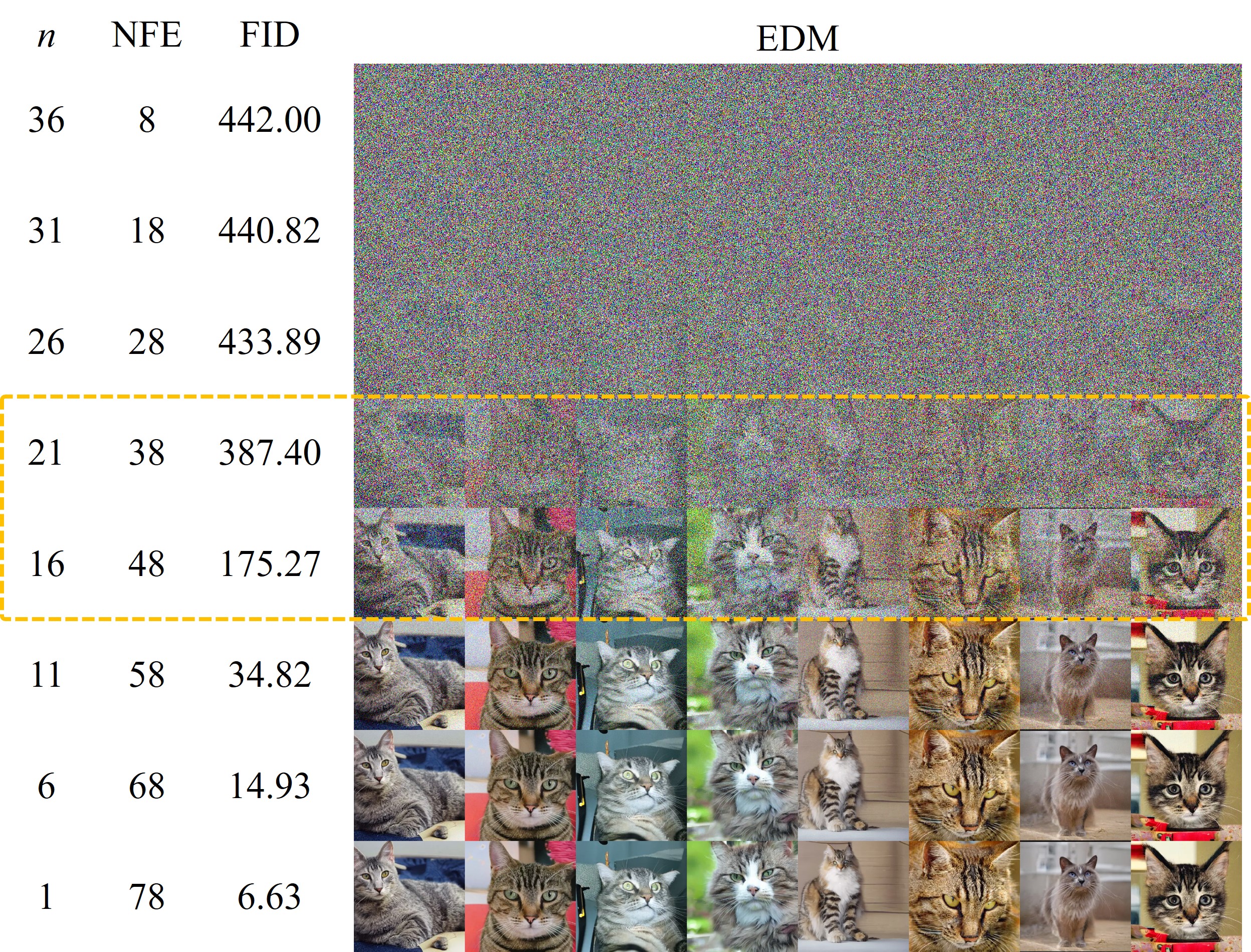}
	\end{subfigure}
	\begin{subfigure}[b]{1\textwidth}
		\includegraphics[width=\textwidth]{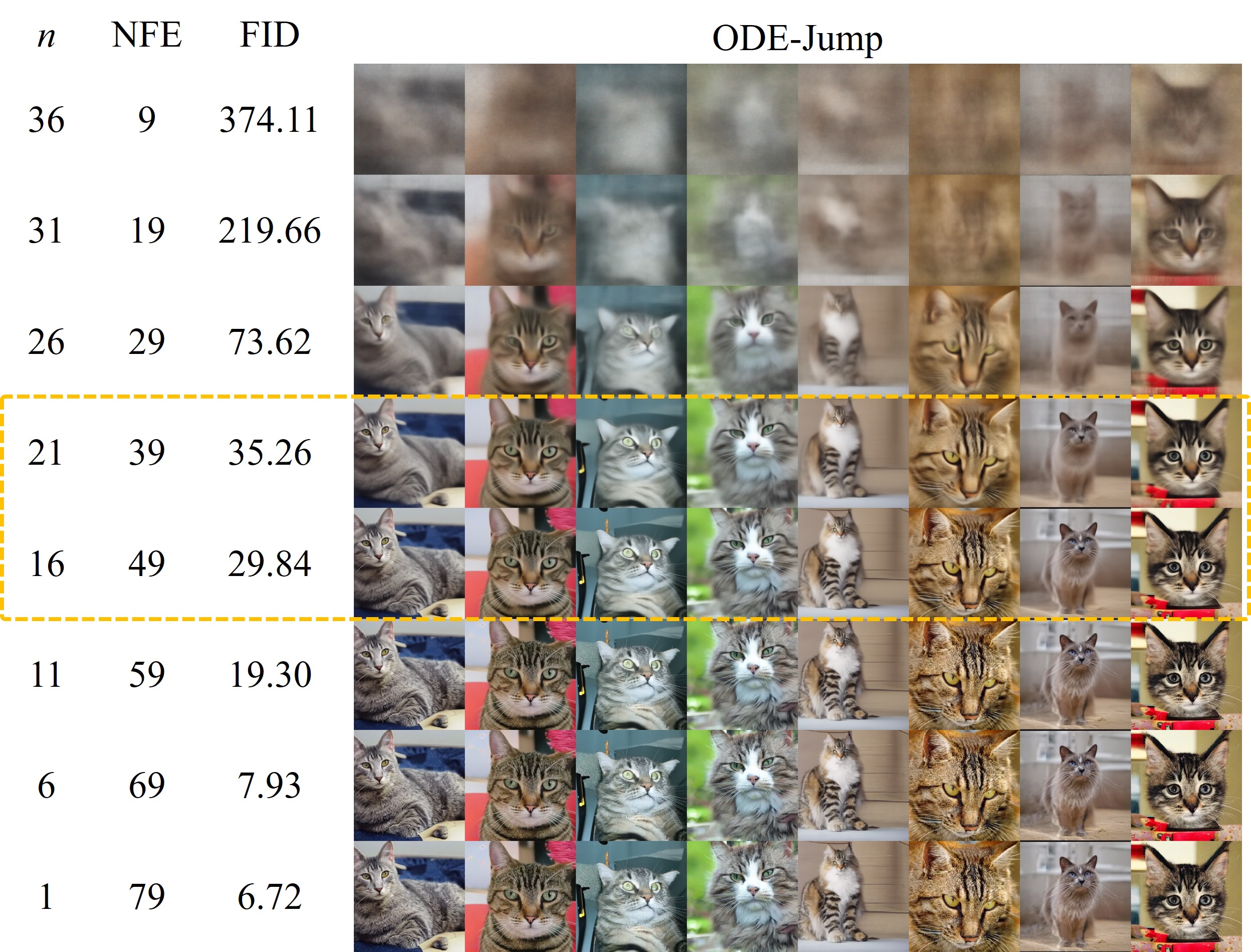}
	\end{subfigure}
	\caption{The visual comparison of EDM and ODE-Jump on LSUN Cat.}
	\label{fig:visual_cat}
\end{figure}
%\clearpage
%\begin{figure}[t]
%    \centering
%    \includegraphics[width=0.8\textwidth]{pics/uncond/monitor_jump_LSUN_cat_app_big.jpg}
%    \caption{The visual comparison of EDM and ODE-Jump on LSUN Cat (detailed version).}
%    \label{fig:visual_cat_big}
%\end{figure}

\clearpage
\begin{figure}[t]
	\centering
	\begin{subfigure}[b]{1\textwidth}
		\includegraphics[width=\textwidth]{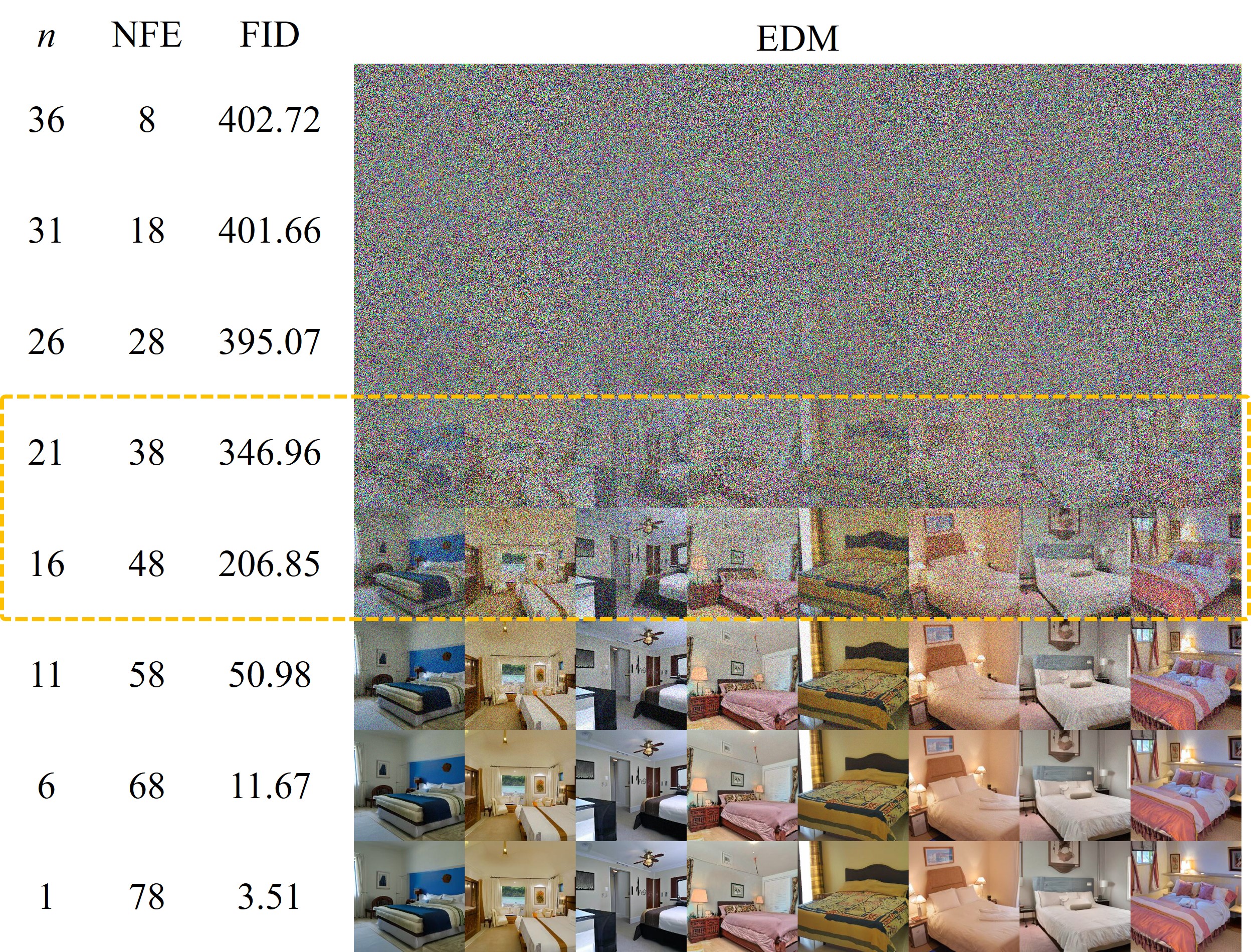}
	\end{subfigure}
	\begin{subfigure}[b]{1\textwidth}
		\includegraphics[width=\textwidth]{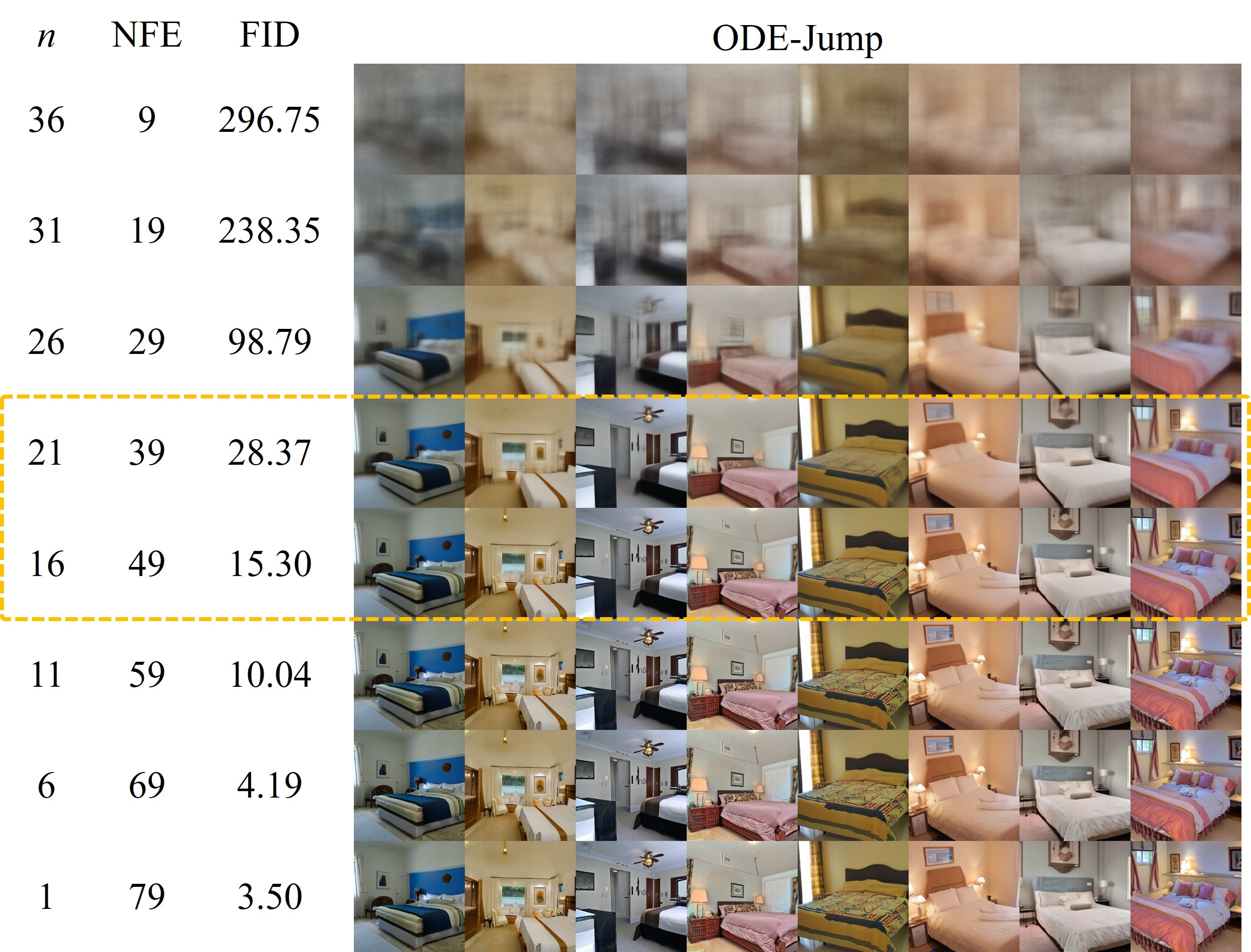}
	\end{subfigure}
	\caption{The visual comparison of EDM and ODE-Jump on LSUN Bedroom.}
	\label{fig:visual_bedroom}
\end{figure}
%\clearpage
%\begin{figure}[t]
%    \centering
%    \includegraphics[width=0.8\textwidth]{pics/uncond/monitor_jump_LSUN_bedroom_app_big.jpg}
%    \caption{The visual comparison of EDM and ODE-Jump on LSUN Bedroom (detailed version).}
%    \label{fig:visual_bedroom_big}
%\end{figure}
\clearpage
\begin{figure}[t]
	\centering
	\begin{subfigure}[b]{1\textwidth}
		\includegraphics[width=\textwidth]{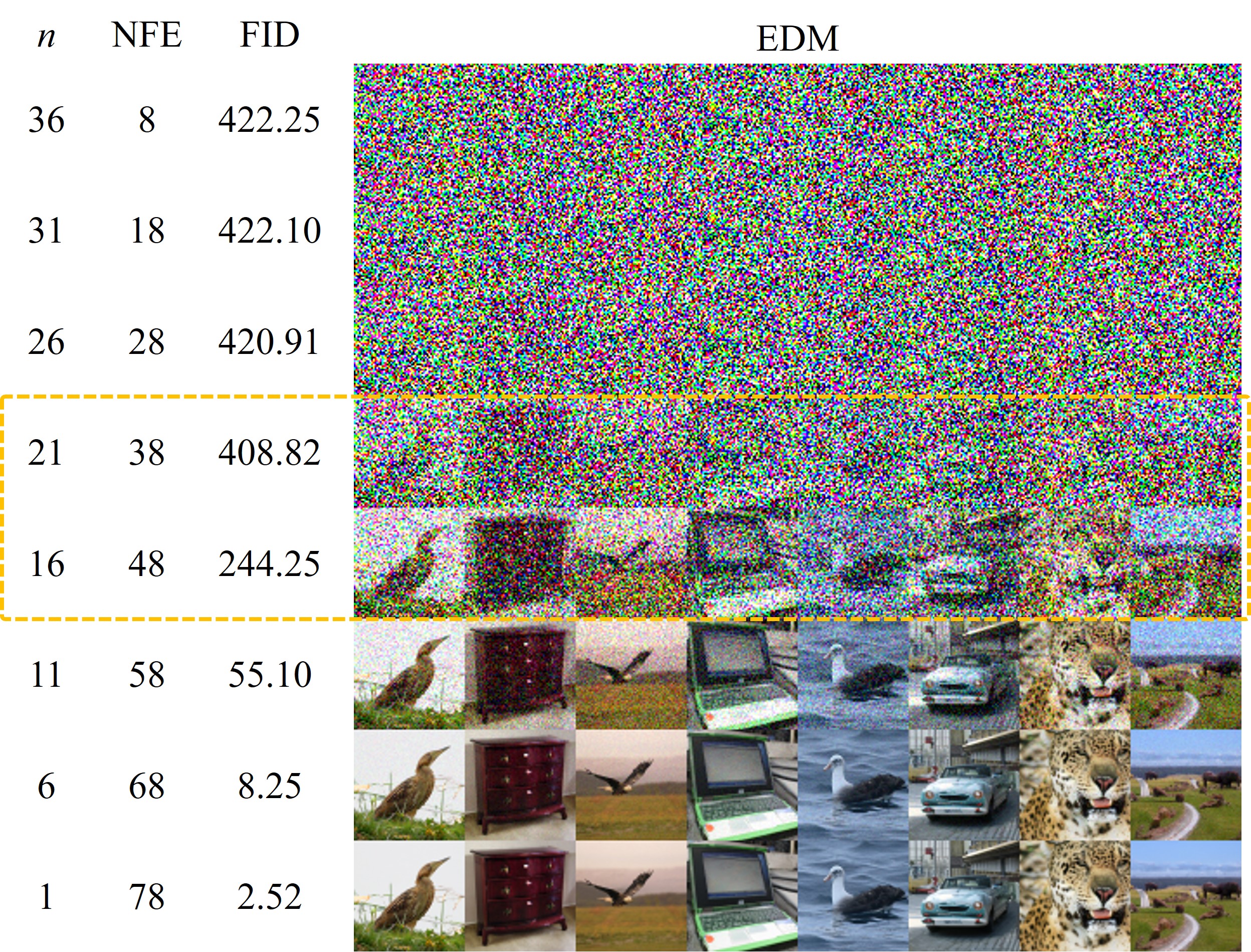}
	\end{subfigure}
	\begin{subfigure}[b]{1\textwidth}
		\includegraphics[width=\textwidth]{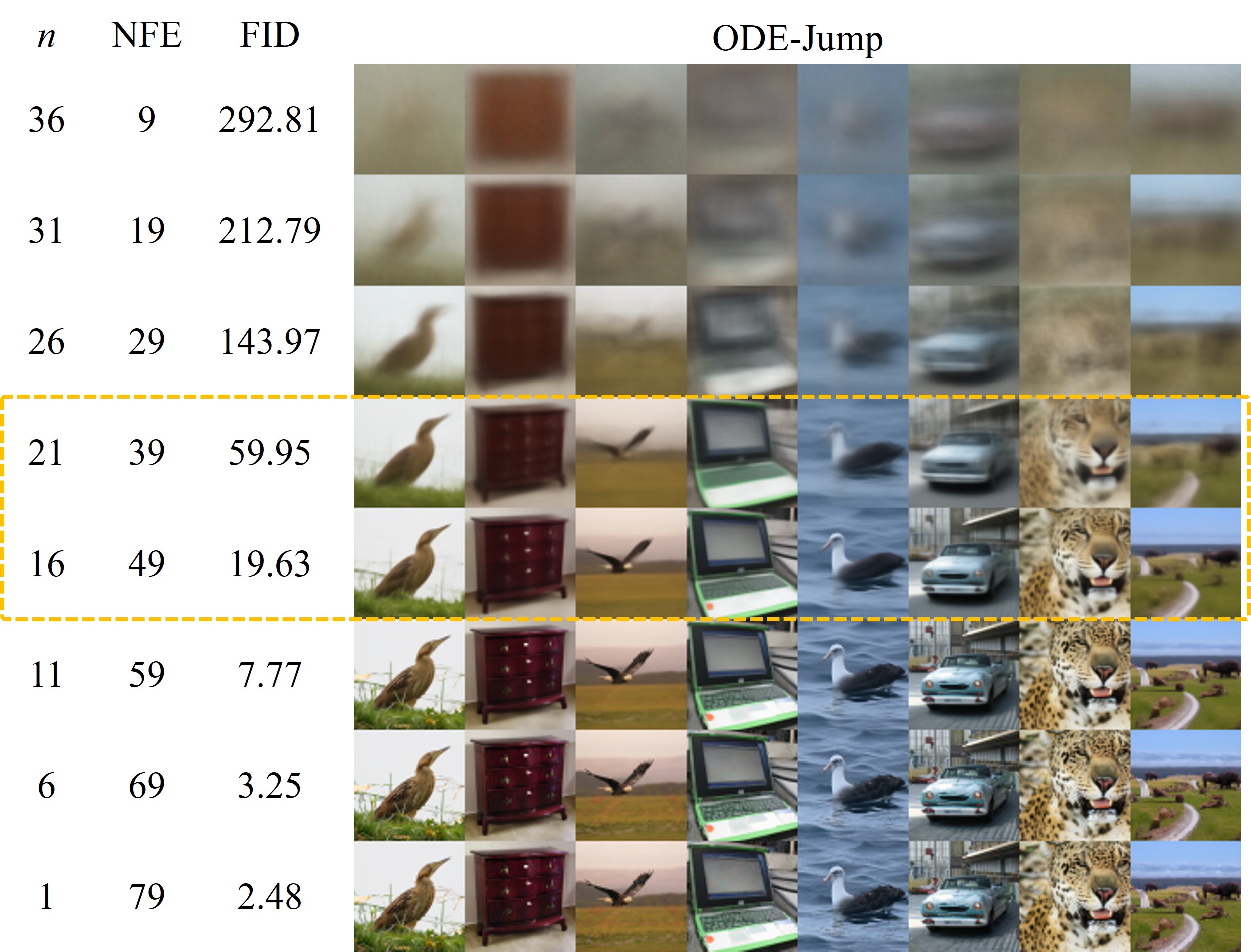}
	\end{subfigure}
	\caption{The visual comparison of EDM and ODE-Jump on ImageNet-64.}
	\label{fig:visual_imagenet}
\end{figure}
%\clearpage
%\begin{figure}[t]
%    \centering
%    \includegraphics[width=0.8\textwidth]{pics/uncond/monitor_jump_imagenet64_app_big.jpg}
%    \caption{The visual comparison of EDM and ODE-Jump on ImageNet-64 (detailed version).}
%    \label{fig:visual_imagenet_big}
%\end{figure}

%More Results on Stable Diffusion
\clearpage
\begin{figure}
    \centering
    \begin{subfigure}[b]{1\textwidth}
        \includegraphics[width=\textwidth]{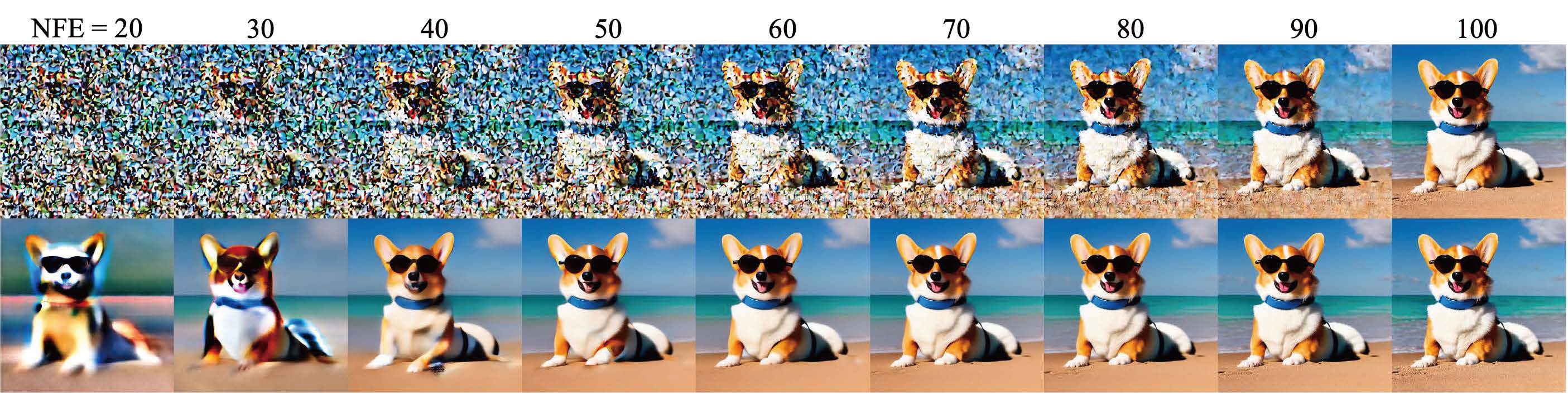}
        \caption{``A Corgi lying on the beach wearing sunglasses"}
    \end{subfigure}
    \begin{subfigure}[b]{1\textwidth}
        \includegraphics[width=\textwidth]{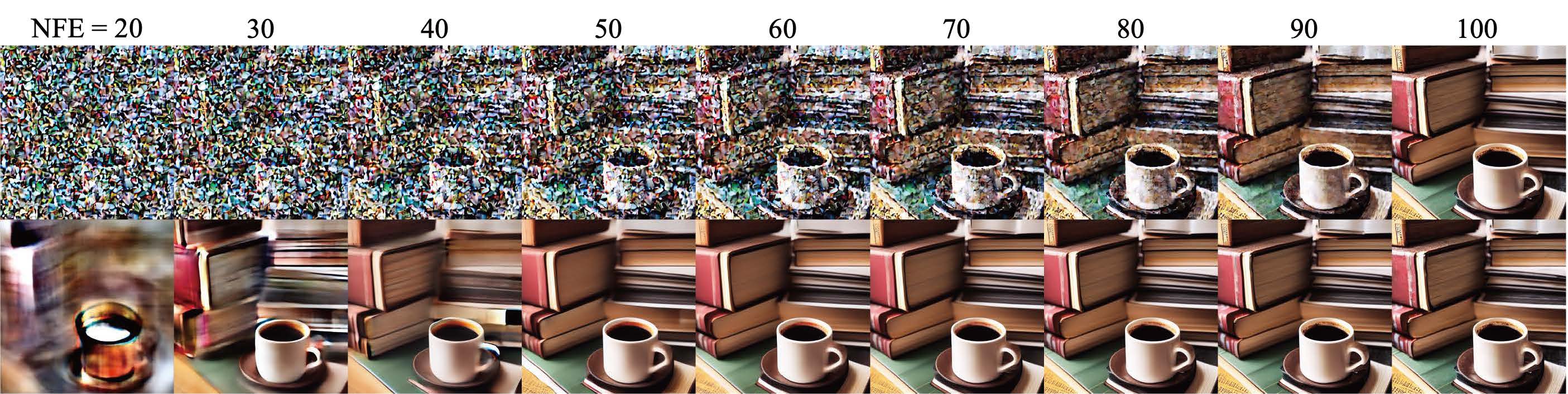}
        \caption{``A cup of coffee with a stack of books behind"}
    \end{subfigure}
    \begin{subfigure}[b]{1\textwidth}
        \includegraphics[width=\textwidth]{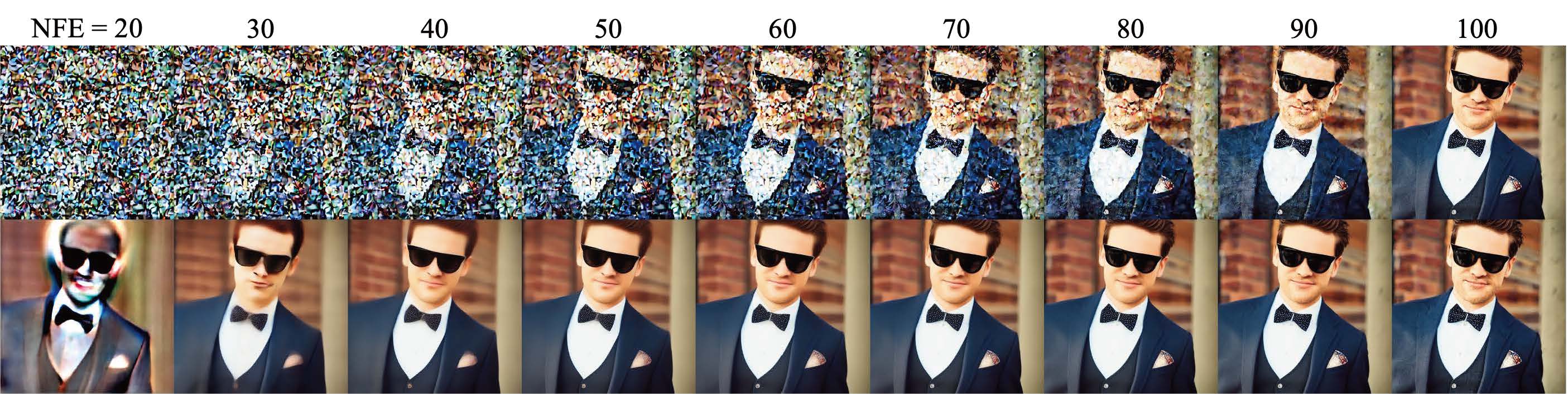}
        \caption{``A handsome man wearing a suit, a bow tie and a sunglasses"}
    \end{subfigure}
    \begin{subfigure}[b]{1\textwidth}
        \includegraphics[width=\textwidth]{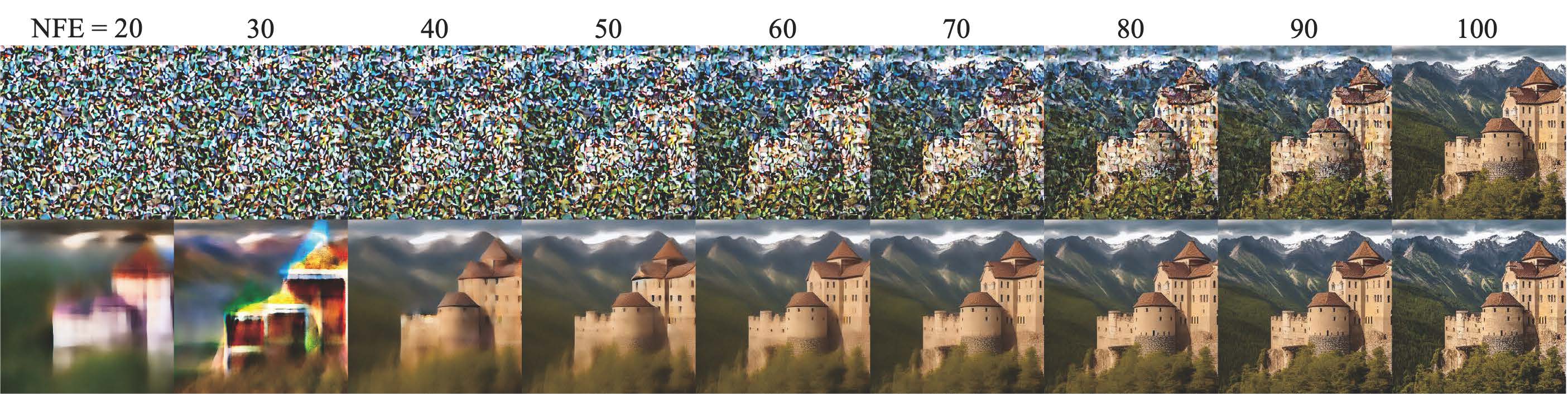}
        \caption{``An ancient castle surrounded by magnificent mountains"}
    \end{subfigure}
    \begin{subfigure}[b]{1\textwidth}
        \includegraphics[width=\textwidth]{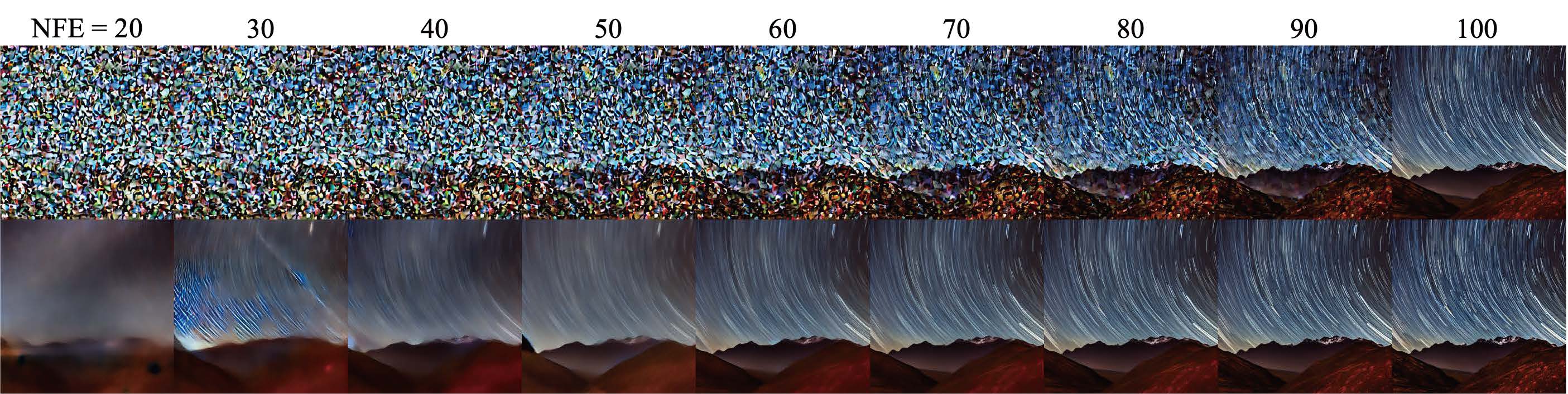}
        \caption{``Long-exposure night photography of a starry sky over mountains, with light trails"}
    \end{subfigure}
    \caption{Stable Diffusion (PLMS sampler with NFE $=100$).}
    \label{fig:sd_plms}
\end{figure}

\iffalse
\begin{figure}
    \centering
    \begin{subfigure}[b]{1\textwidth}
        \includegraphics[width=\textwidth]{pics/uncond/SD-corgi-dpm.jpg}
        \caption{``A Corgi lying on the beach wearing sunglasses"}
    \end{subfigure}
    \begin{subfigure}[b]{1\textwidth}
        \includegraphics[width=\textwidth]{pics/uncond/SD-coffee-dpm.jpg}
        \caption{``A cup of coffee with a stack of books behind"}
    \end{subfigure}
    \begin{subfigure}[b]{1\textwidth}
        \includegraphics[width=\textwidth]{pics/uncond/SD-man-dpm.jpg}
        \caption{``A handsome man wearing a suit, a bow tie and a sunglasses"}
    \end{subfigure}
    \begin{subfigure}[b]{1\textwidth}
        \includegraphics[width=\textwidth]{pics/uncond/SD-castal-dpm.jpg}
        \caption{``An ancient castle surrounded by magnificent mountains"}
    \end{subfigure}
    \begin{subfigure}[b]{1\textwidth}
        \includegraphics[width=\textwidth]{pics/uncond/SD-star-dpm.jpg}
        \caption{``Long-exposure night photography of a starry sky over mountains, with light trails"}
    \end{subfigure}
    \caption{Stable Diffusion (DPM-Solver-2 with $N=7$, NFE $=12$).}
    \label{fig:sd_dpm}
\end{figure}
\fi

\end{document}